\definecolor{mydarkblue}{rgb}{0,0.08,0.45}
\newsavebox{\imagebox}
\newtheorem{theorem}{Theorem}
\newtheorem{definition}{Definition}
\newtheorem{conjecture}{Conjecture}
\def\shownotes{1}  \ifnum\shownotes=1
\newcommand{\authnote}[2]{[ #2 --#1 ]}
\newcommand{\authnote}[2]{}
\title{
Distributional Generalization:\\
A New Kind of Generalization
}
\author{%
  Preetum Nakkiran\thanks{Co-first authors. Author contributions in Appendix~\ref{sec:contrib}.}\\
  Harvard University\\
  \texttt{preetum@cs.harvard.edu}
   \And
   Yamini Bansal\footnotemark[1]\\
  Harvard University\\
  \texttt{ybansal@g.harvard.edu}
}
\begin{document}
\maketitle

\begin{abstract}
We introduce a new notion of generalization--- Distributional Generalization---
which roughly states that outputs of a classifier at train and test time
are close \emph{as distributions}, as 
opposed to close in just their average error.
For example, if we mislabel 30\% of dogs as cats in the train set of CIFAR-10,
then a ResNet trained to interpolation will
in fact mislabel roughly 30\% of dogs as cats on the \emph{test set} as well, while leaving other classes unaffected. 
This behavior is not captured by classical generalization,
which would only consider the average error and not 
the distribution of errors over the input domain.
Our formal conjectures, which are much more general than this example, characterize the form
of distributional generalization that can be expected
in terms of problem parameters:
model architecture, training procedure, number of samples, and data distribution.
We give empirical evidence for these conjectures across
a variety of domains in machine learning,
including neural networks, kernel machines, and decision trees.
Our results thus advance our empirical understanding
of interpolating classifiers.

\end{abstract}

\newpage

\section{Introduction}
We begin with an experiment motivating the need for
a notion of generalization beyond test error.

\begin{experiment}
\label{exp:intro1}
Consider a binary classification version of CIFAR-10,
where CIFAR-10 images $x$ have binary labels \texttt{Animal/Object}.
Take 50K samples from this distribution as a train set, but apply the following
label noise: flip the label of cats to \texttt{Object} with probability 30\%.
Now train a WideResNet $f$ to 0 train error on this train set. How does the trained classifier behave on test samples?
Some potential options are:
\end{experiment}

\begin{enumerate}
\item The test error is uniformly small across all CIFAR-10 classes,
since there is only 3\% overall label noise in the train set.%

\item The test error is moderate, and ``spread'' across all animal classes.
After all, the classifier is not explicitly told what a cat or a dog is, just that they are all animals.

\item The test error is localized:
the classifier misclassifies roughly 30\% of test cats as ``objects'',
but all other types of animals are largely unaffected.
\end{enumerate}
In fact, reality is closest to option (3), for essentially
any good architecture trained to interpolation.
Figure \ref{fig:intro} shows the results of this experiment with a WideResNet \citep{zagoruyko2016wide}.
The left panel shows the joint density
of $(x, y)$ of inputs $x$ and labels $y \in \{\texttt{Object/Animal}\}$
on the train set.
Since the classifier $f$ is interpolating,
this joint distribution is identical to the
classifier's outputs $(x, f(x))$ on the
train set.
The right panel shows the joint density of
$(x, f(x))$
of the classifier's predictions on \emph{test inputs} $x$.

\begin{figure}[h]
    \centering
      \includegraphics[width=\textwidth]{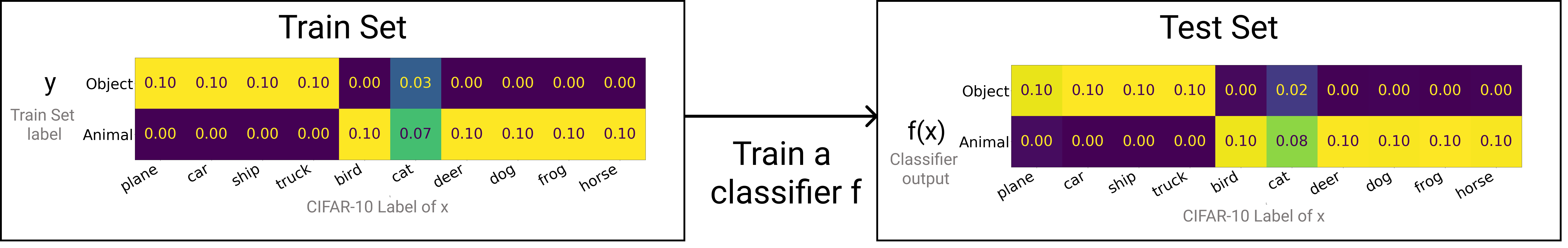}
    \caption{
    The setup and result of Experiment~\ref{exp:intro1}.
    The CIFAR-10 train set is labeled as either Animals or Objects, with label noise affecting only cats.
    A WideResNet-28-10 is then trained to 0 train error on this train set, and evaluated on the test set.
    The joint distribution of $(x, f(x))$ on the train set is close to $(x, f(x))$ on the test set.
    Full experimental details in Appendix \ref{app:intro-exp-1}.
    }
    \label{fig:intro}
\end{figure}

This experiment is interesting for several reasons.
First, the error is \emph{localized} to cats in test set as it was in the train set, even though no explicit cat labels were provided.
Second, the \emph{amount} of error on the cat class is close to the noise applied on the train set.
Thus, the behavior of the classifier on the train set \emph{generalizes} to the test set in a certain sense.
Third, the trained classifier does \emph{not} behave
close to the Bayes optimal classifier for the distribution\footnote{Nor do we expect it to approach the Bayes optimal one
when both data and model size tend to infinity together,
as long as the models remain interpolating.}.
However, the classifier does behave close to an optimal \emph{sampler}
from the conditional distribution, i.e.
$f(x) \sim p(y | x)$.

This behavior would not be captured by solely
considering the average test error --- it requires reasoning about the entire distribution of classifier outputs.
In our work, we show that this experiment is just one instance of a different type of generalization, which we call ``Distributional Generalization''.
We first describe the mathematical form of this generalization.
Then, through extensive experiments, we will show that this type of generalization occurs widely in existing machine learning methods on real datasets, including neural networks, kernel machines and decision trees.

\subsection{Distributional Generalization}

Supervised learning aims to learn a model that correctly classifies inputs $x \in \cX$ from a
given distribution $\cD$ into classes
$y \in \cY$.
We want a model with small \emph{test error} on this distribution.
In practice, we find such a classifier by minimizing
the \emph{train error} of a model on the train set.
This procedure is justified when we expect a small generalization gap:
the gap between the error on the train and test set.
That is, the trained model $f$ should have:
$\textrm{Error}_{\textrm{TrainSet}}(f)
\approx
\textrm{Error}_{\textrm{TestSet}}(f)$.
We now re-write this classical notion of generalization in a form better suited for our extension.\\
{\bf Classical Generalization:}
{\it
Let $f$ be a trained classifier. Then $f$ generalizes if:
\begin{align}
\label{eqn:classic}
\E_{\substack{
x \sim \TrainSet\\
\hat y \gets f(x)
}}
[
\1\{\hat{y} \neq y(x)\}
]
\approx
\E_{\substack{
x \sim \TestSet\\
\hat y \gets f(x)
}}
[
\1\{\hat{y} \neq y(x)\}
]
\end{align}
}
Above, $y(x)$ is the true class of $x$ and $\hat{y}$ is the predicted class.
The LHS of Equation~\ref{eqn:classic} is the train error of $f$,
and the RHS is the test error.
Crucially, both sides of Equation~\ref{eqn:classic}
are expectations of the same function 
($\Terr(x, \hat y) :=  \1\{\hat{y} \neq y(x)\}$)
under different distributions.
The LHS of Equation~\ref{eqn:classic} is the expectation of $\Terr$
under the ``Train Distribution'' $\Dtr$, which
is the distribution over $(x, \hat{y})$ given by sampling a train point $x$
along with its classifier-label $f(x)$.
Similarly, the RHS is under the ``Test Distribution'' $\Dte$, which is this same construction over the test set.
These two distributions are the central objects in our study,
and are defined formally in Section~\ref{sec:general}.
We can now introduce Distributional Generalization, which is a property of trained classifiers.
It is parameterized by a set of bounded functions (``tests''):
$\mathcal{T} \subseteq \{T: \cX \x \cY \to [0, 1]\}$.

{\bf Distributional Generalization:}
{\it 
Let $f$ be a trained classifier.
Then $f$ satisfies Distributional Generalization with respect to tests $\cT$ if:
\begin{align}
\forall T \in \mathcal{T}:
\quad
\E_{\substack{
x \sim \textrm{TrainSet}\\
\hat y \gets f(x)
}}
[
T(x, \hat{y})
]
\approx
\E_{\substack{
x \sim \textrm{TestSet}\\
\hat y \gets f(x)
}}
[
T(x, \hat{y})
]
\end{align}
}
We write this property as $\Dtr \approx^{\cT} \Dte$.
This states that the train and test distribution have
similar expectations for all functions in the family $\mathcal{T}$.
For the singleton set $\cT = \{\Terr\}$, this is equivalent to classical generalization, but it may hold for much larger sets $\cT$.
For example in Experiment \ref{exp:intro1}, the train and test distributions match with respect to the test function ``\emph{Fraction of true cats labeled as object}.'' 
In fact, we find that the family $\cT$ is so large in practice that it is best to think of Distributional Generalization
as stating that the distributions $\Dtr$ and $\Dte$
are close \emph{as distributions}.

This property becomes especially interesting 
for interpolating classifiers, 
which fit their train sets exactly.
Here, the Train Distribution $(x_i, f(x_i))$
is exactly equal\footnote{The formal definition of Train Distribution, in Section \ref{sec:general},
includes the randomness of sampling the train
set as well. We consider a fixed train set in the Introduction for sake of exposition.} to the original distribution $(x, y) \sim \cD$,
since $f(x_i) = y_i$ on the train set.
In this case, distributional generalization claims that the
output distribution $(x, f(x))$ of the model on test samples
is close to the \emph{true} distribution $(x, y)$.
The following conjecture specializes Distributional Generalization
to interpolating classifiers, and will be the main focus of our work.

{\bf Interpolating Indistinguishability Meta-Conjecture (informal):}
{\it 
For interpolating classifiers $f$, and a large family $\mathcal{T}$ of
test functions, the distributions:
\begin{align}
\label{eqn:metaconj}
\boxed{
(x, f(x))_{x \in \TestSet}
~\approx^{\mathcal{T}}~
(x, f(x))_{x \in \TrainSet}
~\equiv~
(x, y)_{x, y \sim \cD}
}
\end{align}
}

This is a ``meta-conjecture'', which becomes a concrete conjecture
once the family of tests $\mathcal{T}$ is specified.
One of the main contributions of our work
is formally stating two concrete instances of this conjecture---
specifying exactly the family of tests $\mathcal{T}$
and their dependence on problem parameters 
(the distribution, model family, training procedure, etc).
It captures behaviors far more general than Experiment 1,
and empirically applies 
across a variety of natural settings
in machine learning.

\subsection{Summary of Contributions}
We extend the classical framework of generalization
by introducing Distributional Generalization, in which
the train and test behavior of models are close \emph{as distributions}.
Informally, for trained classifiers $f$,
its outputs on the train set
$(x, f(x))_{x \in \TrainSet}$
are close in distribution to its outputs on the test set
$(x, f(x))_{x \in \TestSet}$,
where the form of this closeness depends on specifics of the model, training procedure, and distribution.
This notion is more fine-grained than classical generalization, since it
considers the entire distribution of model outputs instead of just the test error.

We initiate the study of Distributional Generalization
across various domains in machine learning.
For interpolating classifiers, we state two formal conjectures which predict the form of distributional closeness that can be expected for a given model and task:
\begin{enumerate}
    \item {\bf Feature Calibration Conjecture} (Section~\ref{sec:dist-features}):
    Interpolating classifiers, when trained on samples from a distribution, 
    will match this distribution up to all ``distinguishable features'' (Definition~\ref{def:dist-feature-L}).
    \item {\bf Agreement Conjecture} (Section~\ref{sec:agree}):
    For two interpolating classifiers of the same type,
    trained independently on the same distribution,
    their \emph{agreement probability} with each other on test samples
    roughly matches their \emph{test accuracy}.
\end{enumerate}
We perform a number of experiments surrounding these conjectures,
which reveal new behaviors of standard interpolating classifiers
(e.g. ResNets, MLPs, kernels, decision trees).
We prove our conjectures for 1-Nearest-Neighbors (Theorem~\ref{thm:Ltest}),
which suggests some form of ``locality'' as the underlying mechanism.
Finally, we discuss extending these results to non-interpolating methods in Section~\ref{sec:gg}.
Our experiments and conjectures shed new light on the structure of interpolating classifiers, which are extensively studied in recent years yet still poorly understood.

\section{Related Work}

Our work is inspired by the broader study of interpolating and overparameterized methods
in machine learning; a partial list of works in this theme includes \citet{zhang2016understanding,belkin2018overfitting, belkin2018understand, belkin2019reconciling, liang2018just, nakkiranDeep, mei2019generalization,schapire1998boosting, breiman1995reflections,ghorbani2019linearized,hastie2019surprises,bartlett2020benign,advani2017high,geiger2019jamming,gerace2020generalisation,chizat2020implicit,goldt2019generalisation, arora2019fine,allen2019learning,neyshabur2018towards,dziugaite2017computing,muthukumar2020harmless,neal2018modern,ji2019polylogarithmic,soudry2018implicit}.

{\bf Interpolating Methods.}
Many of the best-performing techniques on high-dimensional tasks are interpolating methods,
which fit their train samples to 0 train error.
This includes neural networks and kernels on images \citep{he2016deep, shankar2020neural}, and random forests on tabular data \citep{fernandez2014we}.
Interpolating methods have been extensively studied both recently
and in the past, since we do not 
theoretically understand their practical success
\citep{schapire1998boosting, schapire1999theoretical, breiman1995reflections, zhang2016understanding,belkin2018overfitting, belkin2018understand, belkin2019reconciling, liang2018just, mei2019generalization, hastie2019surprises, nakkiranDeep}.
In particular, much of the classical work in statistical learning theory 
(uniform convergence, VC-dimension, Rademacher complexity, regularization, stability)
fails to explain the success of
interpolating methods \citep{zhang2016understanding, belkin2018overfitting, belkin2018understand, nagarajan2019uniform}.
The few techniques which do apply to interpolating methods (e.g. margin theory~\citep{schapire1998boosting})
remain vacuous on modern neural networks and kernels.

\paragraph{Learning Substructures.}
The observation that powerful networks can pick up on
finer aspects of the distribution than their training labels reveal also exists in various forms in the literature.
For example, \citet{gilboa2019wider} found 
that standard CIFAR-10 networks
tend to cluster left-facing and right-facing horses separately
in activation space, when visualized via activation atlases~\citep{carter2019activation}.
Such fine-structural aspects of distributions can also be seen at the level of individual neurons (e.g. \citet{cammarata2020thread:,radford2017learning,olah2018building,zhou2014object,bau2020understanding}).

{\bf Decision Trees.}
In a similar vein to our work, \citet{wyner2017explaining, olson2018making} investigate decision trees,
and show that random forests are equivalent to 
a Nadaraya–Watson smoother \cite{nadaraya1964estimating, watson1964smooth} with a certain smoothing kernel.
Decision trees~\citep{breiman1984classification} are often intuitively thought of as 
``adaptive nearest-neighbors,''
since they are explicitly a spatial-partitioning method
\citep{hastie2009elements}.
Thus, it may not be surprising that decision trees behave similarly
to 1-Nearest-Neighbors.
\citet{wyner2017explaining, olson2018making} took steps towards characterizing and understanding this behavior --
in particular, \citet{olson2018making}
defines an equivalent smoothing kernel corresponding to a random forest,
and empirically investigates the quality of the conditional density estimate.
Our work presents a formal characterization of the quality of this conditional density estimate
(Conjecture~\ref{conj:approx}), which is a novel characterization even for decision trees, as far as we know.

{\bf Kernel Smoothing.}
The term kernel regression is sometimes used in the literature to
refer to kernel \emph{smoothers}, 
such as the Nadaraya–Watson kernel smoother \citep{nadaraya1964estimating, watson1964smooth}.
But in this work we use the term ``kernel regression'' to refer only to
regression in a Reproducing Kernel Hilbert Space, as described in the
experimental details.

{\bf Label Noise.}
Our conjectures also describe the behavior of neural networks under label noise,
which has been empirically and theoretically studied in the past,
though not formally characterized before
\citep{zhang2016understanding, belkin2018understand, rolnick2017deep, natarajan2013learning, thulasidasan2019combating, ziyin2020learning, chatterji2020finite}.
Prior works have noticed that vanilla interpolating networks are sensitive to label noise
(e.g. Figure 1 in ~\citet{zhang2016understanding}, and~\citet{belkin2018understand}),
and there are many works on making networks more robust to label noise
via modifications to the training procedure or objective
\citep{rolnick2017deep, natarajan2013learning, thulasidasan2019combating, ziyin2020learning}.
In contrast, we claim this sensitivity to label noise is not necessarily
a problem to be fixed,
but rather a consequence of a stronger property: distributional generalization.

{\bf Conditional Density Estimation.}
Our density calibration property is similar to
the guarantees of a conditional density estimator.
More specifically, Conjecture~\ref{conj:approx}
states that an interpolating classifier \emph{samples}
from a distribution approximating the conditional density
of $p(y | x)$ in a certain sense.
Conditional density estimation has been well-studied
in classical nonparametric statistics (e.g. the Nadaraya–Watson kernel smoother \citep{nadaraya1964estimating, watson1964smooth}).
However, these classical methods behave poorly
in high-dimensions, both in theory and in practice.
There are some attempts to extend these classical methods
to modern high-dimentional problems 
via augmenting estimators with neural networks
(e.g. \citet{rothfuss2019conditional}).
Random forests have also been known to exhibit
properties similar to conditional density estimators.
This has been formalized in various ways,
often only with asymptotic guarantees \citep{meinshausen2006quantile,pospisil2018rfcde, athey2019generalized}.

No prior work that we are aware of
attempts to characterize the quality of the resulting density estimate
via testable assumptions, as we do with our formulation of Conjecture~\ref{conj:approx}.
Finally, our motivation is not to design good conditional density estimators, but rather to study properties of interpolating classifiers --- which we find happen to share properties of density estimators.

{\bf Uncertainty and Calibration.}
The Agreement Property (Conjecture~\ref{claim:agree})
bears some resemblance to uncertainty
estimation (e.g. \citet{lakshminarayanan2017simple}),
since it estimates the the test error of a classifier using
an ensemble of 2 models trained on disjoint train sets.
However, there are important caveats: (1) Our Agreement Property
only holds on-distribution, and degrades on off-distribution inputs.
Thus, it is not as helpful to estimate out-of-distribution errors.
(2) It only gives an estimate of the average test error, and does not imply pointwise calibration estimates for each sample.

Feature Calibration (Conjecture~\ref{conj:approx}) is also related to the concepts of calibration
and multicalibration~\citep{guo2017calibration, niculescu2005predicting, hebert2018multicalibration}.
In our framework, calibration is implied by Feature Calibration for a specific set of partitions $L$
(determined by level sets of the classifier's confidence).
However, we are not concerned with a specific set of partitions
(or ``subgroups'' in the algorithmic fairness literature)
but we generally aim to characterize for which partitions Feature Calibration holds.
Moreover, we consider only hard-classification decisions and not confidences,
and we study only standard learning algorithms which are not given any distinguished set of subgroups/partitions in advance.
Our notion of distributional generalization is also related
to the notion of ``distributional subgroup overfitting'' introduced
recently by \citet{yaghini2019disparate} to study algorithmic fairness.
This can be seen as studying distributional generalization for a specific family of tests
(determined by distinguished subgroups in the population).

{\bf Locality and Manifold Learning.}
Our intuition for the behaviors in this work is that they arise due to some form of ``locality'' of the trained classifiers, in an appropriate space.
This intuition is present in various forms in the literature,
for example: the so-called called ``manifold hypothesis,''
that natural data lie on a low-dimensional manifold (e.g. \citet{narayanan2010sample, sharma2020neural}),
as well as works on local stiffness of the loss landscape \citep{fort2019stiffness},
and works showing that overparameterized neural networks
can learn hidden low-dimensional structure in high-dimensional settings
\citep{gerace2020generalisation, bach2017breaking, chizat2020implicit}.
It is open to more formally understand connections between our work and the above.

\section{Preliminaries}

{\bf Notation.}
We consider joint distributions $\cD$
on $x \in \cX$ and discrete $y \in \cY = [k]$.
Let $\cD^n$ denote $n$ iid samples from $\cD$ and $S = \{(x_i, y_i)\}$ denote a train set.
Let $\cF$ denote the training procedure of a classifier family (including architecture and training algorithm),
and let $f \gets \Train_{\cF}(S)$ denote training a classifier $f$ on train set $S$.
We consider classifiers which output hard decisions $f: \cX \to \cY$.
Let $\NN_{S}(x) = x_i$ denote the nearest neighbor to $x$ in train-set $S$,
with respect to a distance metric $d$.
Our theorems will apply to any distance metric, and so we leave this unspecified.
Let $\NNf_S(x)$ denote the nearest neighbor estimator itself,
that is, $\NNf_S(x) := y_i$ where $x_i = \NN_S(x)$.

{\bf Experimental Setup.} Full experimental details are provided in Appendix~\ref{app:experiment}.
Briefly, we train all classifiers to interpolation unless otherwise specified--- that is, to 0 train error.
We use standard-practice training techniques for all methods with minor hyperparameter modifications for training to interpolation.
In all experiments, we consider only the hard-classification decisions, and not e.g. the softmax probabilities.
Neural networks (MLPs and ResNets~\citep{he2016deep}) are trained with Stochastic Gradient Descent.
Interpolating decision trees are trained using the growth rule from Random Forests~\citep{breiman2001random},
growing until all leafs have a single sample.
For kernel classification, we consider both kernel regression on one-hot labels and kernel SVM,
with small or $0$ values of regularization (which is often optimal, as in~\citet{shankar2020neural}).
Section~\ref{sec:gg} considers non-interpolating versions of the above methods
(via early-stopping or regularization).

\subsection{Distributional Closeness}
\label{sec:dist-closeness}
For two distributions $P, Q$ over $\cX \x \cY$,
let $P \approx_\eps Q$
denote $\eps$-closeness in total variation distance; that is,
$TV(P, Q) = \frac{1}{2}||P-Q||_1 \leq \eps$.
Recall that TV-distance has an equivalent variational characterization:
For distributions $P, Q$ over $\cX \x \cY$, we have
\[
TV(P, Q) =
\sup_{T: \cX \x \cY \to [0, 1]}
\left|\E_{(x, y) \sim P} [T(x, y)]
-
\E_{(x, y) \sim Q} [T(x, y)]\right|
\]
A ``test'' (or ``distinguisher'') here is a function
$T: \mathcal{X} \x \mathcal{Y} \to [0, 1]$
which accepts a sample from either distribution,
and is intended to classify the sample as either
from distribution $P$ or $Q$.
TV distance is then the advantage of the best distinguisher among all bounded tests.
More generally, for any family
$\cT \subseteq \{ T: \cX \x \cY \to [0, 1]\}$ of tests,
we say distributions $P$ and $Q$ are ``$\eps$-indistinguishable up to $\cT$-tests'' if they are
close with respect to all tests in class $\cT$.
That is, 
\begin{align}
\label{eqn:test}
P \approx_\eps^{\cT} Q
\iff 
\quad
\sup_{T \in \cT}
\left|
\E_{(x, y) \sim P}
[T(x, y)]
-
\E_{(x, y) \sim Q}
[T(x, y)]
\right|
\leq \eps
\end{align}
This notion of closeness is also known as an Integral Probability Metric~\citep{muller1997integral}.
Throughout this work, we will define specific families of distinguishers $\cT$
to characterize the sense in which the output distribution $(x, f(x))$
of classifiers is close to their input distribution $(x, y) \sim \cD$.
When we write $P \approx Q$, we are making an informal claim
in which we mean $P \approx_\eps Q$ for some small but unspecified $\eps$.

\subsection{Framework for Indistinguishability}
\label{sec:general}

Here we setup the formal objects studied in the remainder of the paper.
This formal description of Train and Test distributions differs slightly
from the informal description
in the Introduction, because we want to study the generalization properties
of an entire end-to-end training procedure ($\Train_\cF$), and not just properties
of a fixed classifier ($f$).
We thus consider the following three distributions over $\cX \x \cY$.

\noindent
\mbox{}%
\hfill%
\fbox{
\begin{minipage}[t]{.3\textwidth}
{
\underline{{\bf Source $\cD$:}} \hspace{0.1in}  {\bf $(x, y)$} \\
}
\noindent
where $x, y \sim \cD$
\vspace{12pt}
\end{minipage}}%
\hfill
\fbox{
\begin{minipage}[t]{.3\textwidth}
\underline{{\bf Train $\Dtr$:}}
{ \bf $(x_{\mathrm{tr}}, f(x_{\mathrm{tr}}))$} \\
$S \sim \cD^n$, $f \gets \Train_{\cF}(S)$,\\
$x_{\mathrm{tr}}, y_{\mathrm{tr}} \sim S$
\end{minipage}}%
\hfill
\fbox{\begin{minipage}[t]{.3\textwidth}
{
\underline{{\bf Test $\Dte$}}
$(x, f(x))$ \\
}
$S \sim \cD^n$, $f \gets \Train_{\cF}(S)$,\\
$x, y \sim \cD$
\end{minipage}}%
\hfill%
\mbox{}%

The {\bf Source Distribution $\cD$} is simply the
original distribution.
To sample from the {\bf Train Distribution $\Dtr$},
we first sample a train set $S \sim \cD^n$,
train a classifier $f$ on it,
then output $(x_{\mathrm{tr}}, f(x_{\mathrm{tr}}))$
for a random \emph{train point} $x_{\mathrm{tr}}$.
That is, $\Dtr$ is the distribution of input and outputs of a trained classifier $f$
on its train set.
To sample from the {\bf Test Distribution $\Dte$},
do we this same procedure,
but output $(x, f(x))$ for a random \emph{test point} $x$.
That is, the $\Dte$ is the distribution of input and outputs of a trained classifier $f$
at test time.
The only difference between the Train Distribution
and Test Distribution is that the point $x$ is sampled
from the train set or the test set, respectively.\footnote{
Technically, these definitions require training a fresh classifier for each sample,
using independent train sets. We use this definition because we believe it is natural,
although for practical reasons most of our experiments train a single classifier $f$
and evaluate it on the entire train/test set.
}

For interpolating classifiers,
$f(x_{\mathrm{tr}}) = y_{\mathrm{tr}}$ on the train set,
and so the Source and Train distributions are equivalent:
\begin{equation}
\textrm{For interpolating classifiers $f$:}
\quad
\cD \equiv \Dtr
\end{equation}

Our general thesis is that the Train and Test Distributions
are indistinguishable under a variety of test families $\cT$.
Formally, we argue that 
for certain families of tests $\cT$
and interpolating classifiers $\cF$,
\begin{equation}
\textrm{Indistinguishability Conjecture:}
\quad
\boxed{
\cD \equiv \Dtr \approx_\eps^{\cT} \Dte
}
\end{equation}

Sections~\ref{sec:dist-features} and~\ref{sec:agree}
give specific families of tests $\cT$ for which 
these distributions are indistinguishable.
The quality of this distributional closeness will depend on details of the classifier family and distribution,
in ways which we will specify.

\section{Feature Calibration}
\label{sec:dist-features}

The distributional closeness of Experiment 1 is subtle,
and depends on the classifier architecture, distribution, and training method. For example, Experiment 1 does not hold if we use a fully-connected network (MLP) instead of a ResNet,
or if we early-stop the ResNet instead of training to interpolation (see Appendix \ref{app:intro-exp-1}). Both these scenarios fail in different ways:
An MLP cannot properly distinguish cats from dogs even when trained on real CIFAR-10 labels,
and so (informally) it has no hope of behaving differently on cats in the setting of Experiment 1.
On the other hand, an early-stopped ResNet for Experiment 1 does not label 30\% of cats as objects on the \emph{train set},
since it does not interpolate, and thus has no hope of reproducing this behavior on the test set.

We now characterize these behaviors, and their dependency on problem parameters,
via a formal conjecture.
This conjecture characterizes a family of tests $\mathcal{T}$ for which the output distribution of a classifier $(x, f(x)) \sim \Dte$
is ``close'' to the source distribution $(x, y) \sim \cD$.
At a high level, we argue that the distributions $\Dte$ and $\cD$
are statistically close if we first ``coarsen'' the domain of $x$ by some
labelling $L: \cX \to [M]$. 
That is, for certain partitions $L$, the following
distributions are statistically close:

\vspace{-5pt}
\begin{align*}
(L(x), f(x)) &\approx_{\eps} (L(x), y)
\end{align*}

We first explain this conjecture (``Feature Calibration'') via a toy example, and then we state the conjecture
formally in Section~\ref{sec:formal}.

\subsection{Toy Example}
\label{sec:toy}
\newcommand{\fpart}[1]{\hyperref[fig:partition]{\ref*{fig:partition}#1}}

Consider a distribution on points $x \in \R^2$ and binary labels $y$, as visualized in
Figure~\fpart{A}.
This distribution consists of four clusters 
\texttt{\{Truck, Ship, Cat, Dog\}} which are labeled either
\texttt{Object} or \texttt{Animal}, depicted in red and blue respectively.
One of these clusters --- the \texttt{Cat} cluster --- is mislabeled as class \texttt{Object} with probability 30\%.
Now suppose we have an interpolating classifier $f$ for this distribution, obtained in some way,
and we wish to quantify the closeness between distributions $(x, y) \approx (x, f(x))$ on the test set.
Figure~\fpart{A} shows test points $x$ along with their test labels $y$ -- these
are samples from the source distribution $\cD$.
Figure~\fpart{B} shows these same test points, but labeled according to $f(x)$ --
these are samples from the test distribution $\Dte$.
The shaded red/blue regions in Figure~\fpart{B} shows the decision boundary of the classifier $f$.

These two distributions do not match exactly -- there are some test points where the true label $y$ and classifier output $f(x)$ disagree.
However, if we ``coarsen'' the domain into the four clusters \texttt{\{Truck, Ship, Cat, Dog\}},
then the marginal distribution of labels within each cluster matches between the classifier outputs $f(x)$
and true labels $y$.
In particular, the fraction of \texttt{Cat} points that are labeled \texttt{Object} is similar between
Figures~\fpart{C} and \fpart{D}.
This is equivalent to saying that the joint distributions $(L(x), y)$ and $(L(x), f(x))$ 
are statistically close, for the partition $L: \cX \to \texttt{\{Truck, Ship, Cat, Dog\}}$.
That is, if we can only see points $x$ through their cluster-label $L(x)$,
then the distributions $(x, y)$ and $(x, f(x))$ will appear close.
These two ``coarsened'' joint distributions are also what we plotted in Figure~\ref{fig:intro}
from the Introduction, where we considered the partition $L(x) := \texttt{CIFAR\_Class}(x)$, the CIFAR-10 class of $x$.

\begin{figure}[t]
\centering
\includegraphics[width=\linewidth]{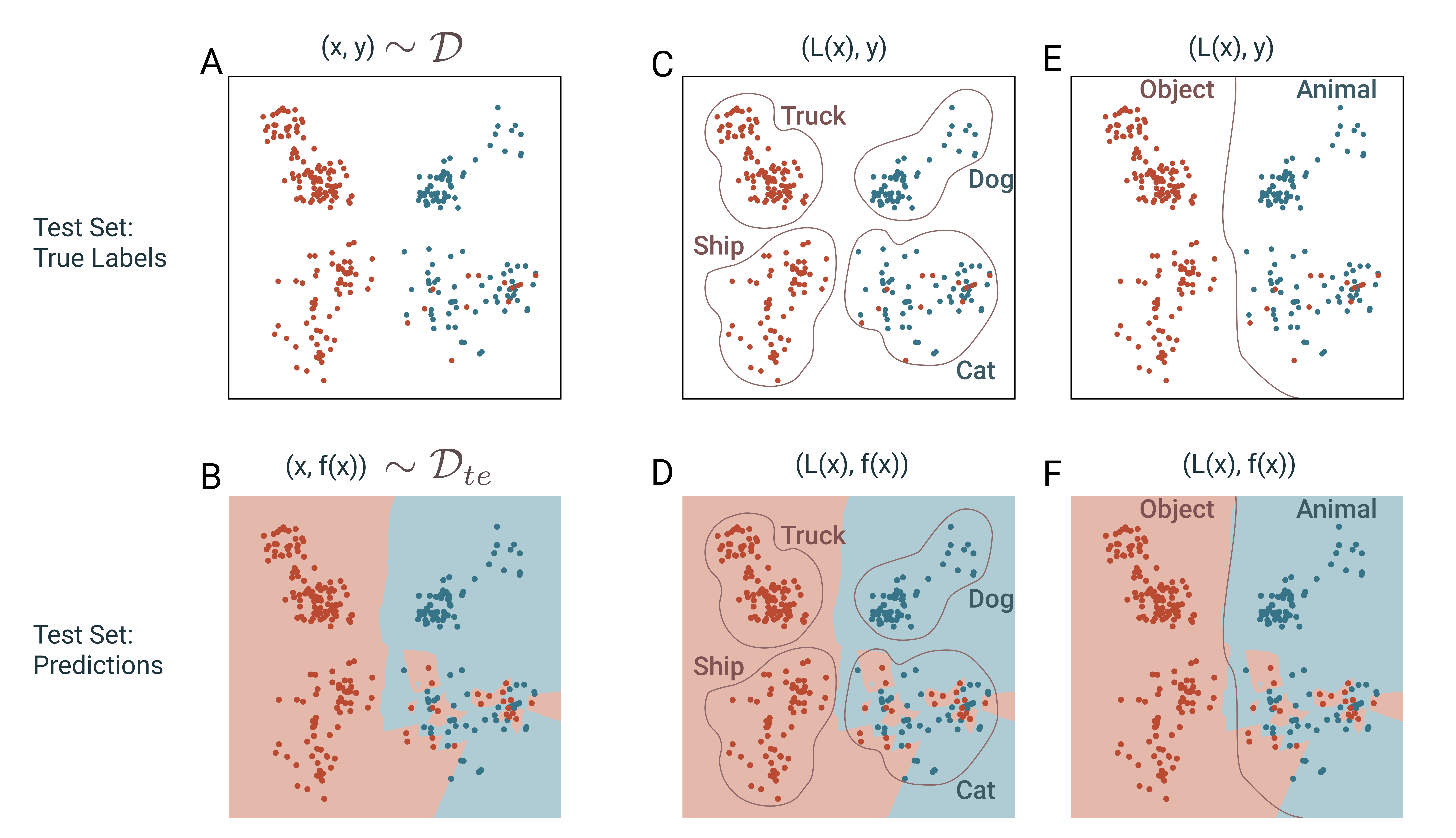}
\caption{{\bf Toy Example: Feature Calibration.}
Schematic of the distributions discussed in Section~\ref{sec:toy},
showing a toy example of the Feature Calibration conjecture for several distinguishable features $L$.
}
\label{fig:partition}
\end{figure}

There may be many such partitions $L$ for which the above distributional closeness holds.
For example, the coarser partition $L_2: \cX \to \texttt{\{Animal, Object\}}$ also
works in our example, as shown in Figures~\fpart{E} and \fpart{F}.
However, clearly not all partitions $L$ will satisfy closeness, since the distributions themselves
are not statistically close. 
For a finer partition which splits the clusters into smaller pieces (e.g. based on the age of the dog),
the distributions may not match unless we use very powerful classifiers or many train samples.
Intuitively, allowable partitions are those which can be
learnt from samples. To formalize the set of allowable partitions $L$,
we define a \emph{distinguishable feature}:
a partition of the domain $\cX$ that is learnable for a given family of models. 
For example, in Experiment \ref{exp:intro1}, the
partition into CIFAR-10 classes would be a distinguishable feature for ResNets, but not for MLPs. We now state the formal definition of a distinguishable partition and the formal conjecture.

\subsection{Formal Definitions}
\label{sec:formal}
We first define a \emph{distinguishable feature}:
a labeling of the domain $\cX$ that is learnable for a given family of models. 
This definition depends on the family of models $\cF$, the distribution $\cD$,
and the number of train samples $n$.
\begin{definition}[$(\eps, \cF, \cD, n)$-Distinguishable Feature]
\label{def:dist-feature-L}
For a distribution $\cD$ over $\cX \x \cY$,
number of samples $n$,
family of models $\cF$, and small $\eps \geq 0$,
an \emph{$(\eps, \cF, \cD, n)$-distinguishable feature}
is a partition $L: \cX \to [M]$
of the domain $\cX$ into $M$ parts, such that
training a model from $\cF$
on $n$ samples labeled by $L$
works to classify $L$ with high test accuracy.

Precisely, $L$ is a distinguishable feature if the following procedure succeeds with probability at least $1-\eps$:
\begin{enumerate}
    \item Sample a train set $S \gets \{(x_i, L(x_i))\}$ of $n$ samples $(x_i, y_i) \sim \cD$,
    labeled by the partition $L$.
    \item Train a classifier $f \gets Train_{\cF}(S)$.
    \item Sample a test point $x \sim \cD$, and check that $f$ correctly classifies its partition:
    Output success iff $f(x) = L(x)$.
\end{enumerate}
That is, $L$ is a $\epsL$-distinguishable feature if:
$$
\Pr_{\substack{
S = \{(x_i, L(x_i)\}_{x_1, \dots, x_n \sim \cD}\\
f \gets \Train_{\cF}(S)\\
x \sim \cD
}}[
f(x) = L(x)
]
\geq 1-\eps
$$
\end{definition}
To recap, this definition is meant to
capture a labeling of the domain $\cX$ that is learnable for a given family of models and training procedure.
Note that this definition only depends on the marginal distribution of $\cD$ on $x$,
and does not depend on the label distribution $p_{\cD}(y | x)$.
The definition of distinguishable feature must depend on the classifier family $\cF$ and number of samples $n$,
since a more powerful classifier can distinguish more features.
Note that there could be many distinguishable features for a given setting $\epsL$ ---
including features not implied by the class label,
such as the presence of grass in a CIFAR-10 image.

Our main conjecture in this section is that
the test distribution $(x, f(x)) \sim \Dte$
is statistically close to the source distribution $(x, y) \sim \cD$
when the domain is ``coarsened'' by a distinguishable feature.
That is, the distributions $(L(x), f(x))$ and $(L(x), y)$
are \emph{statistically} close for all distinguishable features $L$.
Formally:

\begin{conjecture}[Feature Calibration]
\label{conj:approx}
For all natural distributions $\cD$,
number of samples $n$,
family of interpolating models $\cF$,
and $\eps \geq 0$,
the following distributions are statistically close for all $\epsL$-distinguishable features $L$:
\begin{align}
\underset{
\substack{
f \gets \Train_{\cF}(\cD^n)\\
x, y \sim \cD
}
}{
(L(x), f(x))
}
\quad\approx_\eps\quad
\underset{
x, y \sim \cD
}{
(L(x), y)
}
\end{align}
\end{conjecture}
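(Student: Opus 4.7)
The plan is to first make the standard reduction that the total variation between $(L(x),f(x))$ and $(L(x),y)$ decomposes as a weighted sum over $m \in [M]$ of the conditional TV-distance between $f(x) \mid L(x)=m$ and $y \mid L(x)=m$. It therefore suffices to show, for each $m$, that the classifier's output distribution on the region $L^{-1}(m)$ is close to the true label distribution there---equivalently, that on each $L$-part $f$ effectively ``samples'' from the true $p(y \mid L(x)=m)$.

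Since the statement is a conjecture for general interpolating $\cF$, I would first try to prove it in the concrete case of $1$-Nearest-Neighbors (the case the paper isolates in Theorem~\ref{thm:Ltest}), since this is where the mechanism is cleanest. The strategy combines two ingredients: (i) the distinguishability hypothesis on $L$, and (ii) an exchangeability argument over training-plus-test samples. For (i), instantiating Definition~\ref{def:dist-feature-L} with $\cF = 1\text{-NN}$ gives $\Pr[L(\NN_S(x)) \neq L(x)] \leq \eps$, so on the good event the test point $x$ and its neighbor $x_j := \NN_S(x)$ lie in the same region $L^{-1}(m)$, and the classifier's output $f(x)=y_j$ was drawn (when $S$ was produced) from $p(y \mid x_j)$. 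Handling the bad event costs $O(\eps)$ in TV, so the core question reduces to showing $\E_{X,x}[\, p(c \mid \NN_X(x))\, \mathbbm{1}\{L(x)=m\}\,] \approx \E_x[\, p(c \mid x)\, \mathbbm{1}\{L(x)=m\}\,]$ for each $m,c$.

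For the symmetry argument in (ii), I would treat the $n+1$ iid draws $(x, x_1, \dots, x_n) \sim \cD$ on equal footing, restrict to those falling in $L^{-1}(m)$ (which are iid from $\cD_m := \cD \mid L(x)=m$), and study the directed nearest-neighbor graph they induce on this sub-sample. Writing $J(i)$ for the NN of $x_i$ among the others, a double-counting identity yields $\E[\, p(c \mid x_{J(0)})\,] = \tfrac{1}{n+1}\E\bigl[\sum_j \mathrm{deg}^-(j)\, p(c \mid x_j)\bigr]$, where $\mathrm{deg}^-(j)$ is the in-degree in the NN graph. Using the classical Euclidean fact that NN-graph in-degrees are bounded by a dimension-dependent constant (i.e., a point cannot be the nearest neighbor of too many others), one can trade the weighted average for the unweighted one---which equals $p_m(c)$ by the iid-ness within region $m$---with controlled error.

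The hardest part of this plan is precisely the reweighting step: controlling the correlation between $\mathrm{deg}^-(j)$ and the conditional density $p(c \mid x_j)$ cleanly, and doing so using only the distinguishability hypothesis without qualitative geometric assumptions on $\cD$. For the full Conjecture~\ref{conj:approx} beyond $1$-NN the obstacle is conceptually more serious and, I suspect, is why the statement is conjectural for general $\cF$: there is no directly analogous NN-graph whose symmetry one can exploit, so one must identify a ``locality'' or ``implicit averaging'' property of the training procedure (ResNets, kernel ridgeless regression, decision trees) that plays the role of the $1$-NN exchangeability argument---and it is this mechanism, rather than the reductions in paragraphs one and two, that seems to be the genuinely missing piece.
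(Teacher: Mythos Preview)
Your high-level framing is right: the statement is a conjecture for general $\cF$, and the paper only proves it for $1$-Nearest-Neighbors (Theorem~\ref{thm:Ltest}). Your step (i) --- using $(\eps,\NN,\cD,n)$-distinguishability to swap $L(x)$ for $L(\hat x_i)$ where $\hat x_i = \NN_S(x)$, at TV cost $\eps$ --- is exactly what the paper does. Your last paragraph, identifying ``locality'' as the missing mechanism for general $\cF$, also matches the paper's own discussion.

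Where you diverge is step (ii), and here the paper's argument is both simpler and structurally different. The paper does \emph{not} attempt any NN-graph in-degree or exchangeability argument. Instead it \emph{assumes} as a hypothesis a regularity condition: the marginal of $\NN_S(x)$ (over $S\sim\cD^n$, $x\sim\cD$) is $\delta$-close in TV to a fresh draw from $\cD$. Once $L(x)$ has been swapped for $L(\hat x_i)$, the pair $(\hat y_i, L(\hat x_i))$ depends on the randomness only through the training pair $(\hat x_i,\hat y_i)$; the regularity assumption says this pair is $\delta$-close to a fresh $(x,y)\sim\cD$, and the proof is over. No decomposition over $m\in[M]$, no conditional TVs, no in-degree bounds --- three lines total, with final error $\eps+\delta$.

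Your route is more ambitious: you are effectively trying to \emph{prove} a region-wise version of the regularity condition rather than assume it. That buys you something (you would not need the $\delta$-hypothesis), but it costs you in two places you already partly flagged. First, the bounded in-degree of NN graphs is a Euclidean/finite-doubling-dimension fact, whereas the paper's theorem is stated for arbitrary metrics; your argument would not go through at that generality. Second, the step you call ``hardest'' --- decorrelating $\mathrm{deg}^-(j)$ from $p(c\mid x_j)$ --- is genuinely hard without smoothness assumptions on $\cD$, and the paper deliberately avoids any such analytic assumptions by pushing the difficulty into the regularity hypothesis. So your plan is not wrong, but it is attacking a strictly harder statement than the one the paper actually proves.
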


Notably, this holds \emph{for all} distinguishable features $L$,
and it holds ``automatically'' -- we simply train a classifier, without specifying any particular partition.
The statistical closeness predicted is within $\eps$, which is determined by the $\eps$-distinguishability
of $L$ (we usually think of $\eps$ as small).
As a trivial instance of the conjecture,
suppose we have a distribution with deterministic labels,
and consider the $\eps$-distinguishable feature $L(x) := y(x)$, i.e. the label itself.
The $\eps$ here is then simply the test error of $f$,
and Conjecture~\ref{conj:approx} is true by definition.
The formal statements of Definition~\ref{def:dist-feature-L} and Conjecture~\ref{conj:approx}
may seem somewhat arbitrary, involving many quantifiers over $\epsL$.
However, we believe these statements are natural.
To support this, in Section~\ref{sec:1nn} we prove that Conjecture~\ref{conj:approx} is formally
true as stated for 1-Nearest-Neighbor classifiers.

{\bf Connection to Indistinguishability.}
Conjecture~\ref{conj:approx} can be equivalently phrased as an instantiation of our general
Indistinguishably Conjecture:
the source distribution $\cD$ and 
test distribution $\Dte$ are
``indistinguishable up to $L$-tests''.
That is, Conjecture~\ref{conj:approx}
is equivalent to the statement 
\begin{align}
\Dte \approx_\eps^{\cL} \cD
\end{align}
where $\cL$ is the family of all tests which depend on $x$
only via a distinguishable feature $L$. That is,
$
\cL := \{(x, y) \mapsto T(L(x), y) :
(\eps, \cF, \cD, n)\text{-distinguishable feature }L
\text{ and } T: [M] \x \cY \to [0, 1] \}
$. In other words, $\Dte$ is indistinguishable from $\cD$
to any distinguisher that only sees the input $x$ via a distinguishable feature $L(x)$.

\subsection{Experiments}
We now empirically validate our conjecture in a variety of settings in machine learning,
including neural networks, kernel machines, and decision trees.
To do so, we begin by considering the simplest possible distinguishable feature,
and progressively consider more complex ones. 
Each of the experimental settings below highlights a different aspect of interpolating classifiers,
which may be of independent theoretical or practical interest.
We summarize the experiments here; detailed descriptions are provided in Appendix~\ref{app:density}.

\textbf{Constant Partition:} Consider the trivially-distinguishable \emph{constant} feature $L(x) = 0$.
Then, Conjecture~\ref{conj:approx} states that the marginal distribution
of class labels for any interpolating classifier $f(x)$
is close to the true marginals $p(y)$.
That is, irrespective of the classifier's test accuracy, it outputs the ``right'' proportion of class labels on the test set, even when there is strong class imbalance.

To show this, we construct a dataset based on CIFAR-10 that has class imbalance. For class $k \in \{0...9\}$, sample $(k+1)\times 500$ images from that class. This will give us a dataset where classes will have marginal distribution $p(y = \ell) \propto \ell+1$ for classes $\ell \in [10]$, as shown in Figure \ref{fig:constant-L}. We do this both for the training set and the test set, to keep the distribution $\cD$ fixed.
We then train a variety of classifiers (MLPs, Kernels, ResNets) to interpolation on this dataset,
which have varying levels of test errors (9-41\%).
The class balance of classifier outputs on the (rebalanced) test set
is then close to the class balance on the train set, even for poorly generalizing classifiers. Full experimental details and results are described in Appendix \ref{app:density}.
Note that a 1-nearest neighbors classifier would have this property.

\begin{figure}[t]
    \includegraphics[width=\linewidth]{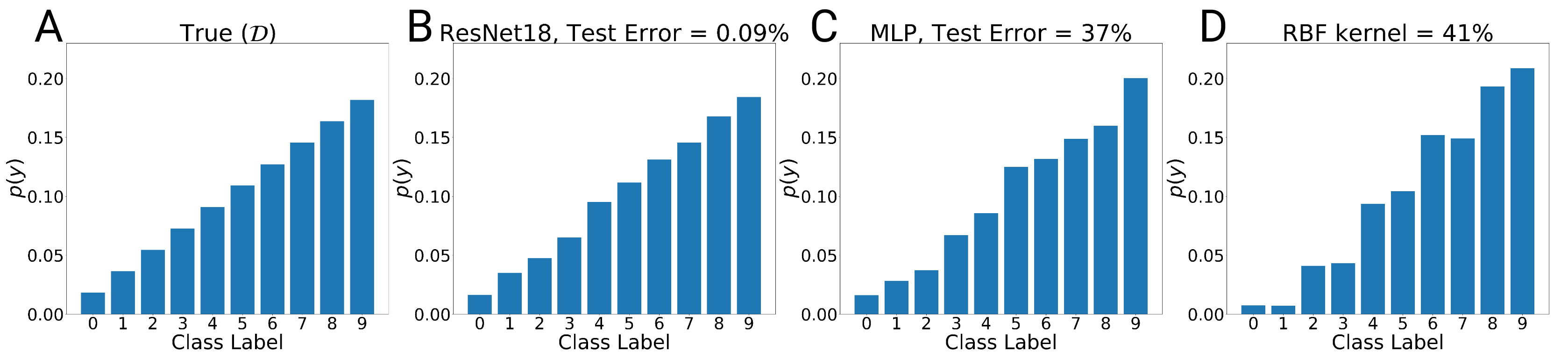}
    \centering
    \caption{{\bf Feature Calibration for Constant Partition $L$: }
     The CIFAR-10 train and test sets are class rebalanced according to (A).
     Interpolating classifiers are trained on the train set,
     and we plot the class balance of their outputs on the test set.
     This roughly matches the class balance of the train set, even for poorly-generalizing classifiers.}
    \label{fig:constant-L}
\end{figure}

{\bf Class Partition:} We now consider settings (datasets and models)
where the original class labels are a distinguishable feature.
For instance, the CIFAR-10 classes are distinguishable by ResNets,
and MNIST classes are distinguishable by the RBF kernel. Since the conjecture holds for any arbitrary label distribution $p(y|x)$, we consider many such label distributions and show that, for instance, the joint distributions $(\textrm{Class}(x), y)$ and $(\textrm{Class}(x), f(x))$ are close. This includes the setting of Experiments 1 and 2 from the Introduction. 

\begin{figure}[p]
    \includegraphics[width=\linewidth]{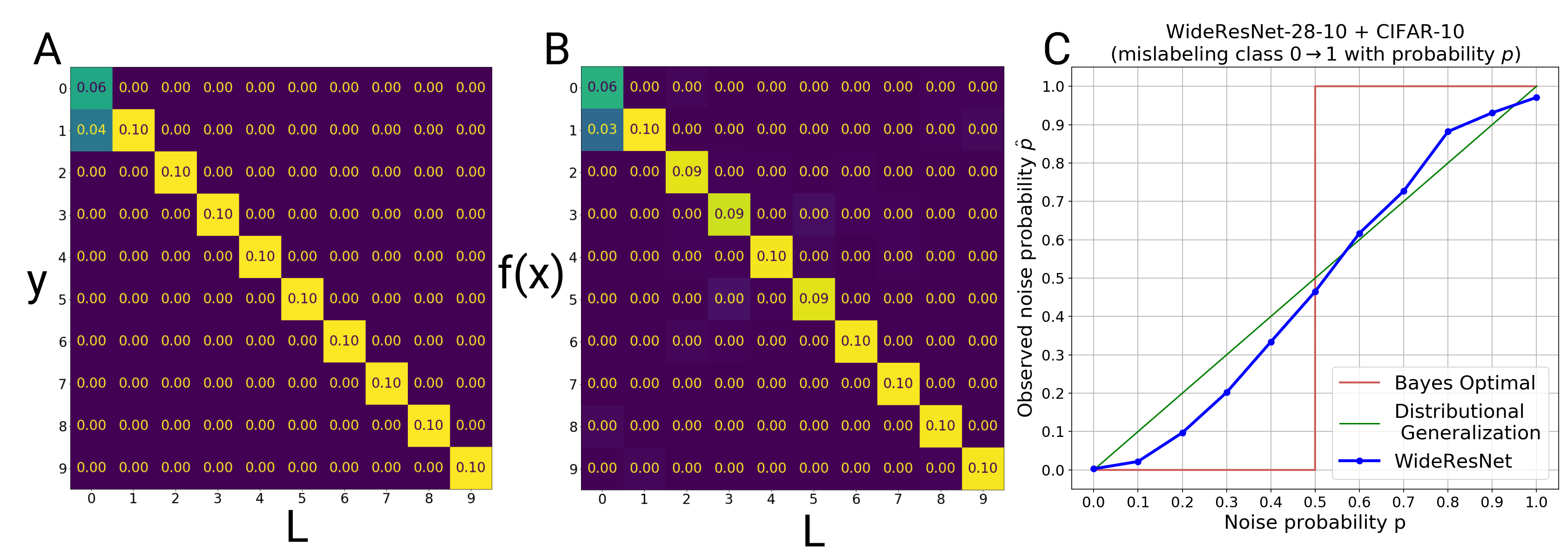}
    \centering
    \caption{\textbf{Feature Calibration with original classes on CIFAR-10}: We train a WRN-28-10 on the CIFAR-10 dataset where we mislabel class $0 \rightarrow 1$ with probability $p$.
    (A): Joint density of the distinguishable features $L$ (the original CIFAR-10 class) and the classification task labels $y$ on the train set for noise probability $p=0.4$.
    (B): Joint density of the original CIFAR-10 classes $L$ and the network outputs $f(x)$ on the test set.
    (C): Observed noise probability in the network outputs on the test set (the (1, 0) entry of the matrix in B) for varying noise probabilities $p$}
    \label{fig:target-varyp}
\end{figure}

\begin{figure}[p]
\centering
\includegraphics[width=0.7\linewidth]{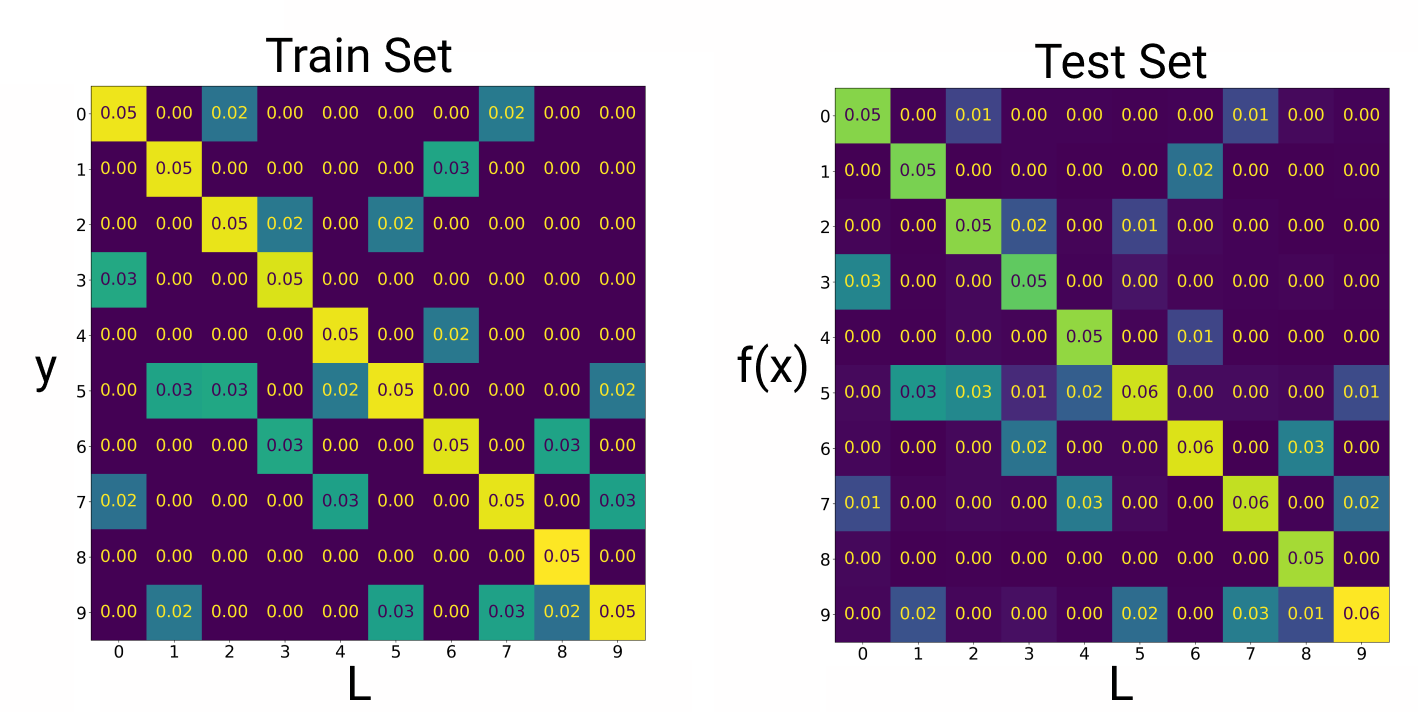}
\caption{{\bf Feature Calibration with random confusion matrix on CIFAR-10: }
Left: Joint density of labels $y$ and original class $L$ on the train set.
Right: Joint density of classifier predictions $f(x)$ and original class $L$
on the test set,
for a WideResNet28-10 trained to interpolation. These two joint densities are close,
as predicted by Conjecture~\ref{conj:approx}.}
\label{fig:wrn_random}
\end{figure}

\begin{figure}[p]
    \includegraphics[width=\linewidth]{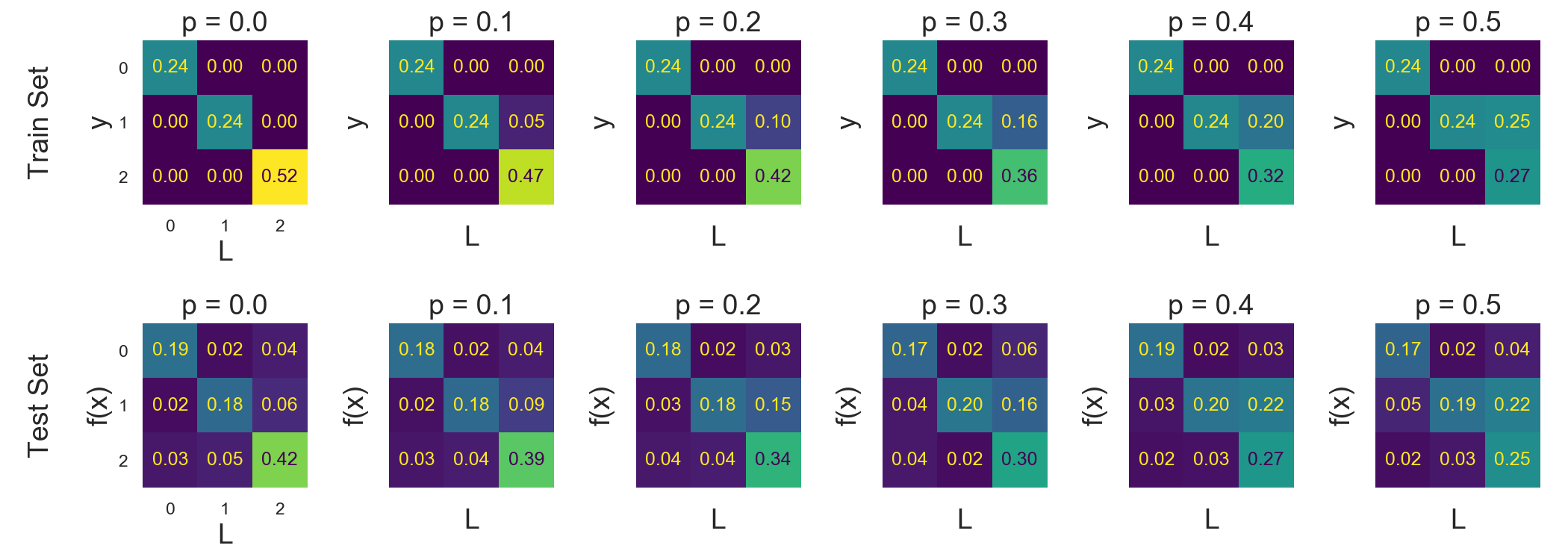}
    \centering
    \caption{{\bf Feature Calibration for Decision trees on UCI (molecular biology).}
    We add label noise that takes class $2$ to class $1$ with probability $p \in [0, 0.5]$.
    The top row shows the confusion matrix of the true class $L(x)$ vs. the label $y$
    on the train set, for varying levels of noise $p$.
    The bottom row shows the corresponding confusion matrices of the classifier predictions $f(x)$
    on the test set, which closely matches the train set, as predicted by Conjecture~\ref{conj:approx}.
    }
    \label{fig:dtdna}
\end{figure}

In Figure \ref{fig:target-varyp}, we mislabel class $0 \rightarrow 1$ with probability $p$ in the CIFAR-10 train set. This gives us the joint distribution shown in Figure \ref{fig:target-varyp}A. 
We then train a WideResNet-28-10 on this noisy distribution.
Figure \ref{fig:target-varyp}B shows the joint distribution on the test set.
Figure \ref{fig:target-varyp}C shows the $(1, 0)$ entry of this matrix as we vary $p \in [0, 1]$.
The Bayes optimal classifier for this distribution would behave as a step function (shown in red),
and a classifier that obeys Conjecture 1 exactly would follow the diagonal (in green).
The actual experiment (in blue) is close to the behavior predicted by Conjecture 1.

In fact, our conjecture holds even for a joint density determined by a random confusion matrix on CIFAR-10.
In Figure~\ref{fig:wrn_random}, we first generate a random sparse
confusion matrix on 10 classes, such that each class is preserved with probability 50\%
and flipped to one of two other classes with probability 20\% and 30\% respectively.
We then apply label noise with this confusion matrix to the train set,
and measure the confusion matrix of the trained classifier on the test set.
As expected, the train and test confusion matrices are close, and share the same sparsity pattern.

Figure~\ref{fig:dtdna} shows a version of this experiment for decision trees on
the molecular biology UCI task.
The molecular biology task is a 3-way classification problem: to classify the type of 
a DNA splice junction (donor, acceptor, or neither), given the sequence of DNA (60 bases) surrounding the junction.
We add varying amounts of label noise that flips class 2 to class 1
with a certain probability, and we observe that interpolating decision trees reproduce
this same structured label noise on the test set.
We also demonstrate similar experiments with the Gaussian kernel on MNIST (Figure \ref{fig:kernelgg}),
and several other UCI tasks (Appendix~\ref{app:density}).

{\bf Multiple features:} We now consider a setting where we may have many distinguishable features for a single classification task. The conjecture states that the network should be automatically calibrated for all distinguishable features, even when it is not explicitly provided any information about these features.  For this, we use the CelebA dataset~\citep{liu2015faceattributes},
which contains images of celebrities with various labelled binary attributes per-image (``male'', ``blond hair'', etc). Some of these attributes form a distinguishable feature for ResNet50 as they are learnable to high accuracy \citep{jahandideh2018physical}. We pick one of the hard attributes as the target classification task, where a ResNet-50 achieves 80\% accuracy.  Then we confirm that the output distribution is calibrated with respect to the attributes that form distinguishable features. In this setting, the label distribution is deterministic, and not directly dependent on the distinguishable features, unlike the experiments considered before. Yet, as we see in Figure~\ref{fig:celeba_wrap}, the classifier outputs are correctly calibrated for each attribute. Full details of the experiment are described in Appendix \ref{app:celeba}.

\begin{figure}
\centering
\includegraphics[width=0.6\linewidth]{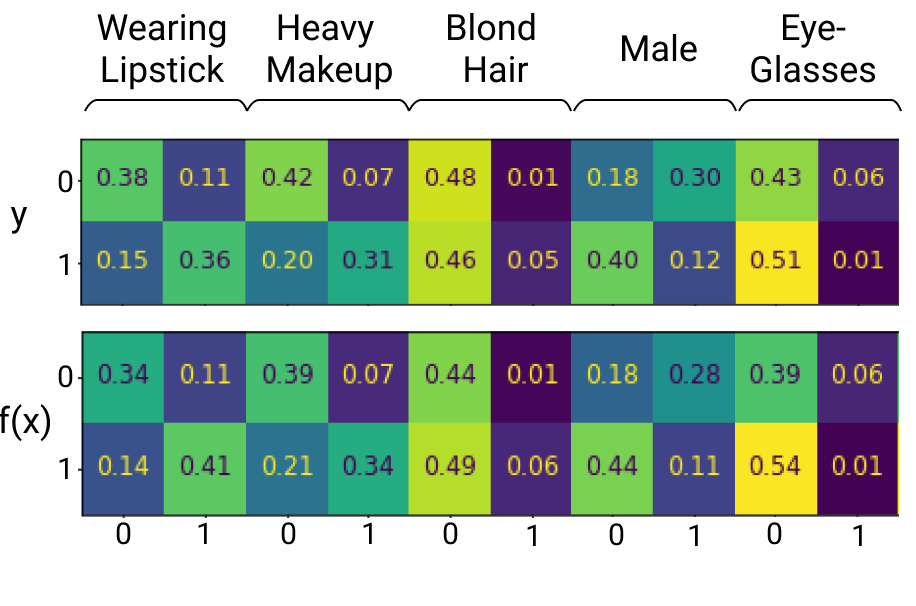}
\caption{\textbf{Feature Calibration for multiple features on CelebA}: We train a ResNet-50 to perform binary classification task on the CelebA dataset. The top row shows the joint distribution of this task label with various other attributes in the dataset. The bottom row shows the same joint distribution for the ResNet-50 outputs on the test set. Note that the network was not given any explicit inputs about these attributes during training.}
\label{fig:celeba_wrap}
\end{figure}

\textbf{Coarse Partition:}
Consider AlexNet trained on ImageNet ILSVRC-2012 \citep{ILSVRC15},
a 1000-class image classification problem including 116 varieties of dogs.
The network only achieves 56.5\% accuracy on the test set, but
it has higher accuracy on coarser label partitions:
for example, it will at least classify most dogs as dogs (with 98.4\% accuracy), though it may mistake the specific dog variety.
In this example, $L(x) \in \{\text{dog, not-dog}\}$ is the distinguishable feature.
Moreover, the network is \emph{calibrated} with respect to dogs:
22.4\% of all dogs in ImageNet are Terriers,
and indeed, the network classifies 20.9\% of all dogs as Terriers
(though it has 9\% error in which specific dogs it classifies as Terriers).
We include similar experiments with ResNets and kernels in Appendix~\ref{app:density}.

\begin{table}
\centering
\begin{tabular}{lrr}
\toprule
Model &  AlexNet &  ResNet50 \\
\midrule
ImageNet accuracy                            &    {\bf 0.565} &     0.761 \\
Accuracy on terriers                         &    0.572 &     0.775 \\
Accuracy for binary \{dog/not-dog\}            &    0.984 &     0.996 \\
Accuracy on \{terrier/not-terrier\} among dogs &    0.913 &     0.969 \\
\midrule
Fraction of real-terriers among dogs         &    {\bf 0.224} &     0.224 \\
Fraction of predicted-terriers among dogs    &    {\bf 0.209} &     0.229 \\
\bottomrule
\end{tabular}
\caption{{\bf Feature Calibration on ImageNet: }
ImageNet classifiers are calibrated with respect to dogs.
For example, all classifiers predict terrier for roughly $\sim22\%$ of all dogs (last row),
though they may mistake which specific dogs are terriers.
See Table~\ref{tab:imagenet} in the Appendix for more models.}
\label{tab:imagenet-trunc}
\end{table}

\subsection{Discussion}
Conjecture~\ref{conj:approx} claims that $\Dte$
is close to $\cD$ up to all tests which are
\emph{themselves learnable}.
That is, if an interpolating method
is capable of learning a certain partition of the domain,
then it will also produce outputs that are calibrated
with respect to this partition, when trained on any problem.
This conjecture thus gives a way of quantifying the resolution
with which classifiers approximate the source distribution $\cD$, 
via properties of the classification algorithm itself.
This is in contrast to many classical ways of
quantifying the approximation of density estimators,
which rely on \emph{analytic} (rather than \emph{operational})
distributional assumptions~\citep{tsybakov2008introduction, wasserman2006all}.

{\bf Proper Scoring Rules.}
If the loss function used in training is a \emph{strictly-proper scoring rule}
such as cross-entropy~\citep{gneiting2007strictly},
then we may expect that in the limit of a large-capacity network
and infinite data, training on samples $\{(x_i, y_i)\}$
will yield a good density estimate of $p(y | x)$ at the softmax layer.
However, this is not what is happening in our experiments:
First, our experiments consider the hard-decisions, not the softmax outputs.
Second, we observe Conjecture~\ref{conj:approx}
even in settings without proper scoring rules
(e.g. kernel SVM and decision trees).

\subsection{1-Nearest-Neighbors Connection}
\label{sec:1nn}
Here we show that the 1-nearest neighbor classifier provably satisfies
Conjecture~\ref{conj:approx}, under mild assumptions.
This is trivially true when the number of train points $n \to \infty$,
such that the train points pack the domain.
However, we do not require any such assumptions:
the theorem below applies generically to a wide class of distributions,
with no assumptions on the ambient dimension of inputs,
the underlying metric, or smoothness of the source distribution.
All the distributional requirements are captured by the preconditions of
Conjecture~\ref{conj:approx}, which require that the feature $L$
is $\eps$-distinguishable to 1-Nearest-Neighbors.
The only further assumption is a weak regularity condition:
sampling the nearest neighbor train point to a random test point
should yield (close to) a uniformly random test point.
In the following, $\NN_S(x)$ refers to the nearest neighbor of point $x$ among points in set $S$.

\begin{theorem}
\label{thm:Ltest}
Let $\cD$ be a distribution over $\cX \x \cY$, and let $n \in \N$ be the number of train samples.
Assume the following regularity condition holds:
Sampling the nearest neighbor train point to a random test point
yields (close to) a uniformly random test point.
That is, suppose that for some small $\delta \geq 0$,
\begin{align}
\{\NN_S(x)\}_{\substack{
S \sim \cD^n\\
x \sim \cD
}}
\quad\approx_\delta\quad
\{x\}_{\substack{
x \sim \cD
}}
\end{align}
Then, Conjecture~\ref{conj:approx} holds. For all $(\eps, \NN, \cD, n)$-distinguishable partitions $L$,
the following distributions are statistically close:
\begin{align}
\{(y, L(x))\}_{x, y \sim \cD}
\quad\approx_{\eps + \delta}\quad
\{(\NNf_S(x), L(x)\}_{\substack{
S \sim \cD^n\\
x, y \sim \cD
}}
\end{align}
\end{theorem}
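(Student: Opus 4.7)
The plan is a two-hop triangle-inequality argument through an intermediate distribution
$P_3 := \{(y_i, L(x_i))\}_{S \sim \cD^n,\, x \sim \cD}$, where $x_i := \NN_S(x)$ and $y_i$ is the label of $x_i$ in the train set $S$. Writing $P_1 := \{(y, L(x))\}_{x,y \sim \cD}$ and $P_2 := \{(\NNf_S(x), L(x))\}_{S \sim \cD^n,\, x \sim \cD} = \{(y_i, L(x))\}$, my goal will be to show $TV(P_2, P_3) \leq \eps$ and $TV(P_3, P_1) \leq \delta$; each hop is designed to isolate exactly one of the theorem's two hypotheses, and the triangle inequality then concludes.

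For the first hop, I would couple $P_2$ and $P_3$ on the same underlying randomness $(S, x)$. The two random variables agree identically on the first coordinate $y_i$, and on the second coordinate they differ only on the event $\{L(x) \neq L(\NN_S(x))\}$. But this is precisely the failure event in Definition~\ref{def:dist-feature-L}: training 1-Nearest-Neighbors on the relabelled train set $\{(x_j, L(x_j))\}$ outputs the label $L(\NN_S(x))$, so $(\eps, \NN, \cD, n)$-distinguishability of $L$ says exactly that $\Pr[L(\NN_S(x)) = L(x)] \geq 1-\eps$. The standard coupling bound then yields $TV(P_2, P_3) \leq \eps$.

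For the second hop, I would observe that in $P_3$ the pair $(x_i, y_i)$ shares the same conditional law $p_\cD(y \mid x_i)$ as a fresh $\cD$-sample: this is immediate from the iid structure of $S$, since each $y_j$ is drawn from $p_\cD(\cdot \mid x_j)$ independently given $x_j$. Hence the joint law of $(x_i, y_i)$ agrees with $\cD$ in its conditional and differs from $\cD$ only through its $x$-marginal, so a short calculation shows that this joint TV distance equals the marginal TV distance between the law of $\NN_S(x)$ and the law of $x$, which is at most $\delta$ by the regularity assumption. Applying the deterministic post-processing $(x, y) \mapsto (y, L(x))$ (which cannot increase TV) gives $TV(P_3, P_1) \leq \delta$, and the triangle inequality concludes $TV(P_1, P_2) \leq \eps + \delta$.

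The only subtle step is the second hop: the regularity hypothesis is stated as closeness of the \emph{marginal} of $\NN_S(x)$ to the marginal of $x$, but what the argument actually needs is closeness of the joint nearest-neighbor pair $(x_i, y_i)$ to $\cD$. The upgrade from marginal to joint rests essentially on the iid labelling of $S$ and would fail (for instance) under correlated label noise across train samples. Everything else is a bookkeeping triangle inequality together with a coupling argument, so I expect this marginal-to-joint conversion to be the only place where real care is needed.
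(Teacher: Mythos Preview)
Your proposal is correct and follows essentially the same two-hop argument as the paper: the paper also passes through the intermediate distribution $\{(\hat y_i, L(\hat x_i))\}$, using distinguishability for the first hop and the regularity condition for the second. If anything, you are more explicit than the paper about the marginal-to-joint upgrade in the second hop (the paper simply invokes ``by the regularity condition'' there), which is indeed the only place requiring care.
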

The proof of Theorem~\ref{thm:Ltest} is straightforward, and provided in Appendix~\ref{sec:proofs}.
We view this theorem both as support for our formalism of Conjecture~\ref{conj:approx},
and as evidence that the classifiers we consider in this work have \emph{local} properties
similar to 1-Nearest-Neighbors.

Note that Theorem~\ref{thm:Ltest} does not hold for the k-nearest neighbor classifier (k-NN),
which takes the plurality vote of K neighboring train points.
However, it is somewhat more general than 1-NN: for example, it holds for a randomized version of k-NN which,
instead of taking the plurality,
randomly picks one of the K neighboring train points (potentially weighted) for the test classification.

\subsection{Pointwise Density Estimation}
\label{sec:ptwise}
In fact, we could hope for an even stronger property than
Conjecture~\ref{conj:approx}.
Consider the familiar example:
we mislabel 20\% of dogs as cats in the CIFAR-10 training data,
and train an interpolating ResNet on this train set.
Conjecture~\ref{conj:approx} predicts that, \emph{on average}
over all test dogs, roughly 20\% of them are classified as cats.
In fact, we may expect this to hold pointwise for each dog:
For a single test dog $x$, if we train a new classifier $f$ (on fresh iid samples from the noisy distribution),
then $f(x)$ will be cat roughly 20\% of the time.
That is, for each test point $x$, taking an ensemble over independent train sets yields an estimate of the conditional density $p(y | x)$.
Informally:
\begin{align}
\text{With high probability over test } x \sim \cD:
\quad
\Pr_{f \gets \Train_{\cF}(\cD^n)}[f(x) = \ell] ~\approx~ p(y = \ell | x)
\end{align}
where the probability on the LHS is over the random sampling of train set, and any randomness in the training procedure.
This behavior would be stronger than,
and not implied by, Conjecture~\ref{conj:approx}.
We give preliminary experiments supporting such a pointwise property in Appendix~\ref{app:ptwise}.

\section{Agreement Property}
\label{sec:agree}

We now present an ``agreement property''
of various classifiers.
This property is independent of the previous section,
though both are instantiations
of our general indistinguishability conjecture.
We claim that, informally, the test accuracy of a classifier is close to
the probability that it agrees with an identically-trained classifier on a disjoint train set.
\begin{conjecture}[Agreement Property]
\label{claim:agree}
For certain classifier families $\cF$ and distributions $\cD$,
the test accuracy of a classifier is close to
its \emph{agreement probability} with an independently-trained classifier.
That is,
let $S_1, S_2$ be independent train sets sampled from $\cD^n$,
and let $f_1, f_2$ be classifiers trained on $S_1, S_2$ respectively. Then
\begin{align}
\label{eqn:agree}    
\Pr_{\substack{
S_1 \sim \cD^n\\
f_1 \gets \Train_\cF(S_1)\\
(x, y) \sim \cD
}}[f_1(x) = y]
~\approx~
\Pr_{\substack{
S_1, S_2 \sim \cD^n\\
f_i \gets \Train_\cF(S_i)\\
(x, y) \sim \cD
}}[f_1(x) = f_2(x)]
\end{align}
Moreover, this holds 
with high probability 
over training
$f_1, f_2$:
$\Pr_{\substack{(x, y) \sim \cD
}}[f_1(x) = y]
\approx
\Pr_{\substack{(x, y) \sim \cD
}}[f_1(x) = f_2(x)]
$.
\end{conjecture}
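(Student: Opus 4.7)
The plan is to reduce Conjecture~\ref{claim:agree} to the pointwise density-estimation property of Section~\ref{sec:ptwise}, by conditioning on the test point $x$ and exploiting that $f_1(x)$ and $f_2(x)$ are independent given $x$ (because $S_1, S_2$ are i.i.d.\ and independent of $(x,y)$). First I would define
\[
q(\ell \mid x) := \Pr_{S \sim \cD^n,\; f \gets \Train_\cF(S)}[f(x) = \ell],
\]
the marginal probability, over the draw of the train set and internal training randomness, that a freshly trained classifier labels $x$ as class $\ell$. A direct computation then gives
\[
\Pr[f_1(x)=f_2(x)] = \E_{x \sim \cD}\!\Bigl[\textstyle\sum_\ell q(\ell \mid x)^2\Bigr], \qquad \Pr[f_1(x) = y] = \E_{x \sim \cD}\!\Bigl[\textstyle\sum_\ell q(\ell \mid x)\,p(\ell \mid x)\Bigr].
\]

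The difference between these two quantities is exactly $\E_x\bigl[\sum_\ell q(\ell \mid x)(q(\ell \mid x) - p(\ell \mid x))\bigr]$, so the conjecture (at least in its expectation form) would follow immediately from the pointwise claim $q(\cdot \mid x) \approx p(\cdot \mid x)$ on typical $x$. For $1$-nearest-neighbors this step is essentially free from Theorem~\ref{thm:Ltest}: under the regularity hypothesis there, the law of $\NNf_S(x)$ given $x$ is, up to $O(\delta)$, a fresh draw from $p(y \mid x)$, hence $q \approx p$ and both sides equal $\E_x\!\bigl[\sum_\ell p(\ell \mid x)^2\bigr]$ up to the same error. I would present this as a short corollary to Theorem~\ref{thm:Ltest}. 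The ``moreover'' claim, that $\Pr_{(x,y)}[f_1(x)=y] \approx \Pr_{(x,y)}[f_1(x)=f_2(x)]$ holds with high probability over the training of $f_1,f_2$, would then be handled by a separate concentration-of-measure step showing that these two averages over $(x,y)$ concentrate around their common expectation for large enough test sets.

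For a generic interpolating family $\cF$, rather than try to prove the full pointwise property I would attempt only the averaged identity $\E_x[\sum_\ell q^2] \approx \E_x[\sum_\ell q\,p]$. A natural route is to apply Feature Calibration (Conjecture~\ref{conj:approx}) to the \emph{product} training procedure $(\Train_\cF, \Train_\cF)$ on independent train sets $S_1, S_2$, against the test $T(x,(\hat y_1,\hat y_2)) = \1\{\hat y_1 = \hat y_2\}$ composed with a distinguishable feature that approximates the level sets of $x \mapsto q(\cdot \mid x)$. One would need to check that the product training procedure is still an ``interpolating method'' in the sense required by Conjecture~\ref{conj:approx}, which seems plausible because each component trivially interpolates its own train set.

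The main obstacle is precisely to justify that the algorithm-dependent function $q(\cdot \mid x)$ is captured, in a coarse sense, by some distinguishable feature $L$ in the sense of Definition~\ref{def:dist-feature-L}: $q$ is a property of the \emph{learning rule}, not a label of the data, and it is not obvious that it is learnable from samples by $\cF$ itself. I expect this to require either an algorithmic-stability argument showing that $q(\cdot \mid x)$ varies slowly in $x$ relative to what $\cF$ can resolve on $n$ samples, or a leave-one-out comparison between $f_2$ and the hypothetical classifier obtained by inserting $(x, f_1(x))$ into $S_2$. Both feel natural for interpolating classifiers with the sort of local behavior highlighted by Theorem~\ref{thm:Ltest}, but neither is routine, and finding the right stability notion is the step I would expect to spend the most effort on.
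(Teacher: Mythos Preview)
Your reduction to the pointwise density-estimation property is precisely the mechanism the paper considers and \emph{refutes} in Section~\ref{sec:ptwiseagree} (and Appendix~\ref{app:alt-mech}): experimentally, the quantity $M(x,y) := \Pr_{f_1}[f_1(x)=y] - \Pr_{f_1,f_2}[f_1(x)=f_2(x)]$ is \emph{not} close to zero pointwise, but takes substantially positive and negative values that cancel in expectation. The paper even remarks that $1$-nearest-neighbors can satisfy the Agreement Property without satisfying pointwise agreement. So the heart of your plan---``$q(\cdot\mid x)\approx p(\cdot\mid x)$ on typical $x$, hence $\E_x\sum_\ell q^2 \approx \E_x\sum_\ell qp$''---is exactly the route that fails.

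Your $1$-NN corollary also misreads the regularity hypothesis of Theorem~\ref{thm:Ltest}. That condition says the \emph{marginal} law of $\NN_S(x)$, over random $S$ \emph{and} random $x$, is $\delta$-close to the marginal of $x$; it does not say that for a fixed $x$ the law of $\NN_S(x)$ over random $S$ is concentrated near $x$, which is what you would need to conclude $q(\cdot\mid x)\approx p(\cdot\mid x)$. The paper's proof of the Agreement Property for $1$-NN (Theorem~\ref{thm:aggr}) accordingly uses a \emph{different} and stronger regularity assumption---a coupling between the pair $(x_i,\NN_{S_2}(x_i))$ with $x_i$ a random train point, and the pair $(\NN_{S_1}(x),\NN_{S_2}(x))$ with $x$ a random test point---and argues directly by expressing both the test-accuracy and agreement probabilities as expectations of the same function $T(x_1,x_2)=\Pr_{y_1\sim p(\cdot\mid x_1),\,y_2\sim p(\cdot\mid x_2)}[y_1=y_2]$ under these two coupled distributions. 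No pointwise $q\approx p$ is invoked.

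Your second, more tentative idea (apply Feature Calibration to the product training procedure, using a distinguishable feature that approximates the level sets of $q$) is interesting but, as you note, the crux is exactly whether the algorithm-dependent object $q$ is itself a distinguishable feature; the paper does not attempt this and leaves the mechanism for general $\cF$ open.
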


\begin{figure}[ht]
    \centering
    \begin{subfigure}[t]{0.31\textwidth}
        \includegraphics[width=\textwidth]{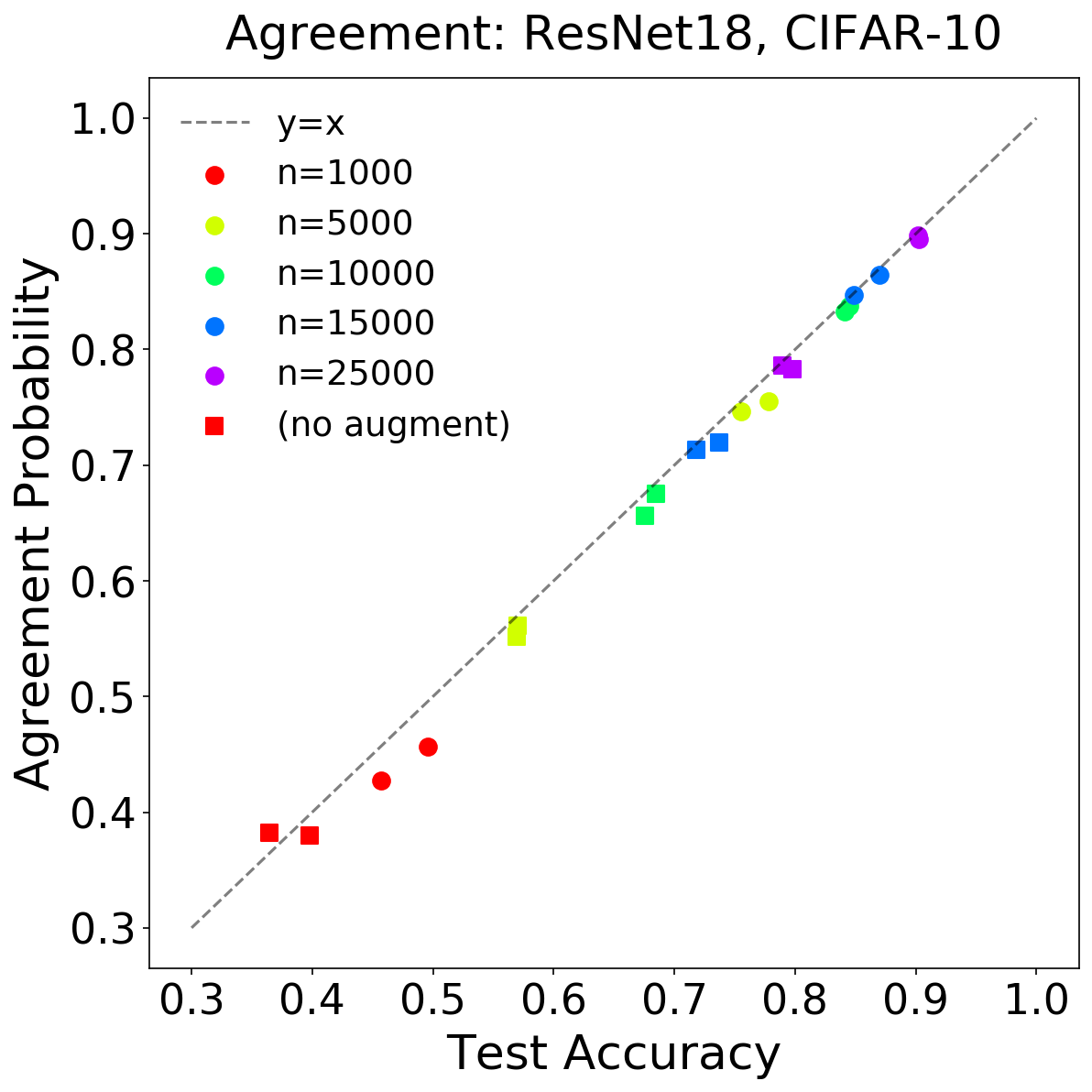}
        \caption{ResNet18 on CIFAR-10.}
        \label{fig:aggr-cf10-cnn}
    \end{subfigure}
    \hfill
    \begin{subfigure}[t]{0.31\textwidth}
        \includegraphics[width=\textwidth]{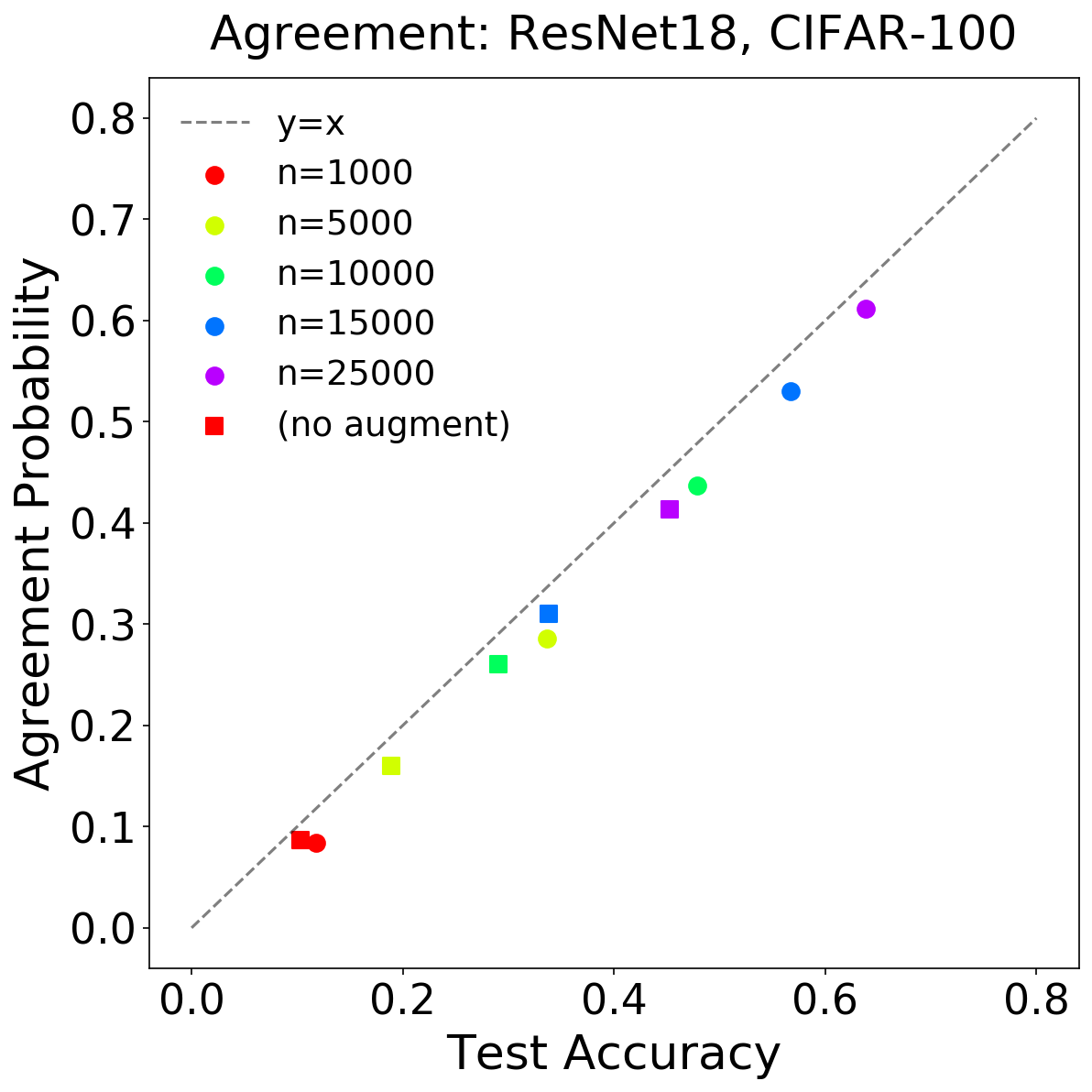}
        \caption{ResNet18 on CIFAR-100.}
        \label{fig:aggr-cf100-cnn}
    \end{subfigure}
    \hfill
    \begin{subfigure}[t]{0.31\textwidth}
        \includegraphics[width=\textwidth]{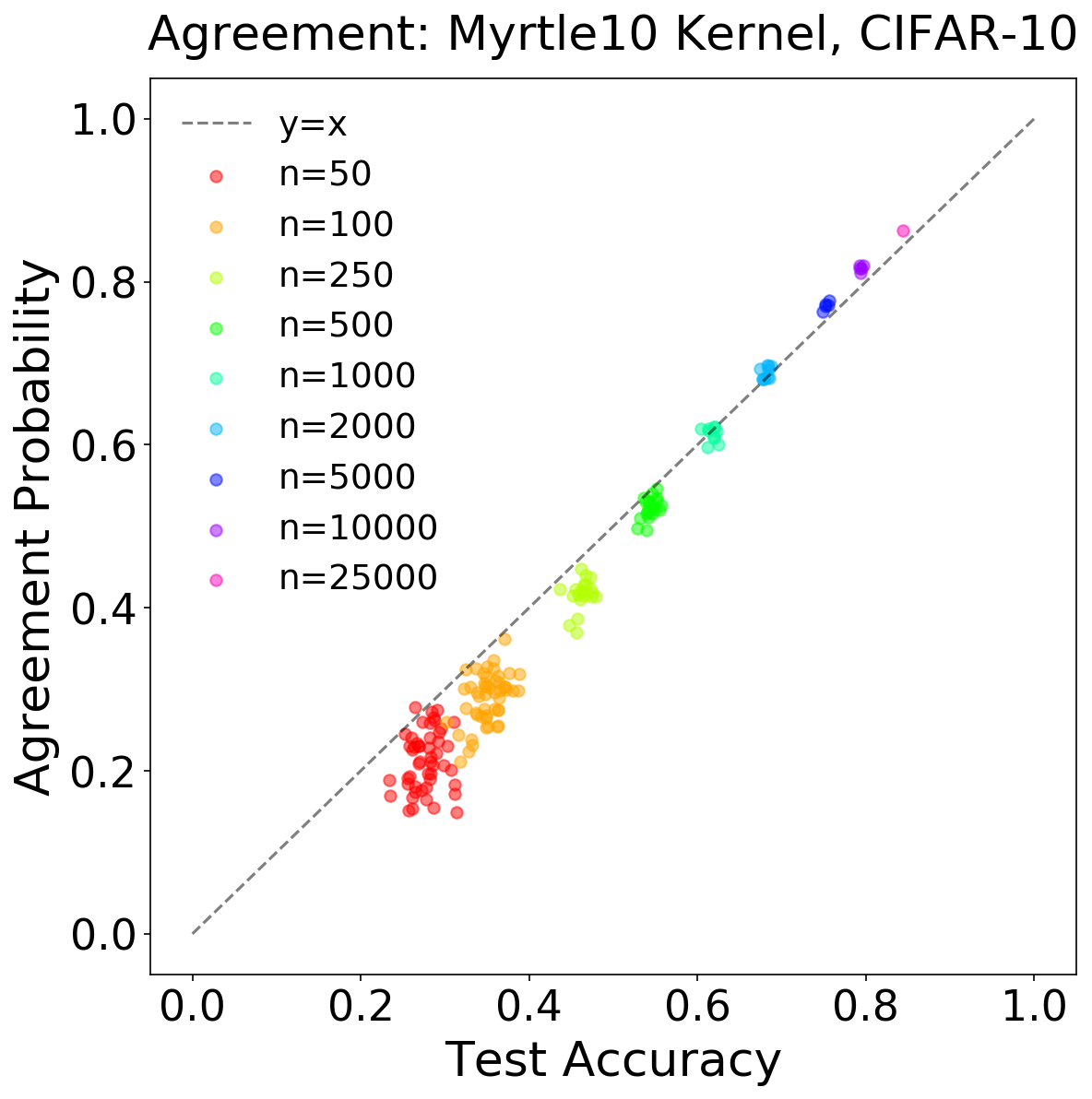}
        \caption{Myrtle Kernel on CIFAR-10.}
        \label{fig:aggr-myrtle}
    \end{subfigure}
    \hfill
    \caption{{\bf Agreement Property on CIFAR-10/100.} 
        For two classifiers trained on disjoint train sets,
        the probability they agree with each other (on the test set)
        is close to their test accuracy.
    }
    \label{fig:aggr-main}
\end{figure}

The agreement property (Conjecture~\ref{claim:agree})
is surprising for several reasons. 
First, suppose we have two classifiers $f_1, f_2$
which were trained on independent train sets,
and both achieve test accuracy say 50\% on a 10-class problem.
That is, they agree with the true label $y(x)$ w.p. 50\%.
Depending on our intuition, we may expect:
(1) They agree with each other much less than they agree with the true label,
since each individual classifier is an independently noisy version of the truth,
or (2) They agree with each other much more than 50\%, since
classifiers tend to have ``correlated'' predictions.
However, neither of these are the case in practice.

Second, it may be surprising that the RHS
of Equation~\ref{eqn:agree} is an estimate of the test error
that requires only unlabeled test examples $x$.
This observation is independently interesting,
and may be relevant for applications in
uncertainty estimation and calibration.
Conjecture~\ref{claim:agree} also provably holds for 1-Nearest-Neighbors
in some settings, under stronger assumptions (Theorem~\ref{thm:aggr} in Appendix~\ref{sec:proofs}).

{\bf Connection to Indistinguishability.}
Conjecture~\ref{claim:agree} is in fact an instantiation of our general indistinguishability conjecture.
Informally, we can ``swap $y$ for $f_2(x)$'' in the LHS of Equation~\ref{eqn:agree}, since they are indistinguishable.
Formally, consider the specific test
\begin{align}
T_{\textrm{agree}}: (x, \hat{y}) \mapsto \1\{f_1(x) = \hat{y}\}
\end{align}
where $f_1 \gets \Train_\cF(\cD^n)$.
The expectation of this test under the Source Distribution
$\cD$ is exactly the LHS of Equation~\ref{eqn:agree},
while the expectation under the Test Distribution $\Dte$ is exactly the RHS.
Thus, Conjecture~\ref{claim:agree} can be equivalently stated as
\begin{align}
\cD \approx^{T_{\textrm{agree}}} \Dte.
\end{align}

\subsection{Experiments}
\begin{figure}[ht]
    \centering
    \begin{subfigure}[t]{0.3\textwidth}
        \includegraphics[width=\textwidth]{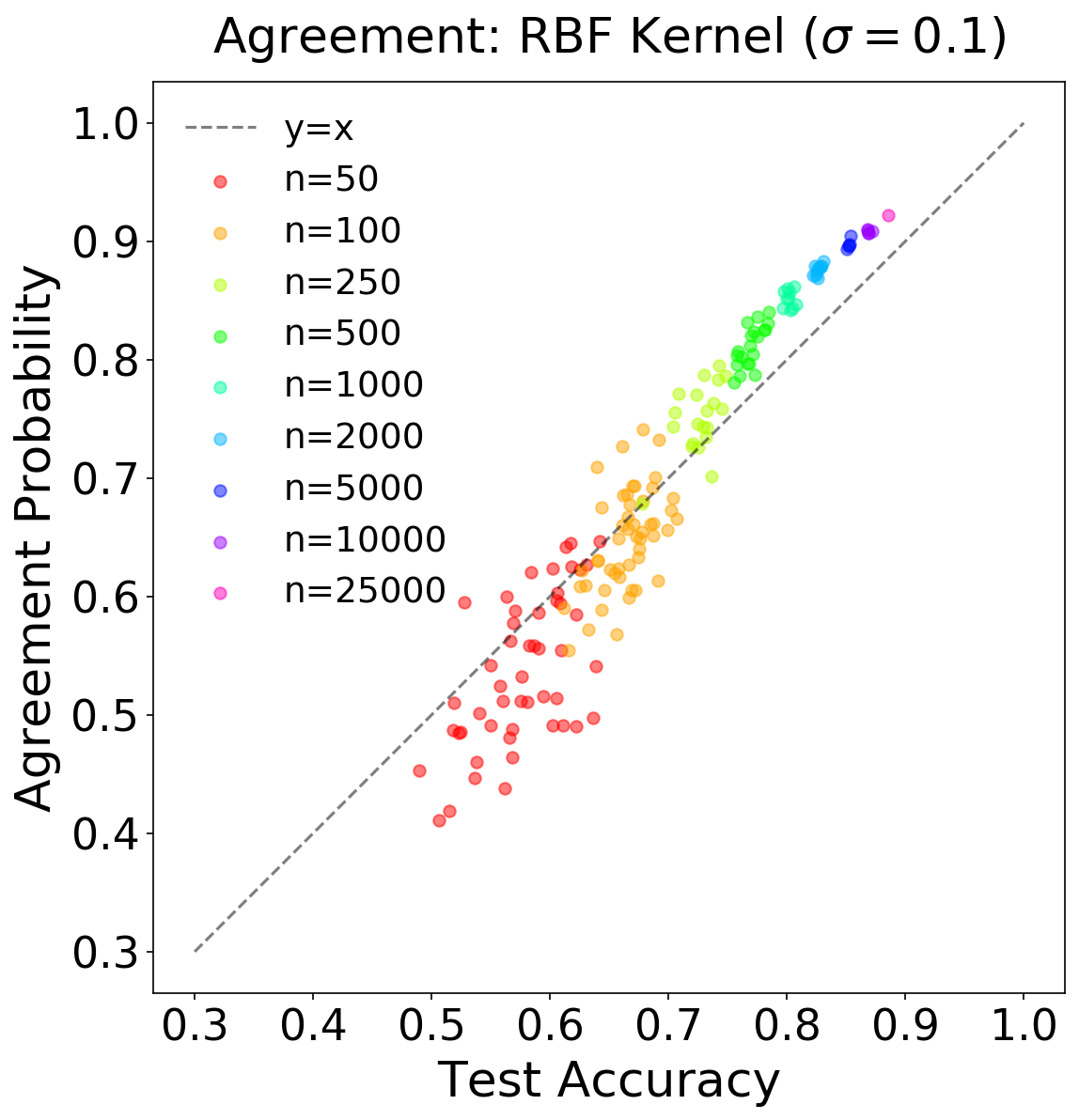}
        \caption{RBF on Fashion-MNIST}
        \label{fig:aggr-fmnist-rbf}
    \end{subfigure}
    \hfill
    \begin{subfigure}[t]{0.3\textwidth}
        \includegraphics[width=\textwidth]{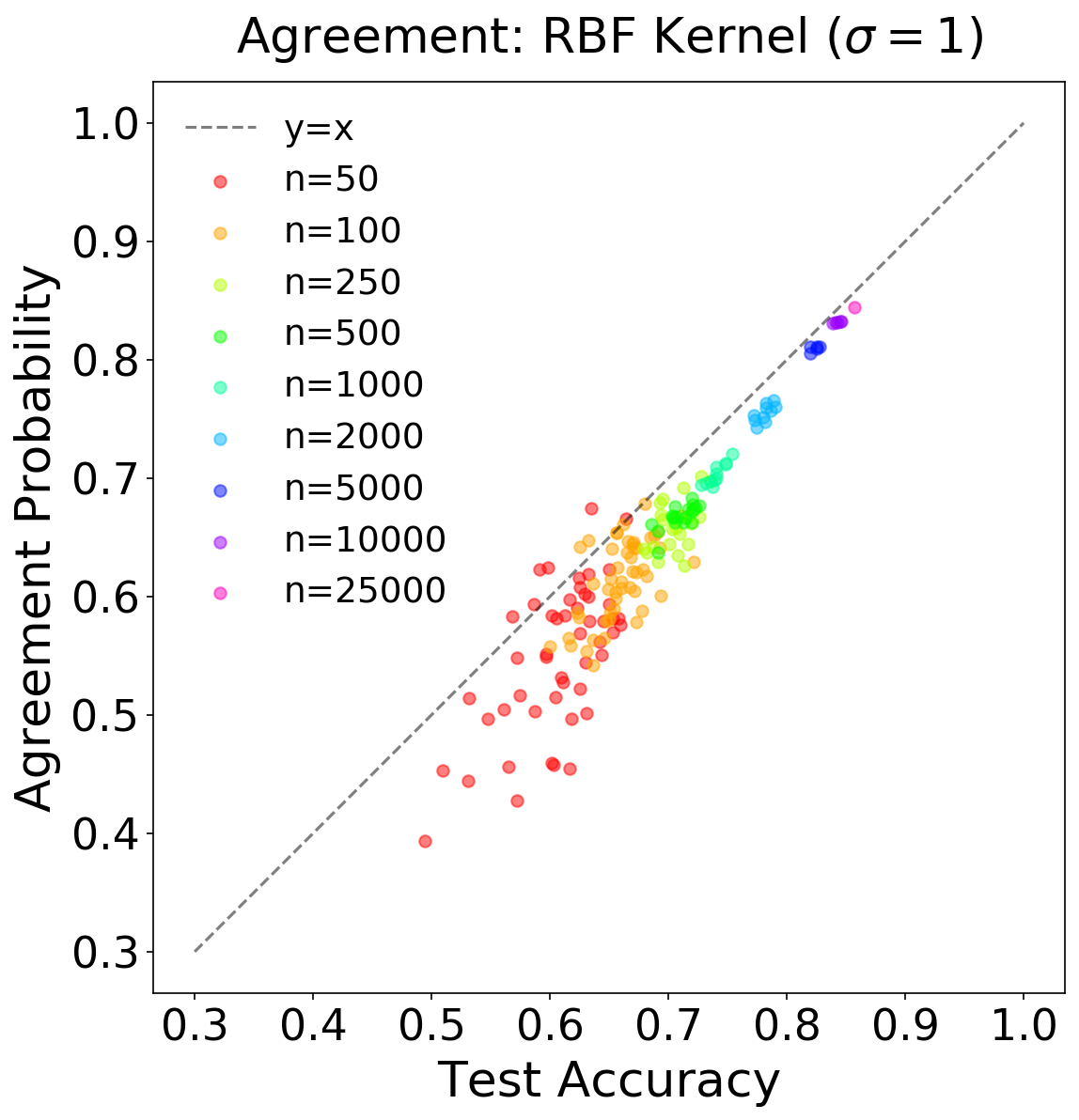}
        \caption{RBF on Fashion-MNIST}
    \end{subfigure}
    \hfill
    \begin{subfigure}[t]{0.3\textwidth}
        \includegraphics[width=\textwidth]{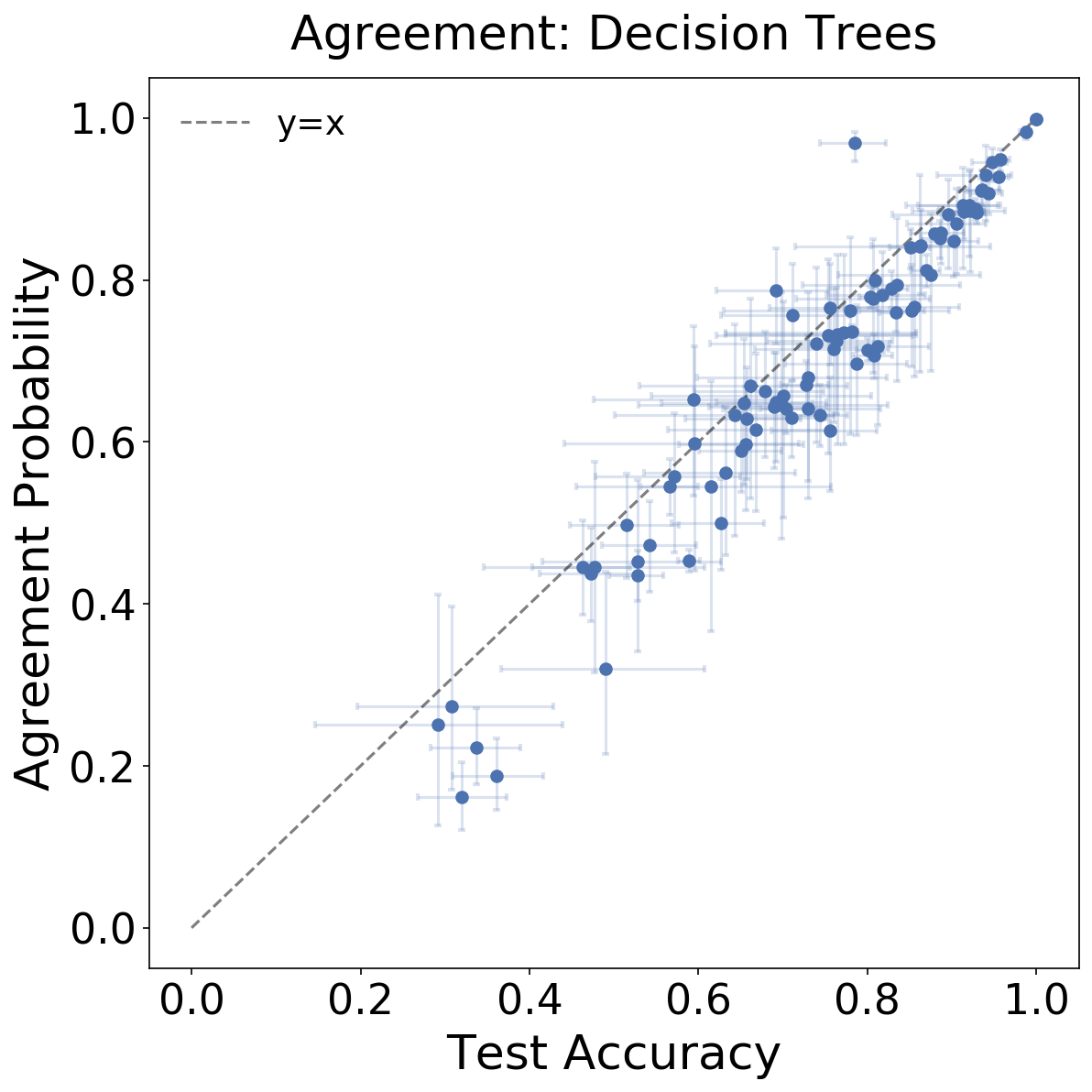}
        \caption{Decision Trees on UCI}
        \label{fig:uci-aggr}
    \end{subfigure}
    \caption{{\bf Agreement Property for RBF and decision trees.} 
        For two classifiers trained on disjoint train sets,
        the probability they agree with each other (on the test set)
        is close to their test accuracy.
        For UCI, each point corresponds to one UCI task,
        and error bars show 95\% Clopper-Pearson confidence intervals in estimating
        population quantities.
    } 
    \label{fig:aggr-main2}
\end{figure}

In our experiments,
we train a pair of classifiers $f_1, f_2$
on random disjoint subsets
of the train set for a given distribution.
Both classifiers are otherwise trained identically,
using the same architecture, number of train-samples $n$, and optimizer.
We then plot the test error of $f_1$ against
the agreement probability $\Pr_{x \sim \TestSet}[f_1(x) = f_2(x)]$.
Figure~\ref{fig:aggr-main} shows experiments with ResNet18 on CIFAR-10
and CIFAR-100,
as well as the Myrtle10 kernel from \citet{shankar2020neural},
with varying number of train samples $n$.
These classifiers are trained with standard-practice training procedures (SGD with standard data-augmentation for ResNets),
with no additional hyperparameter tuning.
Figure~\ref{fig:aggr-main2} shows experiments with the RBF Kernel on Fashion-MNIST,
and decision trees on 92 UCI classification tasks.
The Agreement Property approximately holds for all pairs of identical classifiers,
and continues to hold
even for ``weak'' classifiers
(e.g. when $f_1, f_2$ have high test error).
Full experimental details and further experiments
are in Appendix~\ref{app:agree}.

\subsection{Potential Mechanisms}
We now consider, and refute, several potential mechanisms
which could explain the experimental results of Conjecture~\ref{claim:agree}.

\subsubsection{Bimodal Samples}
\label{sec:bimodal}
A simple model which would exhibit the Agreement Property
is the following:
Suppose test samples $x$ come in two types: ``easy'' or ``hard.''
All classifiers get ``easy'' samples correct,
but they output a uniformly random class on ``hard'' samples.
That is, for a fixed $x$, consider the probability that a
freshly-trained classifier gets $x$ correct.
``Easy'' samples are such that 
\[
\text{For $x \in$ EASY:} \quad
\Pr_{f \gets \Train(\cD^n)}[f(x) = y(x)] = 1
\]
while ``hard'' samples have a uniform distribution on
output classes $[K]$:
\[
\text{For $x \in$ HARD:} \quad
\Pr_{f \gets \Train(\cD^n)}[f(x) = i] =
\frac{1}{K}
~~\forall i \in [K]
\]

Notice that for HARD samples $x$, a classifier $f_1$
agrees with the true label $y$ with exactly the same
probability that it agrees with an independent classifier $f_2$
(because both $f_1, f_2$ are uniformly random on $x$).
Thus, the agreement property (Conjecture~\ref{claim:agree}) holds exactly under this model.
However, this strict decomposition of samples into ``easy'' and ``hard''
does not appear to be the case in the experiments (see Appendix~\ref{app:alt-mech}, Figure~\ref{fig:ptwise-corr}).

\subsection{Pointwise Agreement}
\label{sec:ptwiseagree}
We could more generally posit
that Conjecture~\ref{claim:agree} is true
because the Agreement Property holds
\emph{pointwise} for most test samples $x$.
That is, Equation~\eqref{eqn:agree}
would be implied by:
\begin{equation}
\text{w.h.p. for } (x, y) \sim \cD:
\quad
\Pr_{\substack{
f_1 \gets \Train(\cD^n)\\
}}[f_1(x) = y]
\approx
\Pr_{\substack{
f_1 \gets \Train(\cD^n)\\
f_2 \gets \Train(\cD^n)\\
}}[f_1(x) = f_2(x)]
\label{eqn:ptwise}
\end{equation}
This was the case for the EASY/HARD decomposition above,
but could be true in more general settings.
Equation~\eqref{eqn:ptwise} is a ``pointwise calibration'' property
that would allow estimating the probability of making an error on a test point $x$
by simply estimating the probability that two independent classifiers agree on $x$.
However, we find (perhaps surprisingly) that this is not the case.
That is, Equation~\eqref{eqn:agree} holds on average over $(x, y) \sim \cD$,
but not pointwise for each sample.
We give experiments demonstrating this in Appendix~\ref{app:alt-mech}.
Interestingly, 1-nearest neighbors can
satisfy the agreement property of Claim~\ref{claim:agree} without
satisfying the ``pointwise agreement'' of Equation~\ref{eqn:ptwise}.
It remains an open problem to understand the mechanisms behind Agreement Matching.

\section{Limitations and Ensembles}
\label{sec:limits}

The conjectures presented in this work are not fully specified,
since they do not exactly specify which classifiers or distributions for which they hold.
We experimentally demonstrate instances of these conjectures in various ``natural'' settings in machine learning,
but we do not yet understand which assumptions on the distribution or classifier are required.
Some experiments also deviate slightly from the predicted behavior
(e.g. the kernel experiments in Figures \ref{fig:constant-L} and \ref{fig:aggr-main2}).
Nevertheless, we believe our conjectures capture the essential aspects of the observed behaviors,
at least to first order.
It is an important open question to refine these conjectures and better understand their applications and limitations---
both theoretically and experimentally.

\subsection{Ensembles}
We could ask if all high-performing interpolating methods used in practice satisfy our conjectures.
However, an important family of classifiers which fail our Feature Calibration Conjecture are ensemble methods:
\begin{enumerate}
    \item Deep ensembles of interpolating neural networks \citep{lakshminarayanan2017simple}.
    \item Random forests (i.e. ensembles of interpolating decision trees) \citep{breiman2001random}.
    \item k-nearest neighbors (roughly ``ensembles'' of 1-Nearest-Neighbors) \citep{fix1951discriminatory}.
\end{enumerate}
The pointwise density estimation discussion in Section~\ref{sec:ptwise} sheds some light on these cases.
Notice that these are settings where the ``base'' classifier in the ensemble
obeys Feature Calibration, and in particular, acts as an approximate conditional density 
estimator of $p(y | x)$, as in Section~\ref{sec:ptwise}.
That is, if individual base classifiers $f_i$ approximately act as samples from
$$f_i(x) \sim p( y | x)$$
then for sufficiently many classifiers $\{f_1, \dots, f_k\}$ trained on independent train sets,
the ensembled classifier will act as
$$\textrm{plurality}(f_1, f_2, \dots, f_k)(x) \approx \argmax_y p(y | x)$$
Thus, we believe ensembles fail our conjectures because, in taking the plurality
vote of base classifiers, they are approximating $\argmax_y p(y | x)$ instead of the conditional density $p(y | x)$ itself.
Indeed, in the above examples, we observed that ensemble methods
behave much closer to the Bayes-optimal classifier than their underlying base classifiers
(especially in settings with label noise).

\section{Distributional Generalization: Beyond Interpolating Methods}
\label{sec:gg}

\begin{figure}[p]
\centering
\includegraphics[trim=2cm 2cm 2cm 2cm,clip=true,width=\linewidth]{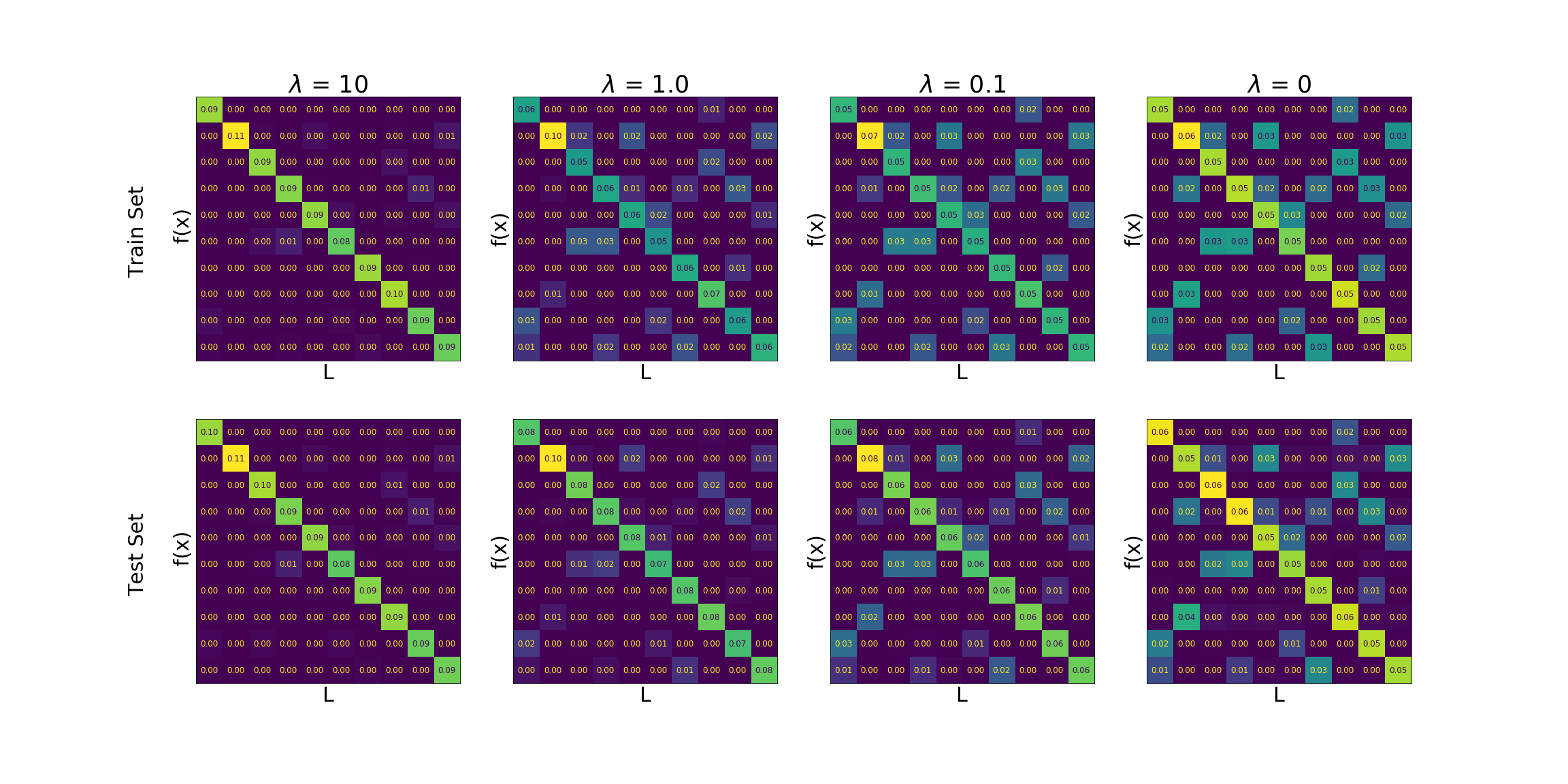}
\caption{{\bf Distributional Generalization for Gaussian Kernel on MNIST.}
We apply label noise from a random sparse confusion to the MNIST train set.
We then train a Gaussian Kernel for classification, with varying $L_2$ regularization $\lambda$.
The top row shows the confusion matrix of predictions $f(x)$ vs true labels $L(x)$ on the train set,
and the bottom row shows the corresponding confusion matrix on the test set.
Larger values of regularization prevents the classifier from fitting label noise on the train set,
and this behavior is mirrored almost identically on the test set.
Note that all classifiers above are trained on the same train set, with the same label noise.
}
\label{fig:kernelgg}
\end{figure}

\begin{figure}[p]
\centering
\includegraphics[trim=2cm 2cm 2cm 2cm,clip=true,width=\linewidth]{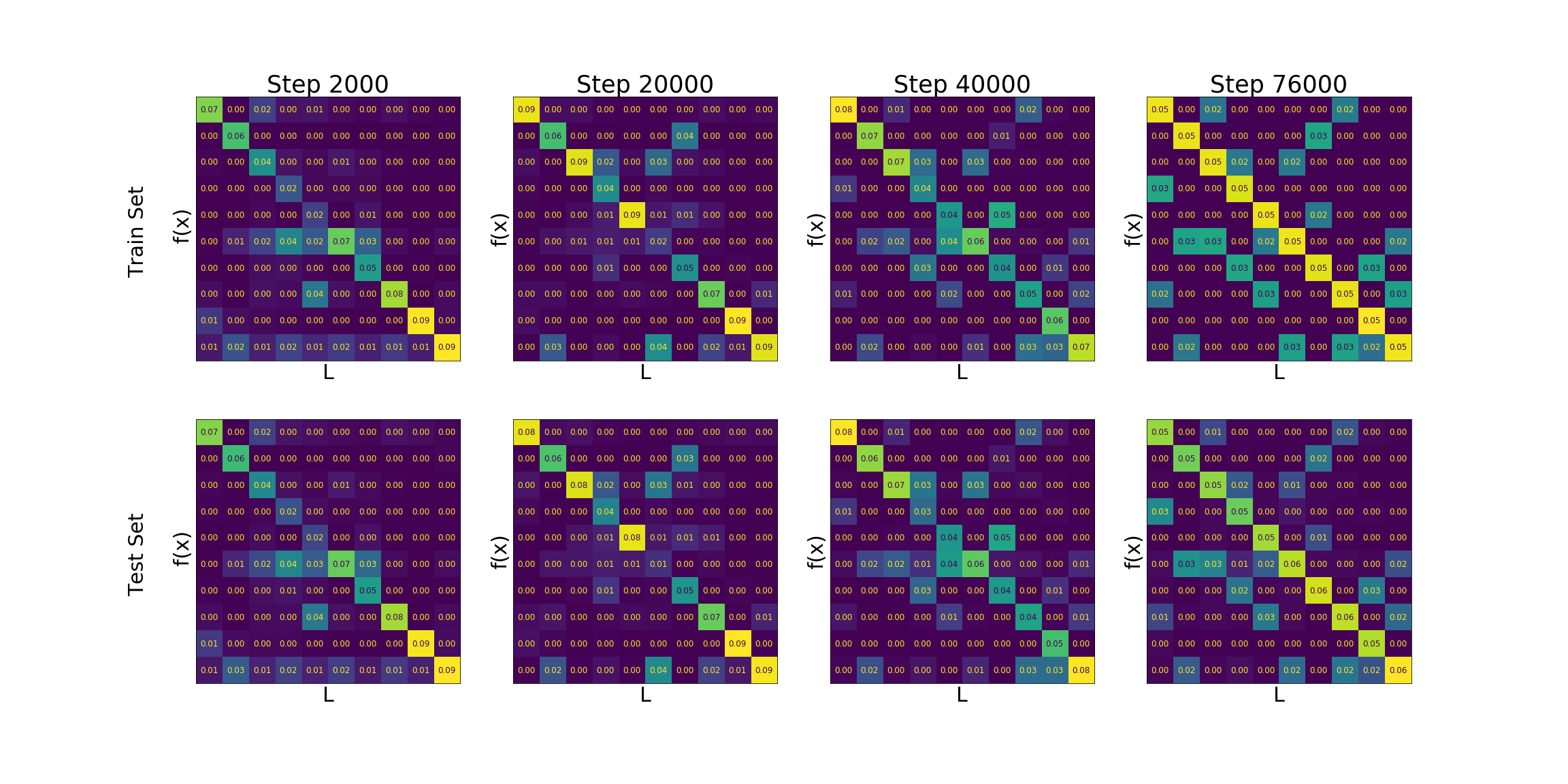}
\caption{{\bf Distributional Generalization for WideResNet on CIFAR-10.}
We apply label noise from a random sparse confusion to the CIFAR-10 train set.
We then train a single WideResNet28-10, and measure its predictions on the train and test sets
over increasing train time (SGD steps).
The top row shows the confusion matrix of predictions $f(x)$ vs true labels $L(x)$ on the train set,
and the bottom row shows the corresponding confusion matrix on the test set.
As the network is trained for longer, it fits more of the noise on the train set,
and this behavior is mirrored almost identically on the test set.
}
\label{fig:cifargg}
\end{figure}

The previous sections have focused primarily on
\emph{interpolating} classifiers, which fit their train sets exactly.
Here we discuss the behavior of non-interpolating methods,
such as early-stopped neural networks and regularized kernel machines, which do not reach 0 train error.

For non-interpolating classifiers, their outputs on the train set $(x, f(x))_{x \sim \TrainSet}$
will \emph{not} match the original distribution $(x, y) \sim \cD$.
Thus, there is little hope that their outputs on the test set will match the original distribution,
and we do not expect the Indistinguishability Conjecture to hold.
However, the Distributional Generalization framework does not require interpolation,
and we could still expect that the train and test distributions are close ($\Dtr \approx^{\cT} \Dte$)
for some family of tests $\cT$.
For example, the following is a possible generalization of Feature Calibration (Conjecture~\ref{conj:approx}).
\begin{conjecture}[Generalized Feature Calibration, informal]
\label{conj:approx-gg}
For trained classifiers $f$,
the following distributions are statistically close for
many partitions $L$ of the domain:
\begin{align}
\underset{
x_i \sim \mathrm{TrainSet}
}{
(L(x_i), f(x_i))
}
\quad\approx\quad
\underset{
x \sim \mathrm{TestSet}
}{
(L(x), f(x))
}
\end{align}

\end{conjecture}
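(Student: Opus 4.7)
The plan is to recast the statement as a generalization bound for the derived family of functions $g_f(x) := T(L(x), f(x))$, indexed over tests $T : [M] \times \cY \to [0,1]$ and trained classifiers $f \gets \Train_\cF(S)$. Fixing an arbitrary test $T$, the conjecture reduces to showing that $\bigl|\frac{1}{n}\sum_i g_f(x_i) - \E_{x \sim \cD} g_f(x)\bigr|$ is small in expectation over the training process. The appropriate first step is to adapt Definition~\ref{def:dist-feature-L} to the non-interpolating regime, requiring only that $L$ be learnable by $\cF$ from $n$ samples up to some accuracy $\eps$, even though $f$ itself may have positive train error on the original task $y$. Once such an $L$ is fixed, the claim becomes a sample-versus-population gap for $g_f$ drawn from a data-dependent hypothesis class.

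My preferred route is a coupling argument via algorithmic stability. Let $S$ be the train set, $S^{(i)}$ the train set with the $i$-th example replaced by a fresh $(x', y') \sim \cD$, and $f_S, f_{S^{(i)}}$ the corresponding classifiers. Symmetry gives $\E[g_{f_{S^{(i)}}}(x_i)] = \E[g_{f_S}(x')]$, which is exactly the test-side expectation, while the train-side expectation is $\E[g_{f_S}(x_i)]$. The difference is controlled by the expected perturbation $|g_{f_S}(x_i) - g_{f_{S^{(i)}}}(x_i)|$, which is bounded by the probability that $f_S$ and $f_{S^{(i)}}$ disagree at $x_i$. For regularized kernels, early-stopped SGD, or suitably smoothed classifiers, this is exactly the algorithmic-stability parameter, and classical stability bounds yield an $O(1/\sqrt{n})$ gap. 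A complementary route is uniform convergence over $\cG := \{x \mapsto T(L(x), f'(x)) : f' \in \cF\}$, whose Rademacher complexity is dominated by that of $\cF$ restricted to hard-decision outputs; this variant is cleaner but only tight when $\cF$ is not too expressive relative to $n$.

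The main obstacle is the regime where both tools struggle simultaneously: mildly-regularized deep networks, for which stability parameters can be large (memorization of individual examples is still possible) and Rademacher complexity is uninformative. For such cases I suspect the correct quantitative statement must exploit a feature-specific argument analogous to the nearest-neighbor mechanism of Theorem~\ref{thm:Ltest}, showing that although $f$ may be globally unstable, the composite $L \mapsto f$ is stable at the coarseness of $L$. Concretely, if $\cF$ learns $L$ to accuracy $\eps$ from $n$ samples, then within each $L$-cell the train and test outputs of $f$ should be indistinguishable, because any distinguisher between them would itself amount to a test in $\cF$ separating train from test on a refinement of $L$ --- contradicting the assumed learnability. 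Making this reduction precise, and matching the empirically-observed $\eps$ that scales with the learnability of $L$ rather than with any classical complexity measure of $\cF$, is where I expect the real work to lie.
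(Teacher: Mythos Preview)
The paper does not prove this statement. Conjecture~\ref{conj:approx-gg} is explicitly presented as an informal, unproven conjecture: the set of allowable partitions $L$ is deliberately left unspecified, and the only support offered is experimental (Figures~\ref{fig:kernelgg} and~\ref{fig:cifargg}, showing that train and test confusion matrices track each other under regularization and early stopping). The surrounding text states that the authors ``do not yet understand the appropriate notion of `distinguishable feature'\,'' in the non-interpolating regime and that ``formalizing and investigating this further is an interesting area for future study.'' So there is no proof in the paper to compare your proposal against; you are attacking an open problem.

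On the merits: your stability and Rademacher routes are sound in the regimes where they apply, and you correctly diagnose that both fail exactly where the conjecture is most interesting---mildly regularized or early-stopped deep networks. This matches the paper's own stance (Related Work) that uniform convergence and stability are uninformative for such methods. Your third route, arguing that $x \mapsto T(L(x), f(x))$ is stable at the coarseness of $L$ even when $f$ is not, is the right instinct but the sketch has a gap. The claimed contradiction---that a distinguisher within an $L$-cell would yield a learnable refinement of $L$---does not follow: learnability of $L$ says nothing about learnability of its refinements, and nothing in the hypotheses rules out the existence of a finer learnable partition. The paper's only rigorous mechanism (Theorem~\ref{thm:Ltest}) works by an explicit coupling specific to $1$-NN and interpolation; extending that coupling, or supplying a genuinely new one, is precisely the missing ingredient the paper flags as open.
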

We leave unspecified the exact set of partitions $L$ for which this holds---
unlike Conjecture~\ref{conj:approx},
where we specified $L$ as the set of all distinguishable features. 
In this generalized case, we do not yet understand the appropriate notion of ``distinguishable feature''
\footnote{For example, when considering early-stopped neural networks, it is unclear if the partition $L$
should be distinguishable with respect to the early-stopped network or its fully-trained counterpart.}.
However, we give experimental evidence that suggests some refinement of Conjecture~\ref{conj:approx-gg} is true.

In Figure~\ref{fig:kernelgg} we 
train Gaussian kernel regression on MNIST, with label noise determined by a random sparse confusion matrix
on the train set (analogous to the setting of Figure~\ref{fig:wrn_random};
experimental details in Appendix~\ref{app:experiment}).
We vary the amount of $\ell_2$ regularization,
and plot the confusion matrix of predictions on the train and test sets.
With $\lambda=0$ regularization, the kernel interpolates the noise in the train set exactly,
and reproduces this noise on the test set as expected.
With higher regularization, the kernel no longer interpolates the train set,
but the test and train confusion matrices remain close.
That is, regularization prevents the kernel from fitting the noise on both the train and test sets
in a similar way.
Remarkably, higher regularization yields a classifier closer to Bayes-optimal.
Figure~\ref{fig:cifargg} shows an analogous experiment for neural networks on CIFAR-10,
with early-stopping in place of regularization:
early in training, neural networks do not fit their train set,
but their test and train confusion matrices remain close throughout training.
These experiments suggest that Distributional Generalization is a meaningful
notion even for non-interpolating classifiers.
Formalizing and investigating this further is an 
interesting area for future study.

\section{Conclusion and Discussion}
In this work, we presented a new set of empirical behaviors of standard interpolating classifiers.
We unified these under the framework of Distributional Generalization,
which states that outputs of trained classifiers
on the test set are ``close'' in distribution to their outputs on the train set.
For interpolating classifiers, we stated several formal conjectures
(Conjectures~\ref{conj:approx} and \ref{claim:agree})
to characterize the form of distributional closeness that can be expected.

{\bf Beyond Test Error.}
Our work proposes studying the \emph{entire distribution} of classifier outputs on test samples,
beyond just its test error.
We show that this distribution is often highly structured, and we take steps
towards characterizing it.
Surprisingly, modern interpolating classifiers appear to satisfy
certain forms of distributional generalization ``automatically,''
despite being trained to simply minimize train error.
This even holds in cases when satisfying distributional generalization
is in conflict with satisfying classical generalization--- that is, when
a distributionally-generalizing classifier must necessarily have high test error
(e.g. Experiment 1).
We thus hope that studying distributional generalization will be useful
to better understand modern classifiers, and to understand generalization more broadly.

{\bf Classical Generalization.}
Our framework of Distributional Generalization
can be insightful even to study classical generalization.
That is, even if we ultimately want to understand test error,
it may be easier to do so through distributional generalization. 
This is especially relevant for understanding
the success of interpolating methods,
which pose challenges to classical theories of generalization.
Our work shows new empirical behaviors of interpolating classifiers,
as well as conjectures characterizing these behaviors. This sheds new light on these poorly understood methods,
and could pave the way to better understanding their generalization.

{\bf Interpolating vs. Non-interpolating Methods.}
Our work also suggests that interpolating classifiers should be viewed
as conceptually different objects from non-interpolating ones, even if both have the same test error.
In particular, an interpolating classifier will match certain aspects of the original distribution,
which a non-interpolating classifier will not.
This also suggests, informally, that interpolating methods
should not be seen as methods which simply ``memorize''
their training data in a naive way (as in a look up table) -- rather this ``memorization'' strongly influences the
classifier's decision boundary (as in 1-Nearest-Neighbors).

\subsection{Open Questions}
Our work raises a number of open questions and connections to other areas.
We briefly collect some of them here.
\begin{enumerate}
    \item As described in the Limitations (Section~\ref{sec:limits}),
we do not precisely understand the set of distributions and interpolating classifiers for which our conjectures hold.
We empirically tested a number of ``realistic'' settings, but it is open to state formal assumptions defining these settings.

\item It is open to theoretically prove versions of Distributional Generalization for models beyond 1-Nearest-Neighbors.
This is most interesting in cases where Distributional Generalization is at odds with classical generalization (e.g. Figure~\ref{fig:target-varyp}c).

\item It is open to understand the mechanisms behind the Agreement Property (Section~\ref{sec:agree}),
theoretically or empirically.

\item In some of our experiments (e.g. Section~\ref{sec:ptwise}), ensembling over independent random-initializations
had a similar effect to ensembling over independent train sets.
This is related to works on deep ensembles~\citep{lakshminarayanan2017simple,fort2019deep} as well
as random forests for conditional density estimation
\citep{meinshausen2006quantile,pospisil2018rfcde, athey2019generalized}.
Investigating this further is an interesting area of future work.

\item There are a number of works suggesting ``local'' behavior of neural networks,
and these are somewhat consistent with our locality intuitions in this work.
However, it is open to formally understand whether these intuitions are justified in our setting.

\item We give two families of tests $\cT$ for which our Interpolating Indistinguishability
conjecture (Equation~\ref{eqn:metaconj}) empirically holds. This may not be exhaustive -- there
may be other ways in which the source distribution $\cD$ and test distribution $\Dte$ are close.
Indeed, we give preliminary experiments for another family of tests, based on student-teacher training, in Appendix~\ref{sec:student}.
It is open to explore more ways in which Distributional Generalization holds, beyond the tests presented here.
\end{enumerate}

\subsubsection*{Acknowledgements}
We especially thank Jacob Steinhardt and Boaz Barak for useful discussions during this work.
We thank Vaishaal Shankar for providing the Myrtle10 kernel,
the ImageNet classifiers, and advice regarding UCI tasks.
We thank Guy Gur-Ari for noting the connection to existing work on networks picking up
fine-structural aspects of distributions.
We also thank a number of people for reviewing early drafts
or providing valuable comments, including:
Collin Burns,
Mihaela Curmei,
Benjamin L. Edelman,
Sara Fridovich-Keil,
Boriana Gjura,
Wenshuo Guo,
Thibaut Horel,
Meena Jagadeesan,
Dimitris Kalimeris,
Gal Kaplun,
Song Mei,
Aditi Raghunathan,
Ludwig Schmidt,
Ilya Sutskever,
Yaodong Yu,
Kelly W. Zhang,
Ruiqi Zhong.

Work supported in part by the Simons Investigator Awards of Boaz Barak and Madhu Sudan, and NSF Awards under grants CCF 1565264, CCF 1715187 and IIS 1409097.
Computational resources supported in part by a gift from Oracle,
and Microsoft Azure credits (via Harvard Data Science Initiative).
P.N. supported in part by a Google PhD Fellowship. Y.B is partially supported by MIT-IBM Watson AI Lab.

{\bf Technologies.}
This work was built on the following technologies:
NumPy \citep{oliphant2006guide,van2011numpy,Harris2020},
SciPy \citep{2020SciPy-NMeth},
scikit-learn \citep{scikit-learn},
PyTorch \citep{pytorch},
W\&B \citep{wandb},
Matplotlib \citep{Hunter:2007},
pandas \citep{reback2020pandas, mckinney-proc-scipy-2010},
SLURM \citep{yoo2003slurm},
Figma.
Neural networks trained on NVIDIA V100 and 2080 Ti GPUs.

\newpage
\bibliographystyle{plainnat}
\bibliography{refs}

\newpage
\appendix

\section{Author Contributions}
\label{sec:contrib}
PN designed the initial neural network experiments which initiated this study. PN and YB brainstormed the formalization of the experimental observations and PN devised the final version of the definitions, conjectures, and their framing as a version of generalization. YB designed the experiments to stress test the Feature Calibration Conjecture under various settings and conducted the final experiments that appear in the Feature Calibration section. PN discovered and investigated the Agreement Property and did the Student-Teacher section. PN did the kernel and decision tree experiments, literature review, and nearest-neighbor proofs. Both authors wrote the paper.

\newpage
\section{Student-Teacher Indistinguishability}
\label{sec:student}

Here we show another instance of the Indistinguishability Conjecture,
by giving another way in which the distributions $\Dte$ and $\cD$ are close -- specifically,
we claim they are roughly indistinguishable with respect to \emph{training}.
That is, training a student-network on samples from $(x, y) \sim \cD$
yields a similar model as training on pseudo-labeled samples $(x, f(x)) \sim \Dte$,
as long as the student is ``weaker'' than the teacher $f$.

We specifically consider a setup where a teacher network is trained on $n$ samples,
and a student network is trained on $k \ll n$ samples.
In this setting, we claim that teacher-labeled samples are ``as good as'' real samples to the student -- in that the student achieves similar test accuracy whether trained on real or pseudo-labeled samples.
(In practice we find that $k \leq n/2$ is sufficient, though this limit does not appear to be fundamental.)

To see why this may be surprising, consider a
setting where we use a teacher that is only
only say 80\% accurate.
Now, we may expect that training a student on the pseudo-labeled
distribution will always be worse than training on the true labels.
After all, the pseudo-labels are only 80\% correct.
However, we find that when the student is trained on
less than half the number samples than the teacher was trained on, the student
does just as well as if it were trained on true labels.

\paragraph{Experiments.}
We use a ResNet18 for all student and teacher models.
In Figure~\ref{fig:student},
we use a fixed teacher network trained on $n \in \{5000, 10000\}$ samples from CIFAR-10.
For each $k$, we compare the test error of the following two networks:
\begin{enumerate}
    \item A student ResNet18 on trained on $k$ pseudo-labeled samples
    (call this model $G_{n, k}$: a student trained on $k$ pseudo-samples from a teacher trained on $n$ samples).
    \item A ResNet18 trained on $k$ true samples.
\end{enumerate}
When $k \leq n/2$, 
the test error of the student is close to that of a network trained on true samples.
When $k \gg n/2$ however, the student can distinguish whether it is being trained on
real or pseudo-labeled samples.

Figure~\ref{fig:st_grid} shows test errors of $G_{n, k}$ for all 
$n, k \in \{1000, 2000, 5000, 10000, 15000, 25000\}$.
The test error of $G_{n,k}$ appears to depend only on $\min(n, k)$ --
intuitively, the test error is bottlenecked by the minimum power of student and teacher.

\begin{figure}[h]
    \centering
    \begin{subfigure}[t]{0.4\textwidth}
        \includegraphics[width=\textwidth]{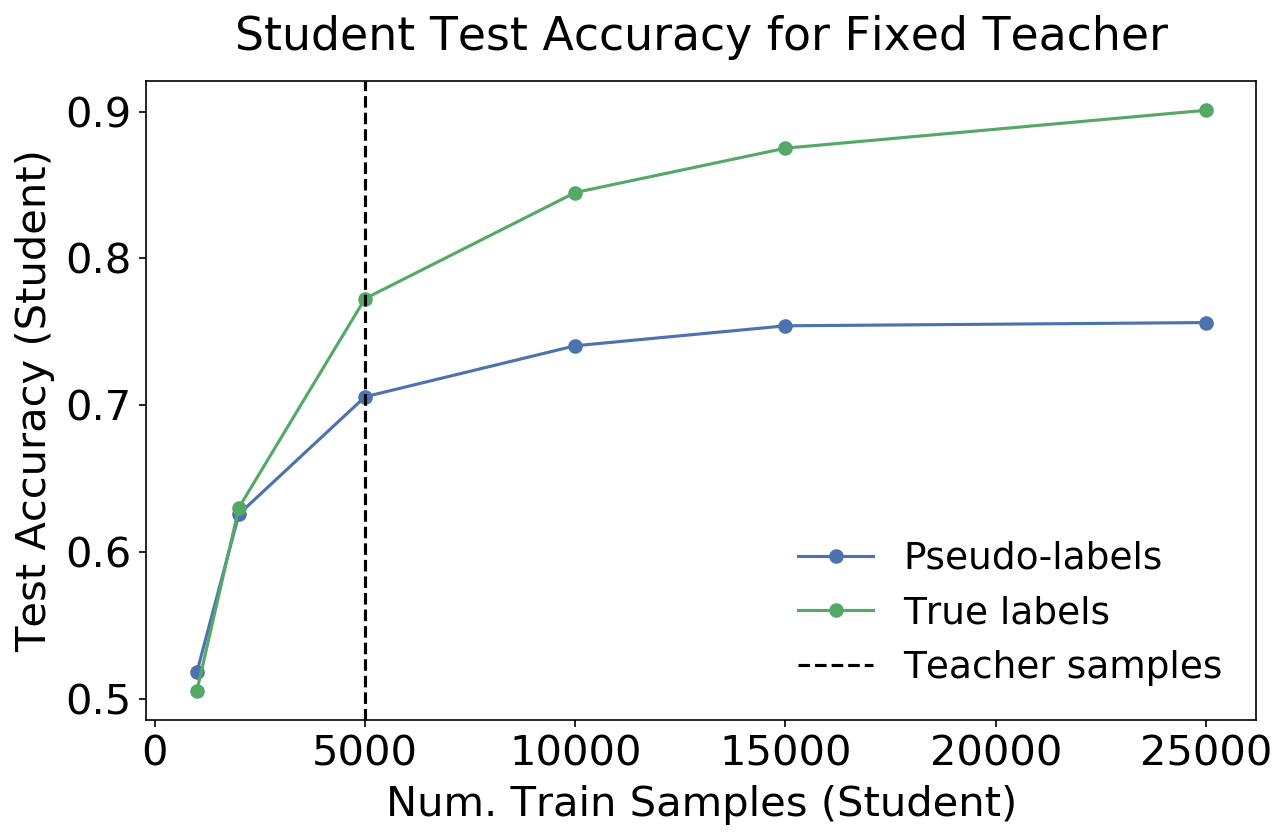}
        \caption{Teacher trained on 5000 samples.}
        \label{fig:fixedt1}
    \end{subfigure}
    \hfill
    \begin{subfigure}[t]{0.4\textwidth}
        \includegraphics[width=\textwidth]{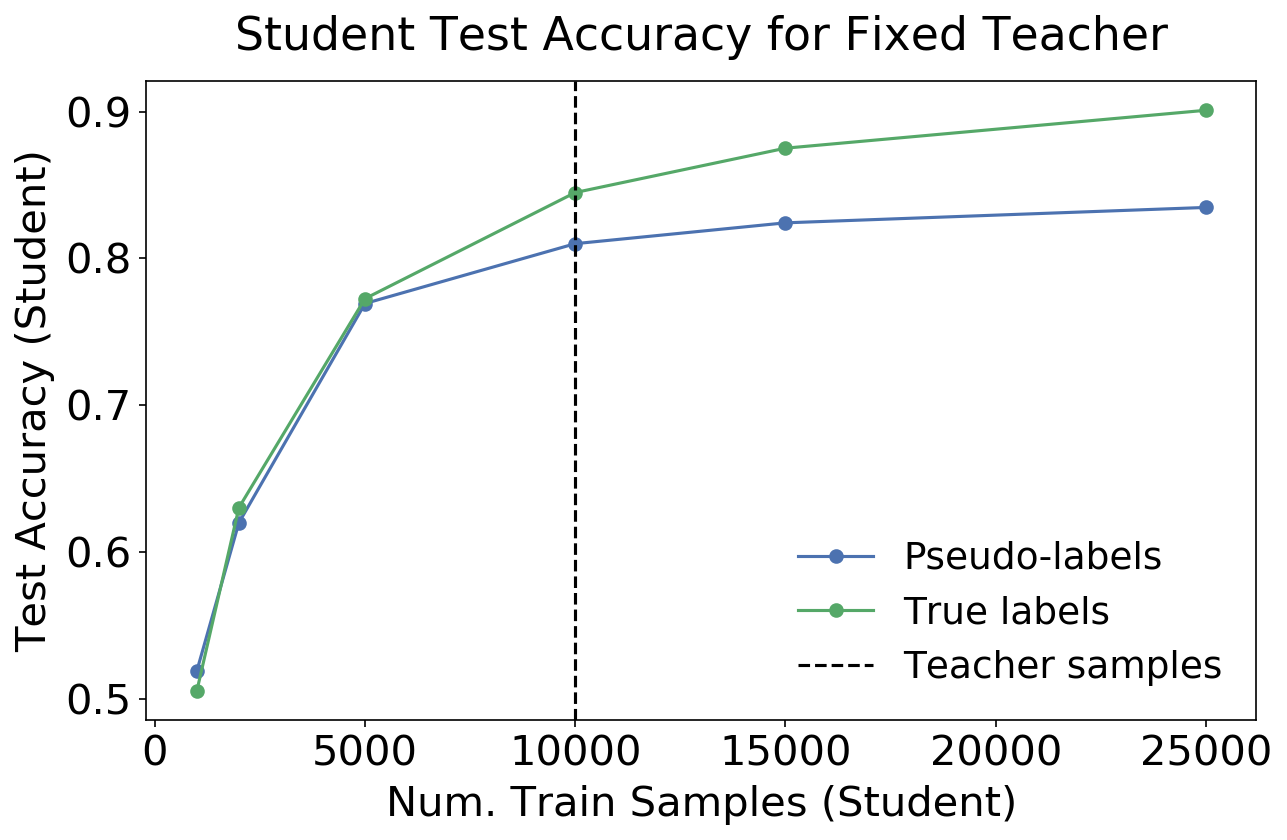}
        \caption{Teacher trained on 10000 samples.}
        \label{fig:todo}
    \end{subfigure}
    \caption{{\bf Pseudo-labeling.} Accuracy of student when trained on true labels vs. pseudo-labels.} 
    \label{fig:student}
\end{figure}

\paragraph{Discussion}
Previous sections considered
tests which were 
asked to distinguish the distributions
$\cD$ and $\Dte$
based on a single sample from either distribution.
Here, we consider a more powerful test, which is given
access to $k$ iid samples from either $\cD$
or $\Dte$.\footnote{
This is not fundamentally different from a single sample test,
via a hybrid argument.
}
This student-teacher indistinguishability is also essentially equivalent to
the following claim: We cannot learn a ResNet-distinguisher between distributions
$\cD$ and $\Dte$, given $k \leq n/2$ samples from each distribution.

\begin{figure}[ht]
    \centering
    \includegraphics[width=0.4\textwidth]{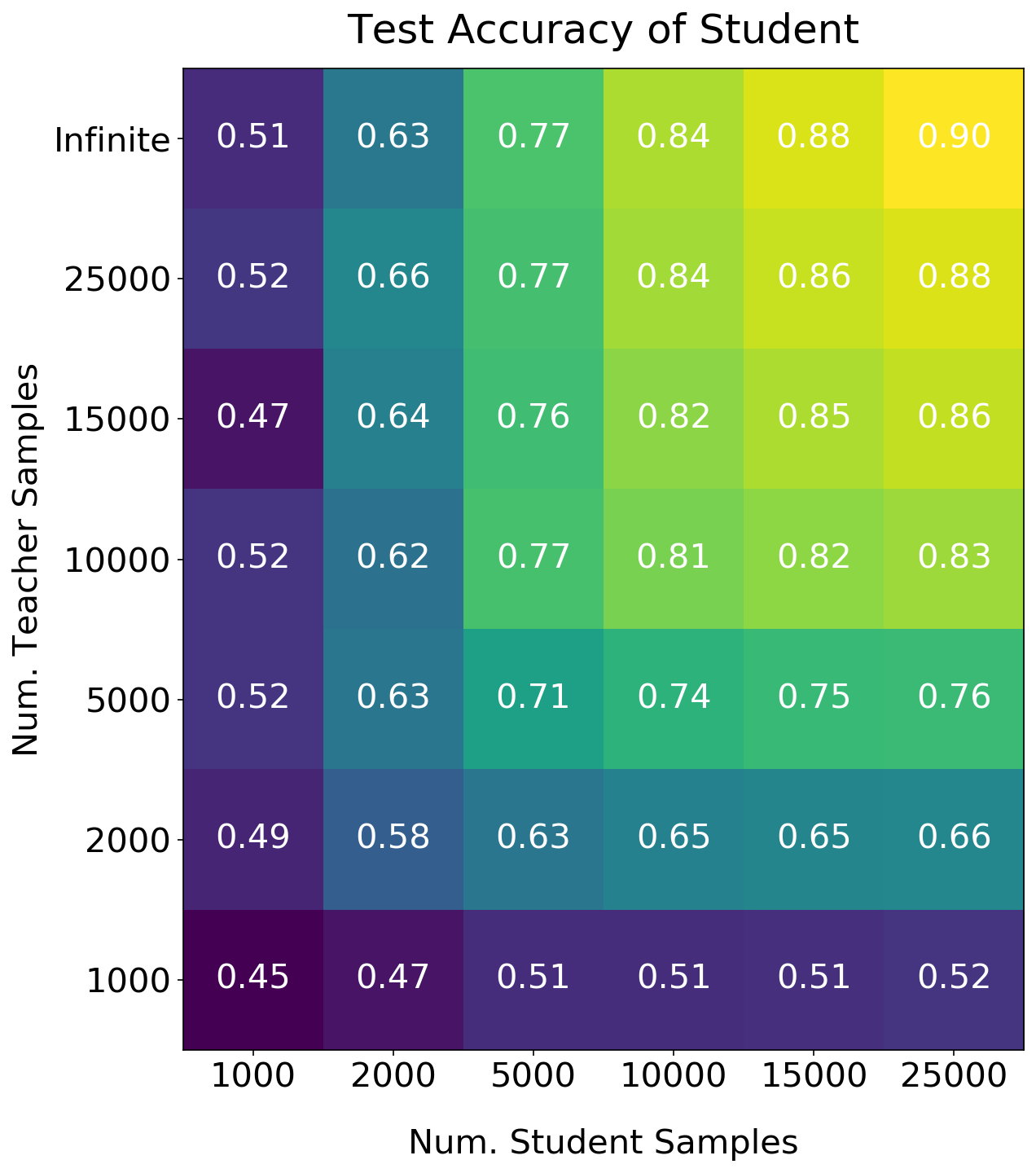}
    \caption{Test error of a student trained on $k$ pseudo-labeled samples (x-axis)
    from a teacher trained on $n$ samples (y-axis). The top row ($n=\infty$) shows
    a student trained on true samples from the distribution.}
    \label{fig:st_grid}
\end{figure}

\newpage
\section{Experimental Details}
\label{app:experiment}

Here we describe general background, and experimental details common to all sections.
Then we provide section-specific details below.

\subsection{Datasets}
We consider the image datasets
CIFAR-10 and CIFAR-100~\citep{krizhevsky2009learning},
MNIST~\citep{lecun1998gradient},
Fashion-MNIST~\citep{xiao2017fashion},
CelebA~\citep{liu2015faceattributes},
and
ImageNet~\citep{ILSVRC15}.

We also consider tabular datasets from the UCI repository~\cite{uci}.
For UCI data, we consider the 121 classification tasks
as standardized in~\citet{fernandez2014we}.
Some of these tasks have very few examples, so we restrict to the 92 classification
tasks from~\citet{fernandez2014we} which have at least $200$ total examples.

\subsection{Models}
\label{app:models}
We consider neural-networks, kernel methods, and decision trees.

\subsubsection{Decision Trees}
We train interpolating decision trees using
a growth rule from Random Forests~\citep{breiman2001random, ho1995random}:
selecting a split based on a random $\sqrt{d}$ 
subset of $d$ features, splitting based on Gini impurity,
and growing trees until all leafs have a single sample.
This is as implemented by Scikit-learn~\cite{scikit-learn}
defaults with
\verb!RandomForestClassifier(n_estimators=1, bootstrap=False)!.

\subsubsection{Kernels}
Throughout this work we consider classification via kernel regression and kernel SVM.
For $M$-class classification via kernel regression, we follow the methodology in e.g. \citet{rahimi2008random,belkin2018understand, shankar2020neural}.
We solve the following convex problem for training:
$$\alpha^* := \argmin_{\alpha \in \R^{N \x M}} ||K\alpha - y||_2^2 + \lambda \alpha^T K \alpha$$
where $K_{ij} = k(x_i, x_j)$ is the kernel matrix of the training points for a kernel function $k$,
$y \in \R^{N \x M}$ is the one-hot encoding of the train labels,
and $\lambda \geq 0$ is the regularization parameter.
The solution can be written
$$\alpha^* = (K + \lambda I)^{-1} y$$
which we solve numerically using SciPy
\verb!linalg.solve! ~\citep{2020SciPy-NMeth}.
We use the explicit form of all kernels involved.
That is, we do not use random-feature approximations~\citep{rahimi2008random}, though we expect they would behave similarly.

The kernel predictions on test points are then given by
\begin{align}
g_\alpha(x) &:= \sum_{i \in [N]} \alpha_i k(x_i, x)\\
f_\alpha(x) &:= \argmax_{j \in [M]} g_\alpha(x)_j
\end{align}
where $g(x) \in \R^{M}$ are the kernel regressor outputs, and $g(x) \in [M]$ is the thresholded classification decision.
This is equivalent to training $M$ separate binary regressors (one for each label), and taking the argmax for classification.
We usually consider \emph{unregularized} regression ($\lambda = 0$), except in Section~\ref{sec:gg}.

For kernel SVM, we use the implementation provided by Scikit-learn~\citep{scikit-learn}
\verb!sklearn.svm.SVC! with a precomputed kernel,
for inverse-regularization parameter $C \geq 0$
(larger $C$ corresponds to smaller regularization).

{\bf Types of Kernels.}
We use the following kernel functions $k: \R^d \x \R^d \to \R_{\geq 0}$.
\begin{itemize}
    \item Gaussian Kernel (RBF): $k(x_i, x_j) = \exp(-\frac{||x_i - x_j||_2^2}{2\widetilde{\sigma}^2})$.
    \item Laplace Kernel: $k(x_i, x_j) = \exp(-\frac{||x_i - x_j||_2}{\widetilde{\sigma}})$.
    \item Myrtle10 Kernel: This is the compositional kernel introduced by \citet{shankar2020neural}.
    We use their exact kernel for CIFAR-10.
\end{itemize}
For the Gaussian and Laplace kernels, we parameterize bandwidth by $\sigma := \widetilde{\sigma}/\sqrt{d}$.
We use the following bandwidths, found by cross-validation to maximize the unregularized test accuracy:
\begin{itemize}
    \item MNIST: $\sigma=0.15$ for RBF kernel.
    \item Fashion-MNIST: $\sigma=0.1$ for RBF kernel. $\sigma = 1.0$ for Laplace kernel.
    \item CIFAR-10: Myrtle10 Kernel from \citet{shankar2020neural}, and $\sigma=0.1$ for RBF kernel.
\end{itemize}

\subsubsection{Neural Networks}
We use 4 different neural networks in our experiments. We use a multi-layer perceptron, and three different Residual networks. 

{\bf MLP:} We use a Multi-layer perceptron or a fully connected network with 3 hidden layers with $512$ neurons in each layer. A hidden layer is followed by a BatchNormalization layer and ReLU activation function.

{\bf WideResNet:} We use the standard WideResNet-28-10 described in \citet{zagoruyko2016wide}. Our code is based on \href{https://github.com/hysts/pytorch_image_classification/blob/master/pytorch_image_classification/models/cifar/wrn.py}{this repository}.

{\bf ResNet50:} We use a standard ResNet-50 from the PyTorch library \citep{paszke2017automatic}.

{\bf ResNet18:} We use a modification of ResNet18 \cite{he2016deep} adapted to CIFAR-10 image sizes. Our code is based on \href{https://github.com/kuangliu/pytorch-cifar/blob/master/models/resnet.py}{this repository}.

For Experiment \ref{exp:intro1} and \ref{exp:intro2} and Section \ref{sec:dist-features}, the hyperparameters used to train the above networks are given in Table~\ref{table:hyperparams}.

\begin{table}
\centering
\begin{tabular}{ |c|c|c|c|c| } 
 \hline
  & {\bf MLP} & {\bf ResNet18} & {\bf WideResNet-28-10}  & {\bf ResNet50} \\ 
 \hline
 {\bf Batchsize} & 128 & 128 & 128 & 32 \\ 
 \hline
 {\bf Epochs} & 820 & 200 & 200 & 50 \\ 
 \hline
 {\bf Optimizer} & \makecell{Adam \\ ($\beta_1 = 0.9, \beta_2 = 0.999$)} & \makecell{SGD + \\ Momentum (0.9)} & \makecell{SGD + \\ Momentum (0.9)} & SGD \\ 
 \hline
 \makecell{{\bf Learning rate} \\ (LR) schedule} & \makecell{Constant LR $=0.001$} & \makecell{Inital LR$=0.05$ \\ scale by 0.1 at \\ epochs $(80, 120)$} & \makecell{Inital LR$=0.1$ \\ scale by 0.2 at \\ epochs $(80, 120, 160)$} & 
 \makecell{Initial LR $=0.001$, \\ scale by $0.1$ \\ if training loss stagnant \\ for $2000$ gradient steps}\\ 
 \hline
 {\bf \makecell{Data \\ Augmentation}} & \multicolumn{4}{c|}{Random flips + RandomCrop(32, padding=4)}\\ 
 \hline
 {\bf \makecell{CIFAR-10 Error}} & $\sim 40\%$ & $\sim 8\%$ & $\sim 4\%$ & N/A \\ 
 \hline
\end{tabular}
\centering
 \caption{Hyperparameters used to train the neural networks and their errors on the unmodified CIFAR-10 dataset}
 \label{table:hyperparams}
\end{table}

\newpage
\section{Feature Calibration: Appendix}
\label{app:density}

\subsection{A guide to reading the plots}
All the experiments in support of Conjecture \ref{conj:approx} (experiments in Section \ref{sec:dist-features} and the Introduction) involve various quantities which we enumaerate here %

\begin{enumerate}
    \item Inputs $x$: Each experiment involves inputs from a standard dataset like CIFAR-10 or MNIST. We use the standard train/test splits for every dataset.
    \item Distinguishable feature $L(x)$: This feature depends only on input $x$. We consider various features like the original classes itself, a superset of classes (as in coarse partition) or some secondary attributes (like the binary attributes provided with CelebA)
    \item Output labels $y$: The output label may be some modification of the original labels. For instance, by adding some type of label noise, or a constructed binary task as in Experiment \ref{exp:intro1}
    \item Classifier family $F$: We consider various types of classifiers like neural networks trained with gradient based methods, kernel and decision trees.
\end{enumerate}

In each experiment, we are interested in two joint densities $(y, L(x))$, which depends on our dataset and task and is common across train and test, and $(f(x), L(x))$ which depends on the interpolating classifiers outputs on the \emph{test} set. Since  $y, L(x)$ and $f(x)$ are discrete, we will look at their discrete joint distributions. We sometimes refer to $(y, L(x))$ as the train joint density, as at interpolation $(y, L(x)) = (f(x), L(x))$ for all training inputs $x$. We also refer to $(f(x), L(x))$ as the test density, as we measure this only on the test set.

\subsection{Experiment \ref{exp:intro1}}
\label{app:intro-exp-1}

{\bf Experimental details:} We now provide further details for Experiment \ref{exp:intro1}. We first construct a dataset from CIFAR-10 that obeys the joint density $(y, L(x))$ shown in Figure \ref{fig:intro} left panel. We then train a WideResNet-28-10 (WRN-28-10) on this modified dataset to zero training error. The network is trained with the hyperparameters described in Table \ref{table:hyperparams}. We then observe the joint density $(f(x), L(x))$ on the test images and find that the two joint densities are close as shown in Figure \ref{fig:intro}.

We now consider a modification of this experiment as follows:

\begin{experiment}
\label{exp:intro2}
Consider the following distribution over images $x$ and binary labels $y$.
Sample $x$ as a uniformly random CIFAR-10 image,
and sample the label as $p(y | x) = \textrm{Bernoulli}(\texttt{CIFAR\_Class(x)}/10)$.
That is, if the CIFAR-10 class of $x$ is $k \in \{0, 1, \dots 9\}$,
then the label is $1$ with probability $(k/10)$ and $0$ otherwise. 
Figure~\ref{fig:bnry-inc-wrn} shows this joint distribution of $(x, y)$.
As before, train a WideResNet to 0 training error on this distribution.
\end{experiment}

In this experiment too, we observe that the train and test joint densities are close as shown in Figure \ref{fig:bnry-inc-wrn}.

\begin{figure}
    \centering
    \includegraphics[width=\linewidth]{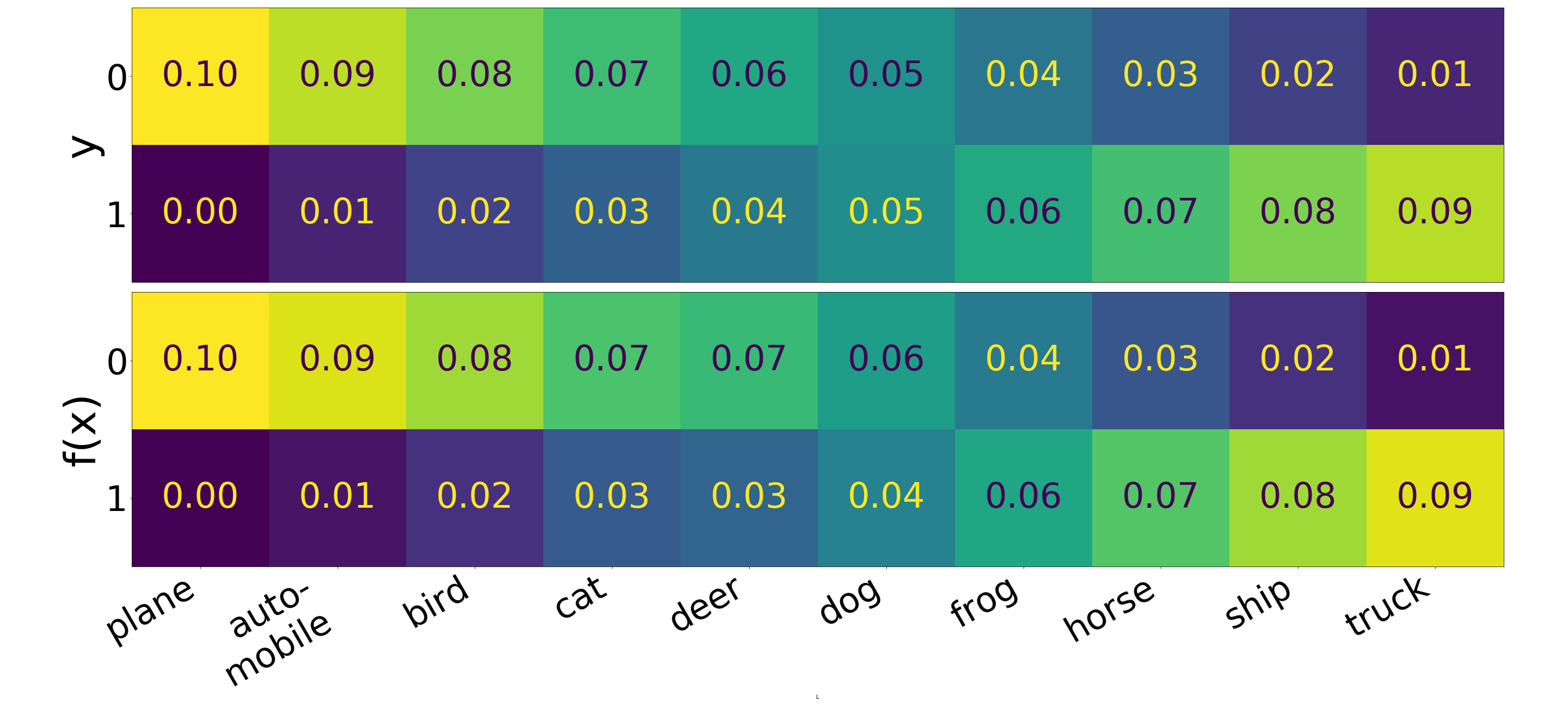}
    \caption{{\bf Distributional Generalization in Experiment~\ref{exp:intro2}.}~~
    Joint densities of the distributions involved in Experiment \ref{exp:intro2}.
    The top panel shows the joint density of labels on the train set: $(\texttt{CIFAR\_Class(x)}, y)$.
    The bottom panels shows the joint density of classifier predictions on the test set: $(\texttt{CIFAR\_Class(x)}, f(x))$.
    Distributional Generalization claims that these two joint densities are close.
    }
    \label{fig:bnry-inc-wrn}
\end{figure}

Now, we repeat the same experiment, but with an MLP instead of WRN-28-10. The training procedure is described in Table~\ref{table:hyperparams}. This MLP has an error on $~ 37\%$ on the original CIFAR-10 dataset.

\begin{figure}[th]
    \includegraphics[width=\textwidth]{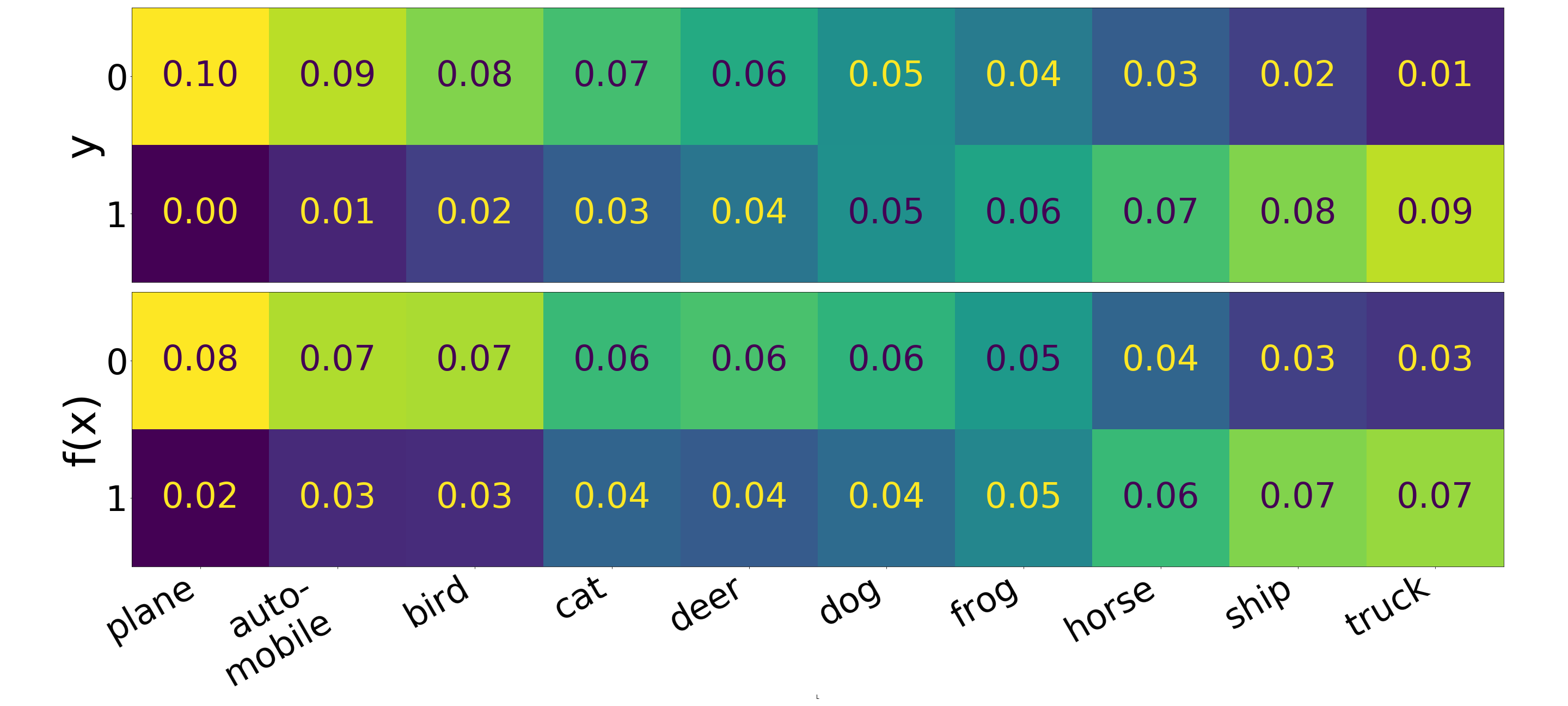}
    \centering
    \caption{Joint density of $(y, \textrm{Class}(x))$, top,
    and $(f(x), \textrm{Class}(x))$, bottom, for test samples $(x, y)$
    from Experiment~\ref{exp:intro2} for an MLP.
    }
    \label{fig-app:bnry-inc-mlp}
\end{figure}

Since this MLP has poor accuracy on the original CIFAR-10 classification task, it does not form a distinguishable partition for it. As a result, the train and test joint densities (Figure \ref{fig-app:bnry-inc-mlp})  do not match as well as they did for WRN-28-10.

\subsection{Constant Partition}
We now describe the experiment for a constant partition $L(x)=0$. For this experiment, we first construct a dataset based on CIFAR-10 that has class-imbalance. For class $k \in \{0...9\}$, sample $(k+1)\times 500$ images from that class. This will give us a dataset where classes will have marginal distribution $p(y = \ell) \propto \ell+1$ for classes $\ell \in [10]$, as shown in Figure \ref{fig:constant-L}. We do this both for the training set and the test set, to keep the distribution $\cD$ fixed.

Now, we train the MLP, ResNet-18 and RBF Kernel on this dataset. We plot the resulting $p(f(x))$ for each of these models below. That is, we plot the fraction of test images for which the network outputs $\{0, 1... 9\}$ respectively. As predicted, the networks closely track the train set.

\subsection{Class Partition}

\subsubsection{Neural Networks and CIFAR-10}

We now provide experiments in support of Conjecture \ref{conj:approx} when the class itself is a distinguishable partition. We know that WRN-28-10 achieves an error of $~4\%$ on this dataset. Hence, the original labels in CIFAR-10 form a distinguishable partition for this dataset. To demonstrate that Conjecture \ref{conj:approx} holds, we consider different structured label noise on the CIFAR-10 dataset. To do so, we apply a variety of confusion matrices to the data. That is, for a confusion matrix $C: 10 \times 10$ matrix, the element $c_{ij}$ gives the joint density that a randomly sampled image had original label $j$, but is flipped to class $i$. For no noise, this would be an identity matrix. 

We begin by a simple confusion matrix where we flip only one class $0 \rightarrow 1$ with varying probability $p$. Figure \ref{fig:target-varyp} shows one such confusion matrix for $p=0.4$. We then train a WideResNet-28-10 to zero train error on this dataset. We use the hyperparameters described in \ref{app:models} We find that the classifier outputs on the test set closely track the confusion matrix that was applied to the distribution. Figure \ref{fig:target-varyp} shows that this is independent of the value of $p$ and continues to hold for $p \in [0, 1]$.

To show that this is not dependent on the particular class used, we also show that the same holds for a random confusion matrix. We generate a sparse confusion matrix as follows. We set the diagonal to $0.5$. Then, for every class $j$, we pick any two random classes for and set them to $0.2$ and $0.3$. We train a WRN-28-10 on it and report the test confusion matrix. The resulting train and test densities are shown in Figure \ref{fig:wrn_random} and also below for clarity.

\subsubsection{Decision Trees}

Similar results hold for decision trees; here we show experiments
on two UCI tasks: \verb!wine! and \verb!mushroom!.

The \verb!wine! task is a 3-way classification problem:
to identify the cultivar of a given wine (out of 3 cultivars), given 13 physical attributes describing the wine.
Figure~\ref{fig-app:dtwine} shows an analogous experiment with label noise taking class $1$
to class $2$.

The \verb!mushroom! task is a 2-way classification problem:
to classify the type of edibility of a mushroom (edible vs poisonous)
given 22 physical attributes (e.g. stalk color, odor, etc).
Figure~\ref{fig-app:dtmushroom} shows an analogous experiment
with label noise flipping class $0$ to class $1$.

\begin{figure}[th]
    \includegraphics[width=\linewidth]{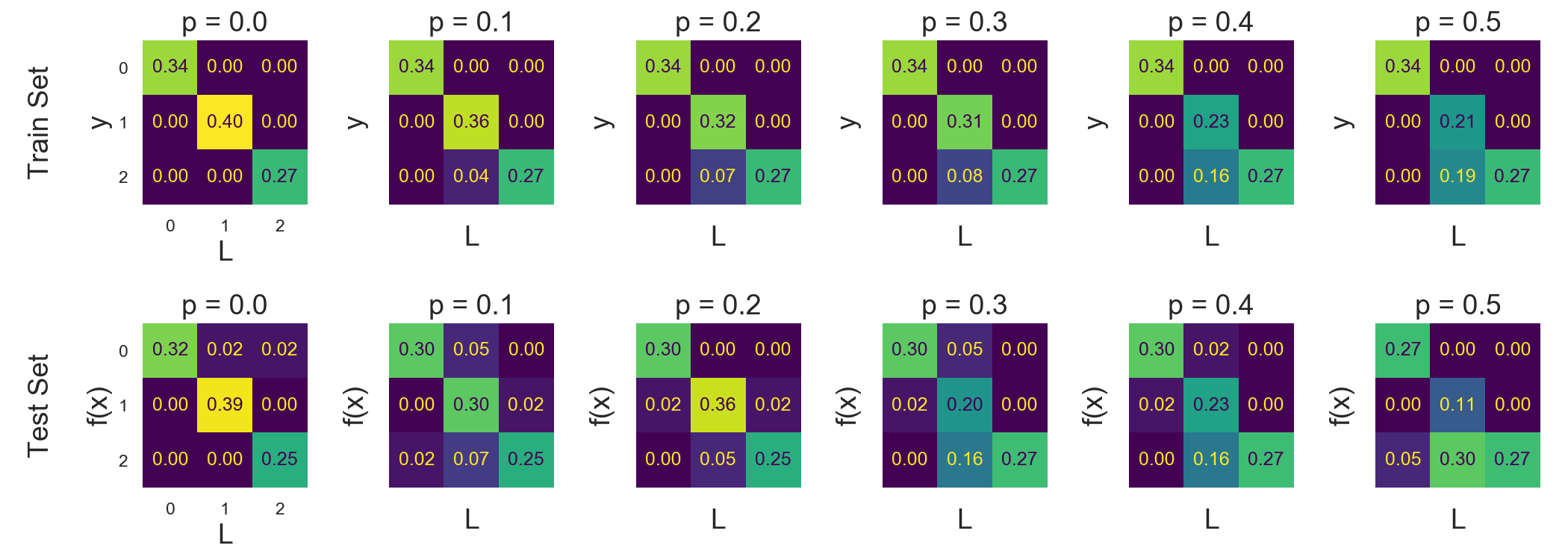}
    \centering
    \caption{Decision trees on UCI (wine).
    We add label noise that takes class $1$ to class $2$ with probability $p \in [0, 0.5]$.
    Each column shows the test and train confusion matrices for a given $p$.
    Note that this decision trees achieve high accuracy on this task with no label noise (leftmost column).
    We plot the empirical joint density of the train set, and not the population joint density of the train distribution,
    and thus the top row exhibits some statistical error due to small-sample effects.
    }
    \label{fig-app:dtwine}
\end{figure}

\begin{figure}[th]
    \includegraphics[width=\linewidth]{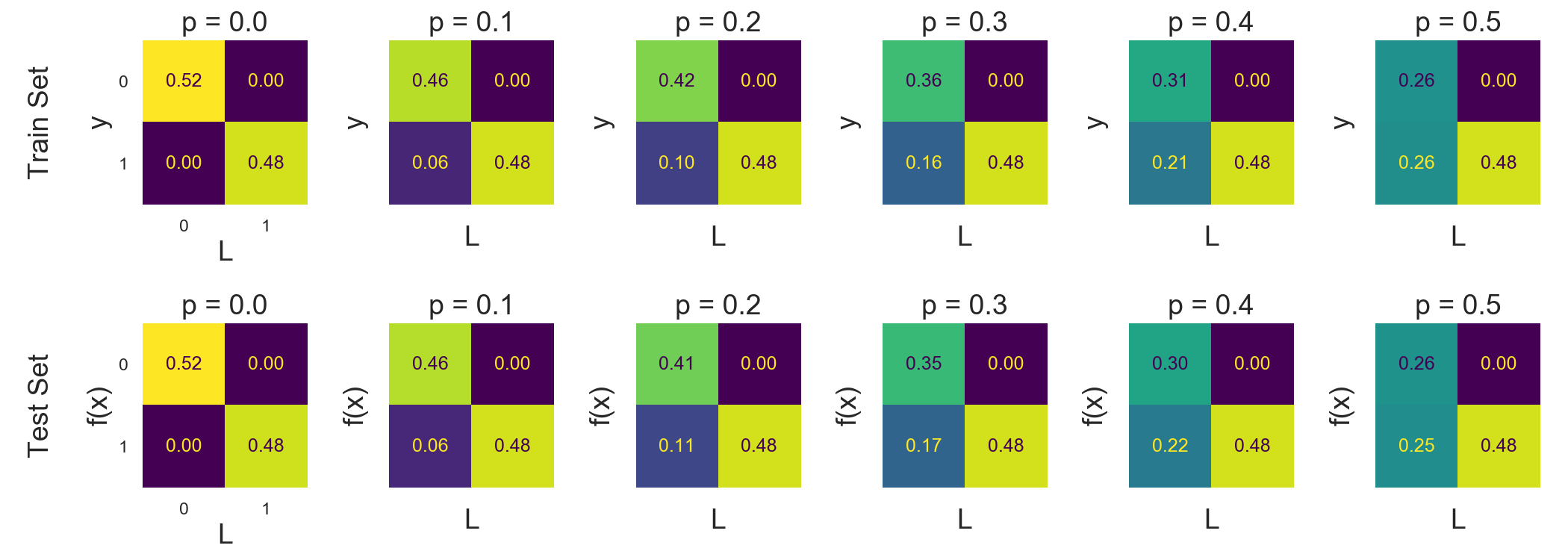}
    \centering
    \caption{Decision trees on UCI (mushroom).
    We add label noise that takes class $1$ to class $2$ with probability $p \in [0, 0.5]$.
    Each column shows the test and train confusion matrices for a given $p$.
    Note that this decision trees achieve high accuracy on this task with no label noise (leftmost column).
    }
    \label{fig-app:dtmushroom}
\end{figure}

\subsection{Multiple Features}
\label{app:celeba}
For the CelebA experiments, we train a ResNet-50 to predict the attribute \{Attractive, Not Attractive\}. We choose this attribute because a ResNet-50 performs poorly on this task (test error $\sim20\%$) and has good class balance. We choose an attribute with poor generalization because the conjecture would hold trivially for if the network generalizes well. We initialize the network with a pretrained ResNet-50 from the PyTorch library \cite{paszke2017automatic} and use the hyperparameters described in Section \ref{app:models} to train on this attribute. We then check the train/test joint density with various other attributes like Male, Wearing Lipstick etc. Note that the network is not given any label information for these additional attributes, but is calibrated with respect to them. That is, the network says $\sim 30\%$ of images that have `Heavy Makeup' will be classified as `Attractive', even if the network makes mistakes on which particular inputs it chooses to do so. Loosely, this can be viewed as the network performing 1-Nearest-Neighbor classification in a metric space that is well separated for each of these distinguishable features.

\subsection{Coarse Partition}
We now consider cases where the original classes do not form a distinguishable partition for the classifier in consideration. That is, the classifier is not powerful enough to obtain low error on the original dataset, but can perform well on a coarser division of the classes. 

To verify this, we consider a division of the CIFAR-10 classes into Objects \{airplane, automobile, ship, truck\} vs Animals \{cat, deer, dog, frog\}. An MLP trained on this problem has low error ($\sim8\%$), but the same network performs poorly on the full dataset ($\sim37\%$ error). Hence, Object vs Animals forms a distinguishable partition with MLPs. In Figure \ref{fig:oa-cifar-mlp}, we show the results of training an MLP on the original CIFAR-10 classes. We see that the network mostly classifies objects as objects and animals as animals, even when it might mislabel a dog for a cat.

We perform a similar experiment for the RBF kernel on Fashion-MNIST, with partition $\text{\{clothing, shoe, bag\}}$,
in Figure~\ref{fig:fashion}.

{\bf ImageNet experiment.}
In Table~\ref{tab:imagenet} we provide results of the terrier experiment in the body,
for various ImageNet classifiers.
We use publicly available pretrained ImageNet models from 
\href{https://github.com/Cadene/pretrained-models.pytorch}{this repository},
and use their evaluations on the ImageNet test set.

\begin{table}
\centering
\begin{tabular}{lrrrrr}
\toprule
Model &  AlexNet &  ResNet18 &  ResNet50 &  BagNet8 &  BagNet32 \\
\midrule
ImageNet Accuracy                            &    0.565 &     0.698 &     0.761 &    0.464 &     0.667 \\
Accuracy on dogs                             &    0.588 &     0.729 &     0.793 &    0.462 &     0.701 \\
Accuracy on terriers                         &    0.572 &     0.704 &     0.775 &    0.421 &     0.659 \\
Accuracy for binary \{dog/not-dog\}            &    0.984 &     0.993 &     0.996 &    0.972 &     0.992 \\
Accuracy on \{terrier/not-terrier\} among dogs &    0.913 &     0.955 &     0.969 &    0.876 &     0.944 \\
\midrule
Fraction of real-terriers among dogs         &    0.224 &     0.224 &     0.224 &    0.224 &     0.224 \\
{\bf Fraction of predicted-terriers among dogs}    &    0.209 &     0.222 &     0.229 &    0.192 &     0.215 \\
\bottomrule
\end{tabular}
\caption{ImageNet classifiers are calibrated with respect to dogs:
All classifiers predict terrier for roughly $\sim22\%$ of all dogs (last row),
though they may mistake which specific dogs are terriers.}
\label{tab:imagenet}
\end{table}

\begin{figure}[t]
    \centering
    \begin{subfigure}[t]{0.4\textwidth}
        \includegraphics[width=\textwidth]{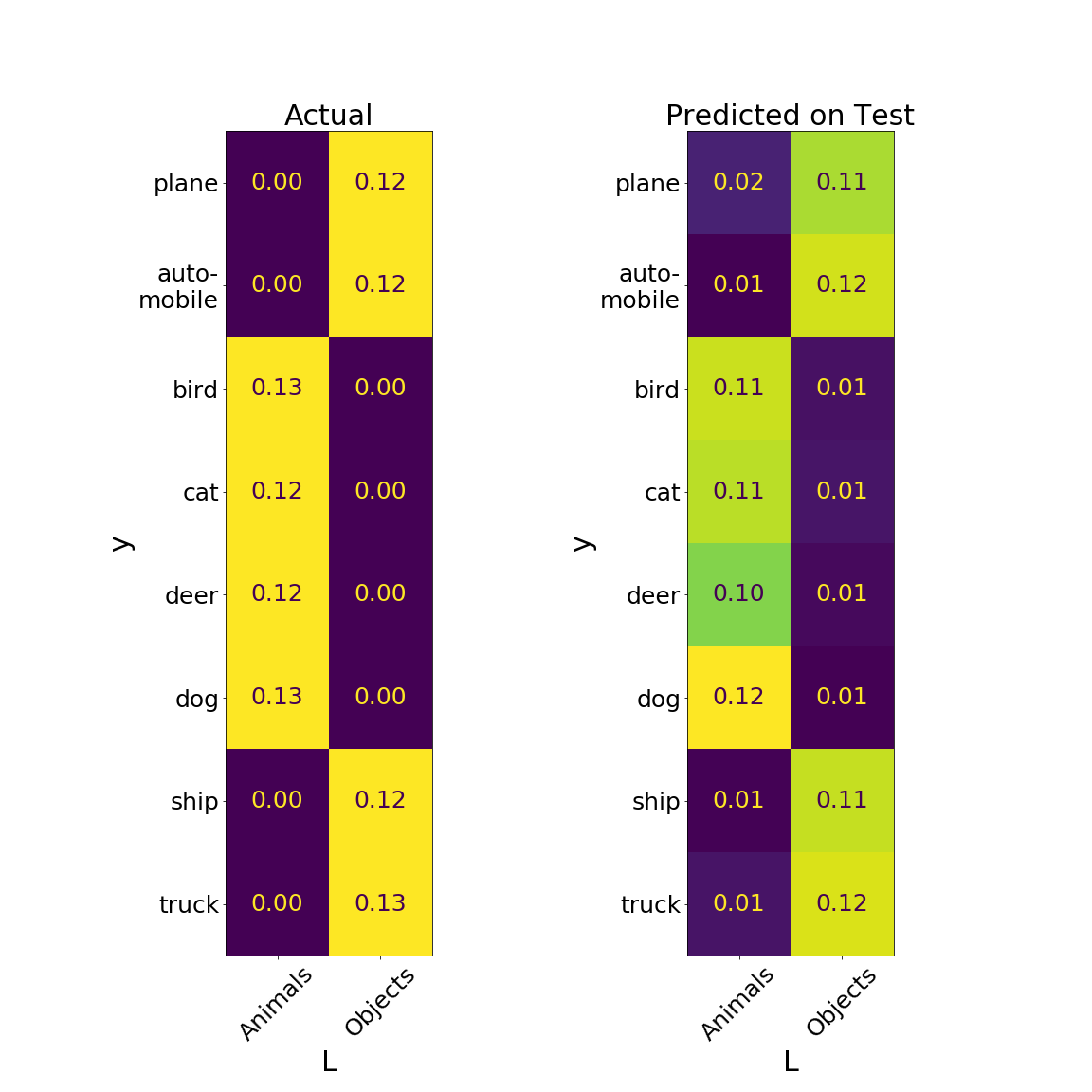}
        \caption{CIFAR10 + MLP}
        \label{fig:oa-cifar-mlp}
    \end{subfigure}
    \begin{subfigure}[t]{0.4\textwidth}
        \includegraphics[width=\textwidth]{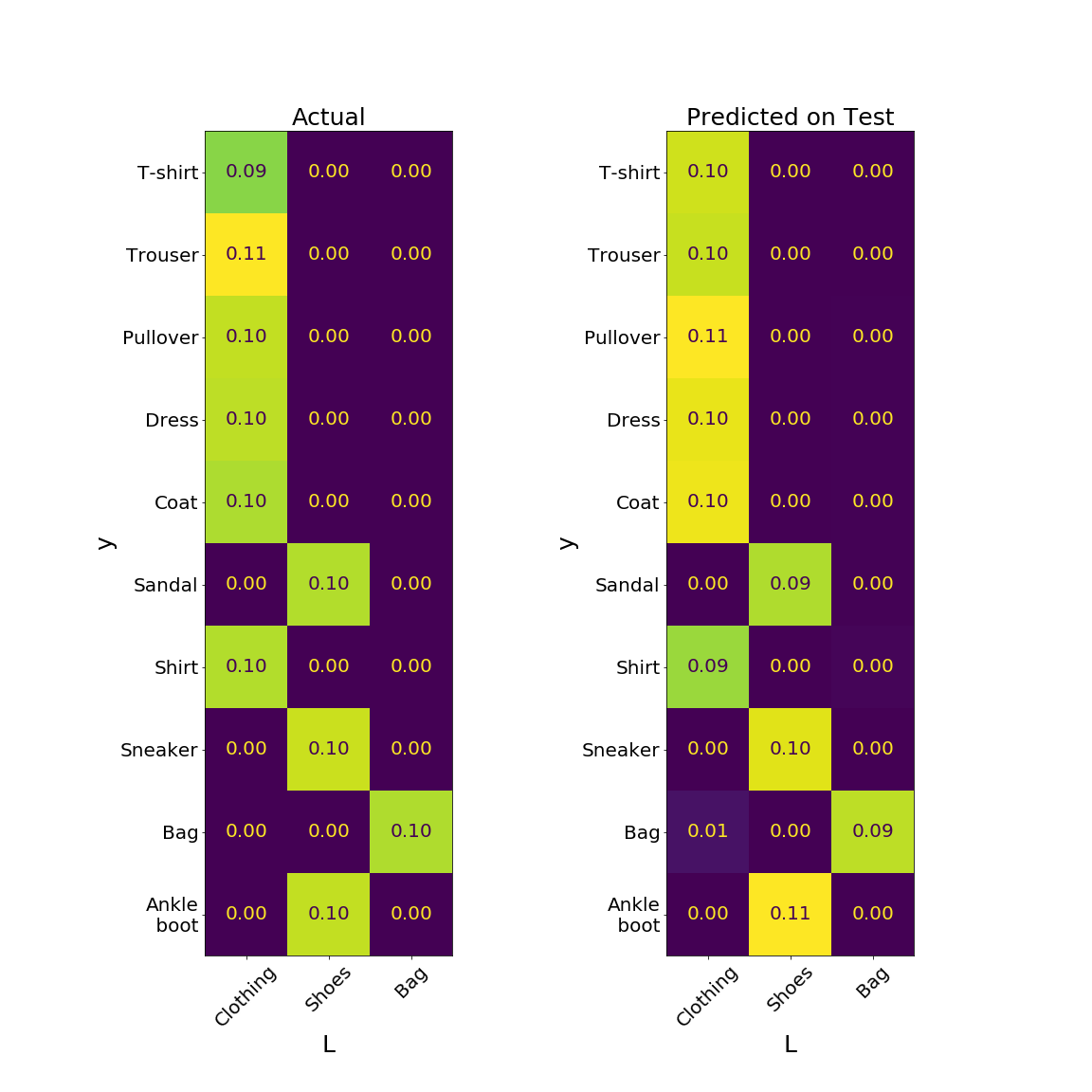}
        \caption{Fashion-MNIST + RBF}
        \label{fig:fashion}
    \end{subfigure}
    \caption{Coarse partitions as distinguishable features: We consider a setting where the original classes are not distinguishable, but the a superset of the classes are. } 
    \label{fig:coarse}
\end{figure}

\newpage
\subsection{Pointwise Density Estimation}
\label{app:ptwise}

Here we describe an informal version of the conditional-density estimation property,
and give preliminary experimental evidence to support it.
We do not yet understand this property deeply enough to state it formally, but
we believe this informal version captures the essential behavior.

\newcommand{\Lf}{L_{\textrm{fine}}}
\begin{conjecture}[Conditional Density Estimation, Informal]
\label{conj:pointwise}
For all distributions $\cD \equiv p(x, y)$,
number of samples $n$,
family of models $\cF$,
and $\eps > 0$,
let $\Lf$ be a ``finest distinguishable partition'' ---
that is, informally an $(\eps, \cF, \cD, n)$-distinguishable partition
that cannot be refined further.
Then:
$$
\text{With high probability over } x \sim \cD:
\quad
\{f(x)\}_{f \gets \Train_{\cF}(\cD^n)} \approx_\eps p(y | \Lf(x))
$$
\end{conjecture}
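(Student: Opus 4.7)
My proof plan starts from Feature Calibration (Conjecture~\ref{conj:approx}) applied to the finest distinguishable partition $\Lf$. Feature Calibration already yields $(\Lf(x), f(x)) \approx_\eps (\Lf(x), y)$, which, by conditioning on each cell $c$ of $\Lf$, says that the \emph{cell-average} distribution of $f(x)$ — averaged over training and over $x \in \Lf^{-1}(c)$ — equals $p(y \mid \Lf(x) = c)$. The conjecture asks for the strictly stronger pointwise statement: for a typical fixed $x$, the distribution of $f(x)$ over training randomness already equals $p(y \mid \Lf(x))$. So my goal is to upgrade an ``average over cell'' claim to a ``pointwise within cell'' claim.

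The natural strategy is by contradiction. Suppose that for some cell $c$ there is a non-negligible subset $A \subset \Lf^{-1}(c)$ on which the distribution of $f(x)$ over training randomness differs substantially from $p(y \mid c)$. I would then form the strict refinement $L'$ obtained by splitting $c$ into $A$ and $\Lf^{-1}(c) \setminus A$ (leaving the other cells unchanged), and aim to show that $L'$ is itself $(\eps', \cF, \cD, n')$-distinguishable for a small $\eps'$ and possibly larger $n'$. Any such refinement contradicts the finest-ness of $\Lf$, forcing $A$ to be negligible and giving the desired pointwise agreement up to the ``with high probability'' slack in the statement.

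The hard step will be showing that $L'$ is distinguishable by a model in $\cF$. The natural ``signature'' separating $A$ from $\Lf^{-1}(c)\setminus A$ is precisely the distribution of $f(x)$ itself, but observing this signature requires an \emph{ensemble} of independently-trained classifiers, which lies outside $\cF$. One route is to state the formal conjecture with a slightly enlarged classifier family that allows a bounded number of independent trainings, or to relate the signature to a student model trained on teacher pseudo-labels, in the spirit of Appendix~\ref{sec:student}. In parallel, the informal phrase ``finest distinguishable partition'' must be made precise — for example, as a partition $\Lf$ such that every strict refinement fails to be $\eps$-distinguishable at the given $(\cF, \cD, n)$, possibly after allowing $\cF$ to be replaced by the ensemble-augmented family above so that the contradiction is valid.

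As a concrete testbed I would first prove the conjecture for $1$-Nearest-Neighbors, where $f(x) = y(\NN_S(x))$ and the distribution of $f(x)$ over random $S$ is just the distribution of the nearest-neighbor label. The argument of Theorem~\ref{thm:Ltest} then extends if its regularity condition is upgraded to a \emph{conditional} version: for each $x$, the nearest train-point to $x$ is approximately uniform over $\Lf^{-1}(\Lf(x))$. Under this strengthened locality hypothesis, the conjecture reduces immediately to the definition of the conditional density, and the resulting proof would both serve as evidence for the general informal statement and supply a template — locality at the level of cells rather than the level of $\cX$ — for guessing the right assumptions when generalizing beyond $1$-NN.
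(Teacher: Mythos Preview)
The paper does not prove this statement. Conjecture~\ref{conj:pointwise} is explicitly labeled informal, and the surrounding text in Appendix~\ref{app:ptwise} says the authors ``do not yet understand this property deeply enough to state it formally''; the only support given is the experimental ensemble in Figure~\ref{fig:ptwise-mnist}. So there is no paper proof to compare your proposal against.

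On the merits of your strategy: the reduction to Feature Calibration plus a refinement-by-contradiction is the natural first move, and you have correctly identified the real obstacle. The gap is not minor. The ``signature'' you would use to separate $A$ from $\Lf^{-1}(c)\setminus A$ is the law of $f(x)$ over training randomness, but distinguishability in Definition~\ref{def:dist-feature-L} requires that a model in $\cF$, trained on $n$ samples \emph{labeled by the partition}, learns that partition. Nothing connects ``$f$ behaves differently on $A$ when trained on the original labels $y$'' to ``some $g\in\cF$ can learn the labeling $\mathbbm{1}_A$ from samples.'' Enlarging $\cF$ to include ensembles does not close the loop cleanly either: if you redefine ``finest'' relative to the ensemble-augmented family, then the hypothesis of the conjecture has changed (finest with respect to a bigger family), while the conclusion still concerns a single $f\in\cF$ --- so the contradiction and the claim no longer refer to the same object. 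This circularity is the core difficulty, and your proposal acknowledges it without resolving it.

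Your $1$-NN testbed is the right sanity check. The conditional regularity you state --- that for each $x$, $\NN_S(x)$ is approximately uniform over $\Lf^{-1}(\Lf(x))$ --- is strictly stronger than the assumption of Theorem~\ref{thm:Ltest} (which only asks that $\NN_S(x)$ be marginally uniform over $\cX$), and under it the pointwise claim is indeed immediate. That is a fine special case to write down, but it is an assumption, not a proof of the conjecture, and the paper does not claim otherwise.
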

Conjecture~\ref{conj:pointwise} is a \emph{pointwise}
version of Conjecture~\ref{conj:approx}:
it says that for most inputs $x$,
we can sample from the conditional density $p(y | \Lf(x))$
by training a fresh classifier $f$ on iid samples from $\cD$,
and outputting $f(x)$.
We think of $\Lf(x)$ as the ``local neighborhood of
$x$'', or the finest class of $x$ as distinguishable by neural networks.
We would ideally like to sample from $p(y | x)$, but the best we can
hope for is to sample from $p(y | \Lf(x))$.

Note that this pointwise conjecture must neccesarily invoke a notion of ``finest
distinguishable partition'', which we do not formally define,
while Conjecture~\ref{conj:approx} applied to all distinguishable partitions.

\subsubsection{Experiment: Train-set Ensemble}
We now give preliminary evidence for this pointwise density estimation.
Consider the following simple distribution:
MNIST with label noise that flips class $0$ to class $1$ with probability 40\%.
To check the pointwise conjecture, we would like to train an ensemble of classifiers $f_i \gets \Train(\cD^n)$ on fresh samples from this distribution.
We do not have sufficient samples to use truly independent samples, so we approximate this as follows:
\begin{enumerate}
    \item Sample 5k random examples from the MNIST-train set.
    \item Add independent label noise (flipping $0$ to $1$ w.p. $40\%$).
    \item Train a Gaussian kernel interpolating classifier on these noisy samples (via the hyperparameters in Appendix~\ref{app:experiment}).
\end{enumerate}
We train 100 such classifiers, and let $\{f_i\}$ be the ensemble.
Then, we claim that if $x_0$ is a digit 0 in the test set,
the empirical distribution  $\{f_i(x_0)\}$ over the ensemble is
roughly 60\% label-0 and 40\% label-1.
More generally, for all test points $x$, the distribution $\{f_i(x)\}$ should be close in
total variation distance to $p(y|x)$.
In Figure~\ref{fig:ptwise-mnist} we plot the histogram of this TV distance for all $x$ in the test set
\[
H(x) := TV( ~~\{f_i(x)\}~~, ~~p(y | x)~~ )
\]
and we see that $H(x)$ is concentrated at $0$, with $\E_x[ H(x) ] \approx 0.036$.

\begin{figure}[h]
    \centering
    \includegraphics[width=0.5\textwidth]{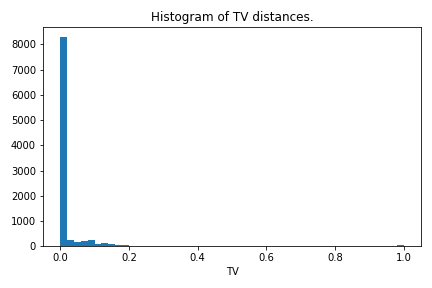}
    \caption{MNIST Ensemble.}
    \label{fig:ptwise-mnist}
\end{figure}

{\bf Discussion.}
Here we obtained a conditional density estimate of $p(y | x)$ by
training an ensemble of kernel classifiers on (approximately-) independent train sets.
In fact, we observed that for some classifiers we can \emph{re-use} the same train set,
and get the same behavior -- that is, training an ensemble with fresh random initialization alone.
In these cases, randomness in the training procedure
is somehow able to substitute for randomness in the sampling procedure.
This cannot hold for deterministic training procedures, such as kernel regression,
but we have observed it for neural-networks and decision trees.
The corresponding statement about decision trees is implicit in works on the conditional-density-estimation properties
of random forests (e.g. \citet{olson2018making}).

\newpage
\section{Agreement Property: Appendix}
\label{app:agree}

\subsection{Experimental Details}
For ResNets on CIFAR-10 and CIFAR-100, we use the following training procedure.
For $n \leq 25000$, we
sample two disjoint train sets $S_1, S_2$ of size $n$ from the 50K total train samples.
Then we train two ResNet18s $f_1, f_2$ on $S_1, S_2$ respectively.
We optimize using SGD on the cross-entropy loss, with batch size $128$,
using learning rate schedule $0.1$ for $40 \floor{\frac{50000}{n}}$ epochs,
then $0.01$ for $20 \floor{\frac{50000}{n}}$
epochs.
That is, we scale up the number of epoches for smaller train sizes, to keep the number of gradient steps constant.
We also early-stop optimization when the train loss reaches $< 0.0001$, to save computational time.
For experiments with data-augmentation, we use 
horizontal flips and \verb!RandomCrop(32, padding=4)!.
We estimate test accuracy and agreement probability on the CIFAR-10/100 test sets.

For the kernel experiments on Fashion-MNIST,
we repeat the same procedure: we sample two disjoint train sets from all the train samples,
train kernel regressors, and evaluate their agreement on the test set.
Each point on the figures correspond to one trial.

For UCI, some UCI tasks have very few examples, and so here we consider only the 92 classification
tasks from~\citet{fernandez2014we} which have at least $200$ total examples.
For each task, we randomly partition all the examples into a 40\%-40\%-20\%
split for 2 disjoint train sets, and 1 test set (20\%).
We then train two interpolating decision trees, and compare their performance on the test set.
Decision trees are trained using
a growth rule from Random Forests~\citep{breiman2001random, ho1995random}:
selecting a split based on a random $\sqrt{d}$ subset of $d$ features, splitting based on Gini impurity,
and growing trees until all leafs have a single sample.
This is as implemented by Scikit-learn~\cite{scikit-learn}
defaults with
\verb!RandomForestClassifier(n_estimators=1, bootstrap=False)!.

In Figure~\ref{fig:uci-aggr} of the body, each point corresponds to one UCI task,
and we plot the means of agreement probability and test accuracy
when averaged over 100 random partitions for each task.
Figure~\ref{fig-app:uci-single} shows the corresponding plot for a single trial.

\subsection{Additional Plots}

Figure~\ref{fig:aggr3}
shows the Laplace Kernel on Fashion-MNIST.

\begin{figure}[H]
    \includegraphics[width=0.4\textwidth]{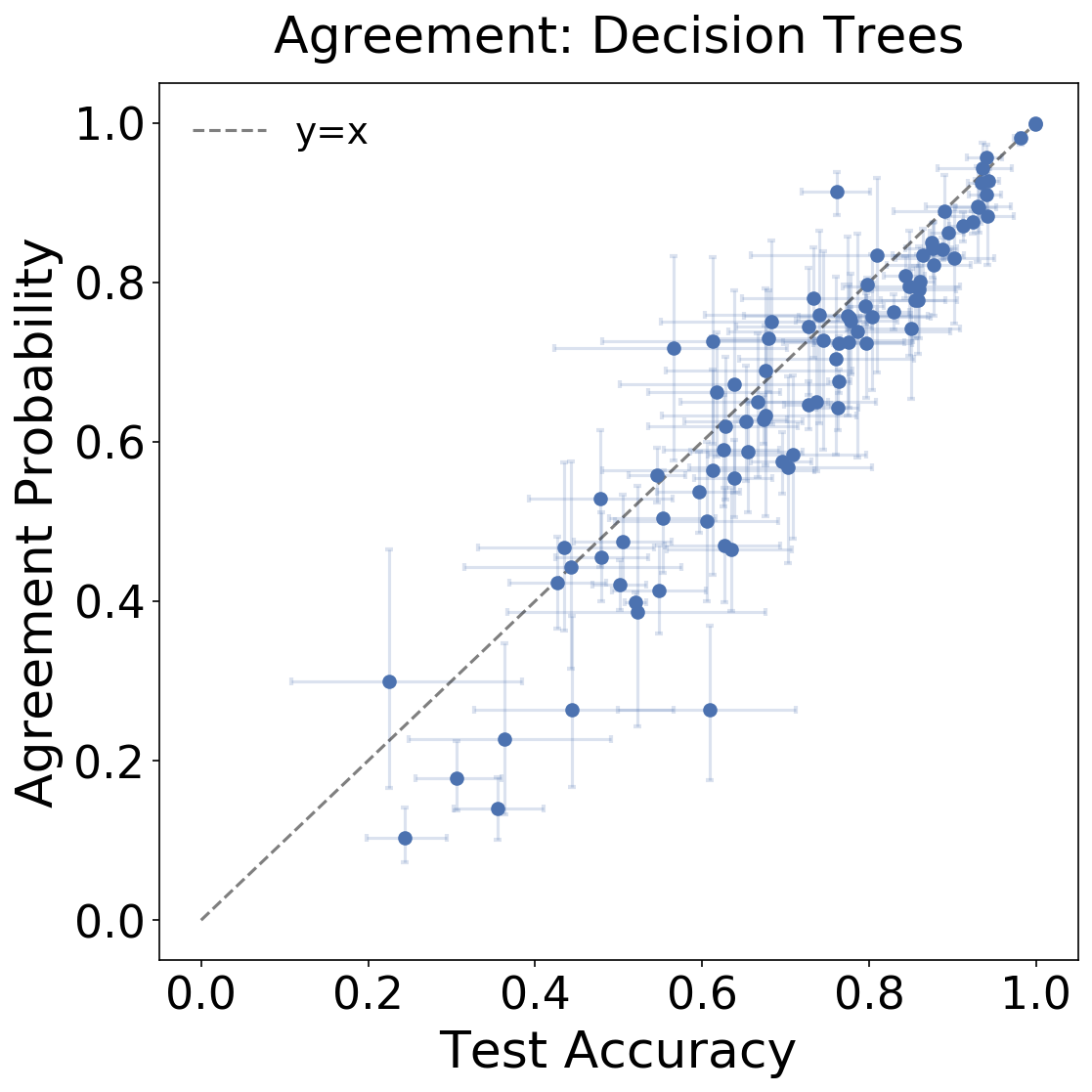}
    \centering
    \caption{Agreement Probability for a single trial of UCI classification tasks.
    Analogous to Figure~\ref{fig:uci-aggr} in the body, for a single trial.}
    \label{fig-app:uci-single}
\end{figure}

\begin{figure}[ht]
\centering
    \includegraphics[width=0.4\textwidth]{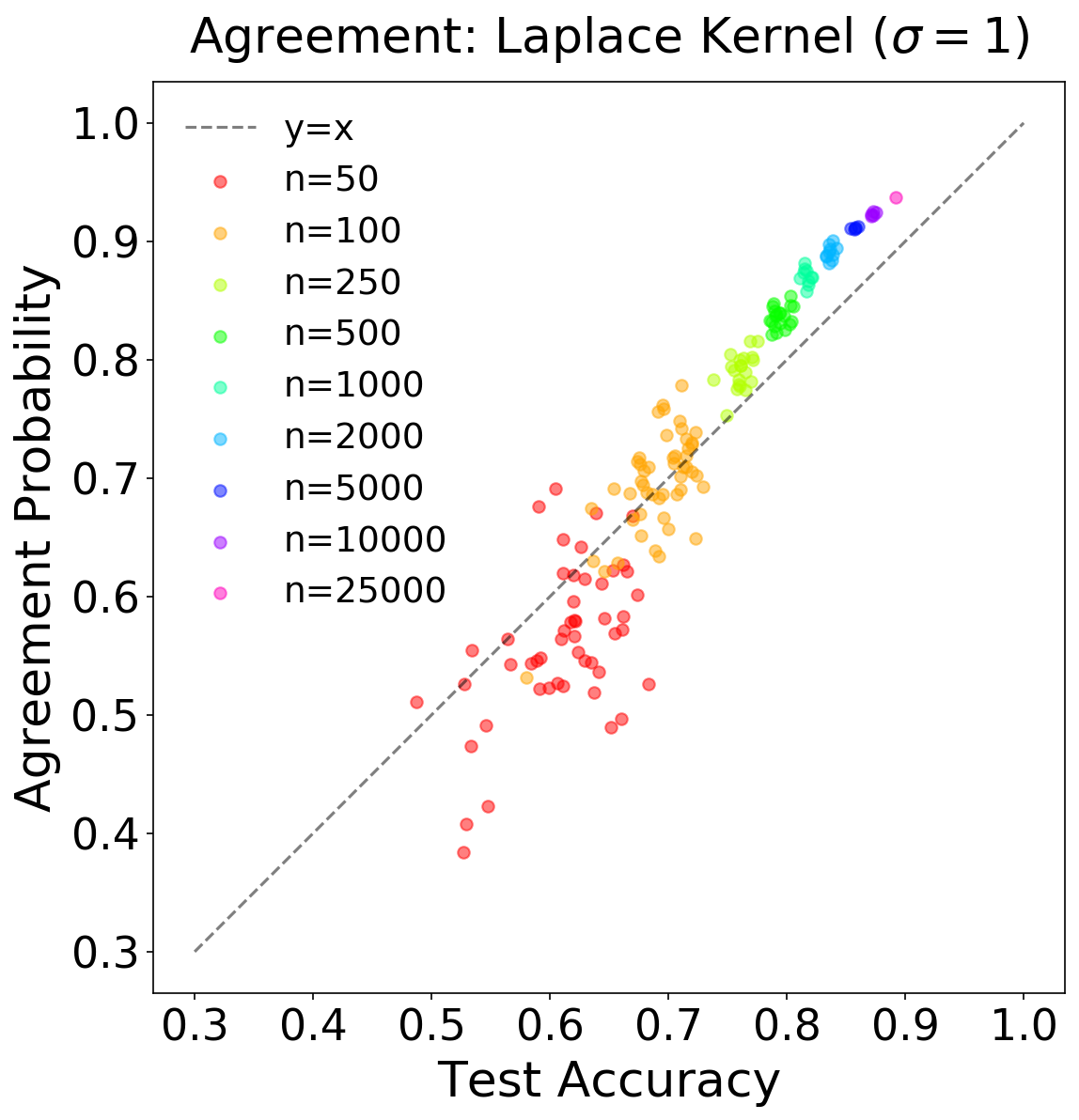}
    \caption{Laplace Kernel on Fashion-MNIST}
    \label{fig:aggr3}
\end{figure}

\subsection{Alternate Mechanisms}
\label{app:alt-mech}
We would like to understand the distribution of $f \gets \Train(\cD^n)$,
in order to evaluate the proposed mechanisms in Sections~\ref{sec:bimodal} and \ref{sec:ptwiseagree}.
Technically, sampling from this distribution
requires training a classifier on a \emph{fresh} train set.
Since we do not have infinite samples for CIFAR-10, we construct empirical
estimates by training an ensemble of classifiers on random subsets of CIFAR-10.
Then, to approximate a sample $f \gets \Train(\cD^n)$,
we simply sample from our ensemble $f \gets \{f_i\}_i$.

\subsubsection{Bimodal Samples}
Figure~\ref{fig:ptwise-corr}
shows a histogram of
$$h(x) := \Pr_{f \gets \Train(\cD^n)}[f(x) = y]$$
for test samples $x$ in CIFAR-10, where $f$ is a ResNet18 trained on 5000 samples\footnote{
We estimate this probability over the empirical ensemble $f \gets \{f_i\}_i$,
where each $f_i$ is a classifier trained on a random $5k$-subset of CIFAR-10. We train 100 classifiers in this ensemble.}.
This quantity can be interpreted as the ``easiness'' of a given test sample $(x, y)$
to a certain classifier family.

If the EASY/HARD bimodal model were true, we would expect the distribution of $h(x)$
to be concentrated on $h(x) = 1$ (easy samples)
and $h(x) = 0.1$ (hard samples).
But this is not the case in Figure~\ref{fig:ptwise-corr}.

\begin{figure}[ht]
\centering
\begin{minipage}{.5\textwidth}
  \centering
    \includegraphics[width=0.9\linewidth]{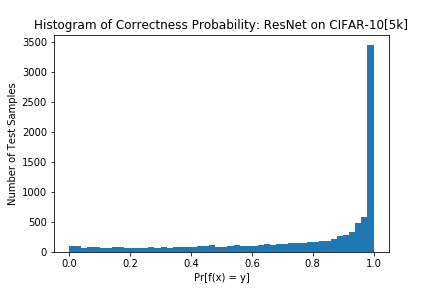}
  \captionof{figure}{Histogram of sample-hardnesses.}
  \label{fig:ptwise-corr}
\end{minipage}%
\begin{minipage}{.5\textwidth}
  \centering
   \includegraphics[width=0.9\linewidth]{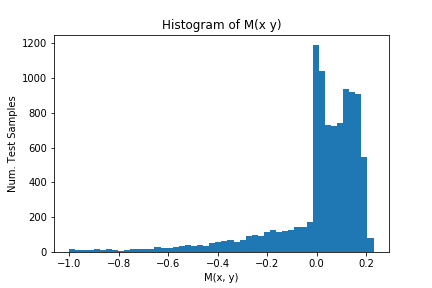}
  \captionof{figure}{Pointwise agreement histogram.}
  \label{fig:M}
\end{minipage}
\end{figure}

\subsubsection{Pointwise Agreement}
We could more generally posit
that Conjecture~\ref{claim:agree} is true
because the Agreement Property holds
\emph{pointwise} for most test samples $x$:
\begin{equation}
\text{w.h.p. for } (x, y) \sim \cD:
\quad
\Pr_{\substack{
f_1 \gets \Train(\cD^n)\\
}}[f_1(x) = y]
\approx
\Pr_{\substack{
f_1 \gets \Train(\cD^n)\\
f_2 \gets \Train(\cD^n)\\
}}[f_1(x) = f_2(x)]
\label{app-eqn:ptwise}
\end{equation}

However, we find (perhaps surprisingly) that this is not the case.
To see why this is surprising, observe that
Conjecture~\ref{claim:agree} implies that
the agreement probability is close to test accuracy,
\emph{in expectation} over the test sample and the classifiers
$f_1, f_2 \gets \Train(\cD^n)$:
\begin{align}
\Pr_{\substack{
f_1\\
(x, y) \sim \cD
}}[f_1(x) = y]
~&\approx~
\Pr_{\substack{
f_1, f_2 \\
(x, y) \sim \cD
}}[f_1(x) = f_2(x)] \tag{Conjecture~\ref{claim:agree}}\\
\iff \E_{f_1}
~
\E_{x, y \sim \cD}
[\1\{f_1(x) = y\}]
&\approx
\E_{ f_1, f_2 }
~
\E_{x, y \sim \cD}
[\1\{f_1(x) = f_2(x)\}]
\end{align}

Swapping the order of expectation,
this implies

\begin{equation}
\E_{x, y \sim \cD}
\underbrace{
\left[
\E_{f_1, f_2}[
\1\{f_1(x) = y\}
-
\1\{f_1(x) = f_2(x)\}
]
\right]
}_{M(x, y)}
\approx 0
\end{equation}
Now, we may expect that this means
$M(x, y) \approx 0$ pointwise, for most
test samples $(x, y)$.
But this is not the case.
It turns out that $M(x, y)$ takes on significantly positive and negative values,
and these effects ``cancel out'' 
in expectation over the distribution,
to yield Conjecture~\ref{claim:agree}.

For example, we compute $M(x, y)$
for the Myrtle10 kernel on CIFAR-10 with 1000 train samples.
\footnote{
We estimate the expectation in $M(x, y)$
by training an ensemble of 5000 pairs of classifiers $(f_1, f_2)$,
each pair on disjoint train samples.
}
\begin{enumerate}
    \item 
The agreement probability is within $0.8\%$
of the test error (as in Figure~\ref{fig:aggr-myrtle}), and so
$$
\E_{x,y \sim \cD}[ M(x, y) ] \approx 0.008
$$
\item However, $M(x, y)$ is not pointwise close to 0. 
E.g,
$$
\E_{x,y \sim \cD}[ \left| M(x, y) \right| ] \approx 0.133
$$
\end{enumerate}

Figure~\ref{fig:M} plots the distribution of $M(x, y)$.
We see that some samples $(x, y)$ have high agreement probability,
and some low,
and these happen to balance in expectation to yield the test accuracy.

\newpage
\section{Non-interpolating Classifiers: Appendix}
\label{app:gg}

Here we give an additional example of distributional generalization: in kernel SVM
(as opposed to kernel regression, in the main text).

\begin{figure}[H]
    \includegraphics[height=18cm]{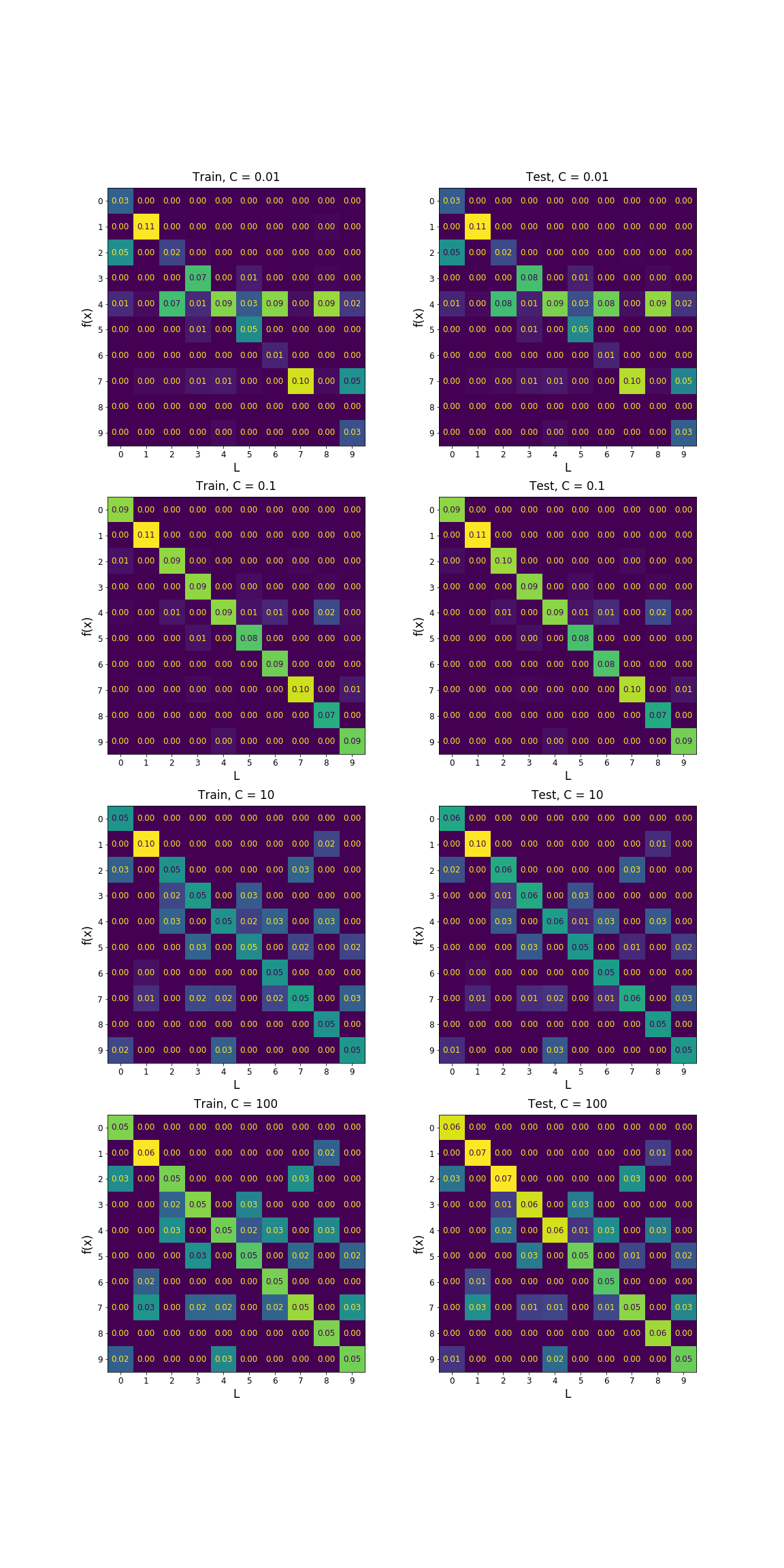}
    \centering
    \caption{{\bf Distributional Generalization.}
    Train (left) and test (right) confusion matrices for
    kernel SVM on MNIST with random sparse label noise.
    Each row corrosponds to one value of inverse-regularization parameter
    $C$. All rows are trained on the same (noisy) train set.}
    \label{fig-app:svmgg}
\end{figure}

\newpage
\section{Nearest-Neighbor Proofs}
\label{sec:proofs}

\subsection{Density Calibration Property}

\begin{proof}[Proof of Theorem~\ref{thm:Ltest}]
Recall that $L$ being an $(\eps, \NN, \cD, n)$-distinguishable partition
means that nearest-neighbors works to classify $L(x)$ from $x$:
\begin{align}
\Pr_{\substack{
\{x_i, y_i\} \sim \cD^n \\
S = \{(x_i, L(x_i)\}\\
x, y \sim \cD
}}[
\NNf_S(x) = L(x)
]
\geq 1-\eps
\end{align}
Now, we have
\begin{align}
\{(\NNf_S(x), L(x))\}_{\substack{
S \sim \cD^n\\
x, y \sim \cD
}}
&\equiv
\{( \hat{y_i}, L(x))\}_{\substack{
S \sim \cD^n\\
\hat{x_i}, \hat{y_i} \gets \NN_S(x)\\
x, y \sim \cD
}}\\
&\approx_\eps
\{( \hat{y_i}, L(\hat{x_i}))\}_{\substack{
S \sim \cD^n\\
\hat{x_i}, \hat{y_i} \gets \NN_S(x)\\
x, y \sim \cD\\
}}
\label{ln:dist}\\
&\approx_\delta
\{( \hat{y_i}, L(\hat{x_i}))\}_{\substack{
\hat{x_i}, \hat{y_i} \sim \cD
}}
\label{ln:reg}
\end{align}
Line~\eqref{ln:dist} is by distinguishability, since
$\Pr[L(x) \neq L(\hat{x_i})] \leq \eps$.
And Line~\eqref{ln:reg} is by the regularity condition.
\end{proof}

\subsection{Agreement Property}
\begin{theorem}[Agreement Property]
\label{thm:aggr}
For a given distribution $\cD$ on $(x, y)$,
and given number of train samples $n \in \N$,
suppose $\NN$ satisfies the following regularity condition:
If we sample two independent train sets $S_1, S_2$,
then the following two ``couplings'' are statistically close:
\begin{align}
\label{eqn:couple}
\{(x_i, \NN_{S_2}(x_i))\}_{\substack{
S_1 \sim \cD^n\\
S_2 \sim \cD^n\\
x_i \in_R S_1
}}
\quad\approx_\delta\quad
\{(\NN_{S_1}(x), \NN_{S_2}(x))\}_{\substack{
S_1 \sim \cD^n\\
S_2 \sim \cD^n\\
x \sim \cD
}}
\end{align}
The LHS is simply a random test point $x_i$, along with its nearest-neighbor in the train set.
The RHS produces an $(x_i, x_j)$ by sampling two independent train sets,
sampling a test point $x \sim D$,
and producing the nearest-neighbor of $x$ in $S_1$ and $S_2$ respectively.

Then:
\begin{align}
\Pr_{\substack{
S \sim \cD^n\\
(x, y) \sim \cD
}}
[\NNf_S(x) = y]
\approx_\delta
\Pr_{\substack{
S_1 \sim \cD^n\\
S_2 \sim \cD^n\\
(x, y) \sim \cD
}}
[\NNf_{S_1}(x) = \NNf_{S_2}(x)]
\end{align}
\end{theorem}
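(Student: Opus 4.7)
\textbf{Proof proposal for Theorem~\ref{thm:aggr}.} The plan is to start from the right-hand side (the agreement probability) and transform it into the left-hand side (the test accuracy) by a single application of the regularity coupling. Writing $\NNf_S(x)$ as the label of the nearest neighbor, the agreement probability is an expectation of the indicator $\1\{y_a = y_b\}$ over the joint distribution of the \emph{labeled} nearest-neighbor pair $((a, y_a), (b, y_b))$, where $a = \NN_{S_1}(x)$ and $b = \NN_{S_2}(x)$. The hypothesis in Equation~\ref{eqn:couple}, however, only couples the $\cX$-marginals of such pairs, so I first need to promote it to include labels.

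The key observation for this promotion is that, on both sides of Equation~\ref{eqn:couple}, conditional on the point-pair $(a, b)$, the corresponding labels $(y_a, y_b)$ are independent draws from $p(\cdot \mid a)$ and $p(\cdot \mid b)$ under the source distribution $\cD$. This is because $S_1, S_2 \sim \cD^n$ are iid and the labels in each train set come from the same conditional $p(y \mid x)$ regardless of which train set the point happens to lie in. Attaching labels is therefore a \emph{common} Markov kernel applied to both sides of Equation~\ref{eqn:couple}, so by the data-processing inequality for total variation, the $\delta$-closeness is preserved when we extend the coupling from $\cX^2$ to $(\cX \times \cY)^2$.

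With the labeled coupling in hand, I can directly swap $(\NN_{S_1}(x), \NN_{S_2}(x))$ for $(x_i, \NN_{S_2}(x_i))$ with $x_i \in_R S_1$ inside the agreement probability, at an additive cost of $\delta$:
\[
\Pr[\NNf_{S_1}(x) = \NNf_{S_2}(x)]
\;\approx_\delta\;
\Pr_{\substack{S_1, S_2 \sim \cD^n \\ (x_i, y_i) \in_R S_1}}\bigl[y_i = \NNf_{S_2}(x_i)\bigr].
\]
To close the argument, I note that a uniformly random labeled point $(x_i, y_i)$ of $S_1$ has marginal distribution exactly $\cD$ on $\cX \times \cY$ (since $S_1 \sim \cD^n$), and $S_2$ is independent of $(x_i, y_i)$ and of $S_1$. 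Integrating out $S_1$, the right-hand side reduces to $\Pr_{S \sim \cD^n,\,(x,y) \sim \cD}[\NNf_S(x) = y]$, which is exactly the test accuracy appearing in the theorem statement.

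The main obstacle in this plan is the label-extension step: it is tempting to gloss over, but it crucially depends on $p(y \mid x)$ being the same for both train sets so that label-sampling really is one common kernel rather than two different ones. After this, the theorem follows from a single invocation of the regularity hypothesis together with the observation that a uniformly random element of an iid sample is itself distributed as $\cD$.
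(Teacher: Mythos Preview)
Your proposal is correct and follows essentially the same approach as the paper. The paper proceeds in the reverse direction (from test accuracy to agreement) and, rather than framing the label extension as a common Markov kernel plus data-processing for TV, it explicitly defines the bounded test $T(x_1,x_2)=\E_{y_1\sim p(\cdot\mid x_1),\,y_2\sim p(\cdot\mid x_2)}[\1\{y_1=y_2\}]$ and applies it to both sides of the $\cX^2$ coupling; these two formulations are equivalent, and your observation that a uniformly random labeled element of $S_1$ is marginally $\cD$ is exactly the step the paper uses implicitly when decomposing the sampling into $x$-marginals first and labels second.
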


\begin{proof}
Let the LHS of Equation~\eqref{eqn:couple} be denoted as distribution
$P$ over $\cX \x \cX$.
And let $Q$ be the RHS of Equation~\eqref{eqn:couple}.
Let $\cD_x$ denote the marginal distribution on $x$ of $\cD$,
and let $p(y | x)$ denote the conditional distribution with respect to $\cD$.

The proof follows by considering the sampling of train set $S$ in the following order:
first, sample all the $x$-marginals: sample test point $x \sim \cD_x$ and train points $S_x \sim \cD_x^n$.
Then compute the nearest-neighbors $\hat{x} \gets \NN_{S_x}(x)$.
And finally, sample the \emph{values} $y$ of all the points involved, according to the densities $p(y|x)$.

\begin{align}
\Pr_{\substack{
S \sim \cD^n\\
(x, y) \sim \cD
}}
[\NNf_S(x) = y]
&=
\E_{\substack{
S \sim \cD^n \\
(x, y) \sim \cD
}}[
\1\{\NNf_S(x) = y\}
]\\
&=
\E_{\substack{
S_x \sim \cD_x^n \\
x \sim \cD_x\\
\hat{x} \gets \NN_{S_x}(x)
}}\underbrace{\left[
\E_{\substack{
y \sim p(y|x) \\
\hat{y} \sim p(y | \hat{x})
}}[
\1\{\hat{y} = y\}
]\right]}_{T(x, \hat{x})}
\\
&=
\E_{\substack{
S_x \sim \cD_x^n \\
x \sim \cD_x\\
\hat{x} \gets \NN_{S_x}(x)
}}\left[
T(x, \hat{x})
\right]\\
&=
\E_{(x_1, x_2) \sim P} T(x_1, x_2)
\tag{$P$: LHS of Equation~\eqref{eqn:couple}}\\
&\approx_\delta
\E_{(x_1, x_2) \sim Q} T(x_1, x_2)
\tag{$Q$: RHS of Equation~\eqref{eqn:couple}}\\
&=
\E_{\substack{
S_1 \sim \cD_x^n \\
S_2 \sim \cD_x^n \\
x \sim \cD_x\\
\hat{x_1} \gets \NN_{S_1}(x)\\
\hat{x_2} \gets \NN_{S_2}(x)
}}\left[
T(\hat{x}_1, \hat{x}_2)
\right]\\
&=
\E_{\substack{
S_1 \sim \cD_x^n \\
S_2 \sim \cD_x^n \\
x \sim \cD_x\\
\hat{x}_1 \gets \NN_{S_1}(x)\\
\hat{x}_2 \gets \NN_{S_2}(x)
}}
\left[
\E_{\substack{
\hat{y_1} \sim p(y |\hat{x}_1) \\
\hat{y_2} \sim p(y | \hat{x}_2)
}}[
\1\{\hat{y_1} = \hat{y_2}\}
]\right]\\
&=
\E_{\substack{
S_1 \sim \cD^n \\
S_2 \sim \cD^n \\
(x, y) \sim \cD
}}[
\1\{\NNf_{S_1}(x) = \NNf_{S_2}(x)\}
]\\
&=
\Pr_{\substack{
S_1 \sim \cD^n\\
S_2 \sim \cD^n\\
(x, y) \sim \cD
}}
[\NNf_{S_1}(x) = \NNf_{S_2}(x)]
\end{align}
as desired.
\end{proof}

\end{document}